\PassOptionsToPackage{table}{xcolor}

\makeatletter
\disable@package@load{newtxmath}{}
\makeatother
\documentclass[sigconf,nonacm]{aamas}

\usepackage{amsmath, amssymb, amsthm}
\usepackage{thmtools}
\usepackage{thm-restate}
\usepackage{xspace}
\usepackage{bm}
\usepackage{graphicx}
\usepackage[caption=false,font=footnotesize]{subfig}

\usepackage{array}
\usepackage{makecell}
\newcolumntype{L}[1]{>{\raggedright\arraybackslash}p{#1}}
\newcolumntype{C}[1]{>{\centering\arraybackslash}p{#1}}

\usepackage[table]{xcolor}
\usepackage{graphicx}
\usepackage{booktabs}
\usepackage{float}
\usepackage[ruled,lined,linesnumbered]{algorithm2e}
\DontPrintSemicolon
\RestyleAlgo{ruled}
\SetAlgoLined
\SetKw{Continue}{continue}
\SetKw{Break}{break}

\makeatletter
\AddToHook{cmd/@doclearpage/after}{\global\@colroom\@colht}
\makeatother
\usepackage{paralist}
\usepackage{adjustbox}
\usepackage{wrapfig}
\usepackage{needspace}

\usepackage{xparse}

\usepackage[most]{tcolorbox}
\tcbuselibrary{theorems}

\newtcbtheorem[number within=section]{boxeddefinition}{Definition}{
  enhanced,
  breakable,
  colback=white,
  colframe=black!60,
  boxrule=0.6pt,
  arc=1.2mm,
  left=6pt,right=6pt,top=6pt,bottom=6pt,
  fonttitle=\bfseries,
  coltitle=black,
  attach title to upper,
  boxed title style={
    colback=black!3,
    colframe=black!60,
    boxrule=0.6pt,
    arc=1.2mm
  }
}{def}

\usepackage{stmaryrd} %

\usepackage{tikz}
\usetikzlibrary{shapes.geometric}
\usetikzlibrary{arrows.meta,arrows}
\usetikzlibrary{automata}
\usetikzlibrary{matrix,positioning,backgrounds,fit,calc}

\newcommand{\Ap}{\mathsf{Ap}}
\newcommand{\Alphabet}{\ensuremath{\Gamma}}

\NewDocumentCommand{\MarkovChain}{ O{s} O{\pi} }{%
  \ensuremath{\mathcal{M}_{#2}^{#1}}%
}

\newcommand{\LTL}{\ensuremath{{\textsc{ltl}}\xspace}}
\newcommand{\LTLf}{\ensuremath{{\textsc{ltl}_f}\xspace}}

\newcommand{\PLTLf}{\ensuremath{{\textsc{pltl}_f}\xspace}}

\newcommand{\ATL}{{\ensuremath{\textsc{atl}}\xspace}}

\newcommand{\PATL}{{\ensuremath{\textsc{patl}}\xspace}}

\newcommand{\STIT}{{\ensuremath{\textsc{stit}}\xspace}}

\newcommand{\GDL}{{\ensuremath{\textsc{gdl}}\xspace}}
\newcommand{\ATLsc}{{\ensuremath{\textsc{atl}_\text{sc}}\xspace}}
\newcommand{\ATLes}{{\ensuremath{\textsc{atl}_\text{es}}\xspace}}

\newcommand{\rPATL}{\ensuremath{\text{r}{\textsc{patl}}\xspace}}

\newcommand{\SL}{\ensuremath{{\textsc{sl}}\xspace}}

\newcommand{\Next}{\ensuremath{\mathsf{X}}}
\newcommand{\Until}{\ensuremath{\mathsf{U}}}

\newcommand{\Eventually}{\ensuremath{\mathsf{F}}}
\newcommand{\Always}{\ensuremath{\mathsf{G}}}
\newcommand{\Prob}{\ensuremath{\mathsf{P}}}

\newcommand{\StrategyProfile}{\ensuremath{\vec{\sigma}}}

\newcommand{\TWOEXPTIME}{\ensuremath{\mathsf{2EXPTIME}}}

\newcommand{\defeq}{\stackrel{\mathrm{def}}{=}}

\newtheorem{lemma}{Lemma}

\newtheorem{definition}{Definition}

\newtheorem{assumption}{Assumption}

\renewcommand{\iff}{\ensuremath{\leftrightarrow}}

\newcommand{\FPath}{\ensuremath{\mathsf{FPath}\xspace}}

\newcommand{\NE}{\ensuremath{\mathsf{NE}\xspace}}
\newcommand{\EpsNE}{\ensuremath{\varepsilon\text{-}\mathsf{NE}\xspace}}
\newcommand{\SPNE}{\ensuremath{\mathsf{SPNE}\xspace}}

\newcommand{\NashConv}{\ensuremath{\mathsf{NashConv}\xspace}}
\newcommand{\ApxNashConv}{\ensuremath{\NashConv^{\!\approx}}\xspace}
\newcommand{\Sol}{\ensuremath{\mathsf{Sol}\xspace}}

\newcommand{\Sel}{\ensuremath{\mathsf{Sel}}}
\newcommand{\Exploit}{\ensuremath{\mathsf{Exploit}}}

\renewcommand{\iff}{\ensuremath{\leftrightarrow}}

\newcommand{\Val}{\mathsf{Val}}

\newcommand{\trace}{\tau}

\newcommand{\Obj}{\psi}
\newcommand{\ObjVec}{\vec\Obj}        %
\newcommand{\PathObj}{\varphi}
\newcommand{\PathObjVec}{\vec\PathObj} %

\newcommand{\util}{u}
\newcommand{\utilVec}{\vec\util}

\definecolor{green_hl}{rgb}{0.0, 0.56, 0.0}

\newcommand\env[1]{\textsc{#1}}

\newcommand\algo[1]{\texttt{#1}}

\newlength{\logicsemanticsrhswidth}
\newlength{\logicsemanticslhswidth}
\newlength{\logicsemanticstextinset}
\newcommand{\logicsemanticssetrhswidth}[1]{%
    \setlength{\logicsemanticsrhswidth}{#1}%
}

\newcommand{\logicsemanticssetlhswidth}[1]{%
    \setlength{\logicsemanticslhswidth}{#1}%
}

\newcolumntype{Y}[1]{>{\raggedright\arraybackslash}p{#1}}

\newcommand{\logicsemanticssetboxstyle}[1]{%
    \tcbset{logicsemantics/.style={#1}}%
}

\logicsemanticssetboxstyle{%
    enhanced,
    breakable,
    colframe=black!75!white,
    colback=black!4!white,
    arc=0mm,
    boxrule=0.95pt,
    left=0mm,
    right=0mm,
    top=0.5mm,
    bottom=0.5mm,
    boxsep=0pt
}

\newcommand{\logicsemanticsintrofont}{%
    \footnotesize\sffamily\bfseries
}

\newcommand{\logicsemanticssectionfont}{%
    \footnotesize\sffamily\bfseries
}

\newcommand{\logicsemanticsbodyfont}{%
    \scriptsize
}

\newcommand{\logicsemanticssetbodyfont}[1]{%
    \renewcommand{\logicsemanticsbodyfont}{#1}%
}

\newcommand{\logicsemanticsfullrow}[1]{%
    \multicolumn{3}{@{}Y{\dimexpr\linewidth-2\logicsemanticstextinset\relax}@{}}{%
        \hspace*{\logicsemanticstextinset}\rule{0pt}{2.2ex}#1%
    }\\[-0.15ex]%
}

\providecommand{\logicsemanticsintro}{%
    For any valuation $\nu$, history $h$, path $\pi$, position $i \geq 1$, and formulae of the logic under study:%
}

\newenvironment{logicsemantics}[1][\logicsemanticsintro]{%
    \par\medskip
    \noindent
    \begin{tcolorbox}[logicsemantics]
    \logicsemanticsbodyfont
    \setlength{\tabcolsep}{0pt}%
    \renewcommand{\arraystretch}{0.92}%
    \begin{tabular*}{\linewidth}{@{}Y{\logicsemanticslhswidth} c Y{\logicsemanticsrhswidth}@{}}%
    \if\relax\detokenize{#1}\relax
    \else
        \logicsemanticsfullrow{\logicsemanticsintrofont #1}%
        \noalign{\vskip 0.35ex}%
    \fi
}{%
    \end{tabular*}%
    \end{tcolorbox}%
    \par\medskip
}

\newcommand{\logicsemanticsdivider}{%
    \noalign{\hrule height 0.4pt}%
}

\newcommand{\logicsemanticssection}[1]{%
    \logicsemanticsdivider
    \logicsemanticsfullrow{\logicsemanticssectionfont #1}%
    \logicsemanticsdivider
    \noalign{\vskip 0.4ex}%
}

\newcommand{\logicsemanticsfirstsection}[1]{%
    \logicsemanticsfullrow{\logicsemanticssectionfont #1}%
    \logicsemanticsdivider
    \noalign{\vskip 0.4ex}%
}

\newcommand{\logicsemanticsrule}[3]{%
    #1 & #2 & #3\\%
}

\newcommand{\logicsemanticsruleblock}[3]{%
    #1 & #2 & \begin{tabular}[t]{@{}l@{}}#3\end{tabular}\\%
}

\usepackage[scale=0.9]{sourcecodepro}
\DeclareTextCommandDefault{\copyright}{\textcopyright}

\setcopyright{none}

\title{\texorpdfstring{Learning Rational Strategies\\ for Temporal Objectives in Multi-Agent Systems}{Learning Rational Strategies for Temporal Objectives in Multi-Agent Systems}}

\author{Omar Adalat}
\affiliation{
  \institution{Imperial College London}
  \city{London}
  \country{United Kingdom}}
\email{o.adalat24@imperial.ac.uk}

\author{Edwin Hamel-De le Court}
\affiliation{
  \institution{University of Manchester}
  \city{Manchester}
  \country{United Kingdom}}
\email{edwin.hamel-delecourt@manchester.ac.uk}

\author{Francesco Belardinelli}
\affiliation{
  \institution{Imperial College London}
  \city{London}
  \country{United Kingdom}}
\email{francesco.belardinelli@imperial.ac.uk}

\begin{abstract}
We study rational synthesis for stochastic multi-agent systems with unknown dynamics, where each agent is equipped with its own probabilistic temporal objective specified using Probabilistic Linear Temporal Logic on Finite Traces ($\PLTLf$). We impose solution concepts (e.g., Nash equilibrium, subgame-perfect equilibrium) as constraints on the joint strategy profile, using payoffs derived from each agent's probability of satisfying its temporal objective. In particular, we formulate per-agent rational synthesis as finding a strategy profile -- if one exists -- such that (i) the designated agent's probabilistic temporal constraint is satisfied, and (ii) the profile represents a stable solution under a chosen solution concept. We give conditional convergence guarantees for tabular Nash-Q learning, certify satisfaction with simultaneous finite-sample confidence bounds, and assess strategic stability with approximate best-response checking. Our approach operates in a stochastic \emph{Multi-Agent Reinforcement Learning} (MARL) environment, where model-free learning algorithms attempt to attain extremal satisfaction probabilities for temporal objectives automatically compiled into reward automata. Our approach is empirically validated using 8 general-sum concurrent stochastic games.
\end{abstract}

\keywords{Multi-agent Reinforcement Learning, General-Sum Games, Rational Learning, Probabilistic Temporal Objectives}

\newcommand{\BibTeX}{\rm B\kern-.05em{\sc i\kern-.025em b}\kern-.08em\TeX}

\begin{document}

\maketitle
\pagestyle{empty}
\thispagestyle{empty}

\section{Introduction} \label{sec:intro}

\emph{Strategic reasoning} in multi-agent systems (MAS) hinges on determining what outcomes groups of agents (termed coalitions) can bring about through their interactions. In many multi-agent settings, especially safety-critical systems, properties of interest go beyond plain reward signals and include \emph{temporally extended goals} such as \emph{reachability/liveness} (e.g., eventually reaching a target region or completing a mission), or \emph{safety invariants} (e.g., always avoid unsafe regions, or never violate constraints)~\cite{amodei2016concrete,garcia2015comprehensive}. Similar objectives are naturally captured by temporal logics~\cite{manna1990hierarchy} which provide precise languages and semantics to specify and verify such properties. Languages relevant to specification of strategic properties include Alternating-Time Temporal Logic ($\ATL$~\cite{alur2002alternating}), Dynamic Coalition Logic~\cite{galimullin2021dynamic}, Probabilistic Alternating-$\mu$ Calculus~\cite{song2019probabilistic}, Game Description Logic ($\GDL$~\cite{de2010situation}), and See-To-It-That ($\STIT$~\cite{belnap1988seeing}). However, such logics typically resolve non-determinism by restricting to coalitional modalities which aim to universally and unilaterally (inevitably) force a property to be true.

On the other hand, \emph{rational synthesis}~\cite{fisman2010rational,ummels2006rational} aims to find strategies that are self-enforcing through stable equilibria, instead of quantifying against all possible behaviour (including irrational behaviour) by other agents who have their own objectives. Therefore, rational synthesis is typically less conservative than adversarial synthesis.

Settings we consider can be represented formally as \emph{(general-sum) stochastic games}, which model multiple decision-making agents that interact over time in a dynamic environment with probabilistic behaviours. Each agent repeatedly selects an action, and the combination of agents' non-deterministic choices and the environment's probabilistic transitions determines the evolution of the game. Stochastic games have been applied in domains as diverse as \emph{cybersecurity}~\cite{zhao2020finite}, \emph{epidemics}~\cite{huang2022game}, \emph{auctions}~\cite{manvi2023nash}, and routing in \emph{stochastic traffic networks}~\cite{yang2024markov}. 

We use an expressive specification style in which \emph{each agent} has its own probabilistic temporal objective as a logical formula interpreted on finite traces. Finite traces have a natural correspondence with reinforcement learning (RL) as tasks are often trained and executed episodically. Rationality is captured by imposing that the learned joint strategy profile satisfies a specific solution concept (e.g., ($\varepsilon$-)Nash equilibrium, subgame-perfect equilibrium), where each agent's payoff reflects the preferred direction of its satisfaction probability. Our approach emphasises learning a single committed strategy profile, allowing tractability by not requiring an explicit transition model of the underlying stochastic game which generally grows combinatorially with respect to the number of agents involved.

\paragraph{Contributions.}
Our contributions are as follows:  \begin{inparaenum}[(i)]
  \item We formulate a learning-based rational-synthesis objective for finite-horizon stochastic games in which each agent has a probabilistic $\LTLf$ objective and utilities encode the preferred direction of satisfaction probabilities;
  \item We compile per-agent $\LTLf$ objectives into deterministic reward automata and show that terminal automaton rewards have expectation equal to satisfaction probability;
  \item We provide theoretical analysis for the reward-automaton encoding, discounting effects, finite-sample satisfaction certification, and finite-horizon equilibrium existence and convergence;
  \item We evaluate satisfaction and strategic stability on 8 benchmark environment instances.
\end{inparaenum}

For reasons of space, proofs are deferred to Appendix~\ref{appsec:technical_details}. A high-level overview of our approach is presented in Appendix~\ref{appsec:overview}.

\section{Preliminaries}
\label{sec:prelims}

\paragraph*{Multi-agent Reinforcement Learning.}
We model multi-agent RL problems as an $n$-agent \emph{concurrent stochastic game} (also known as a {\em Markov game}~\cite{littman1994markov}) $\mathcal{M} = (\mathcal{S}, s_0, \mathcal{A}^1, \dots, \mathcal{A}^n,\allowbreak d^1,\dots,d^n,\allowbreak \mathcal{P}, \mathcal{R}^1, \dots, \mathcal{R}^n, \gamma)$, where $\mathcal{S}$ is the finite {\em state space} and $s_0 \in \mathcal{S}$ is the {\em initial state}; $\mathcal{A}^i$ is the finite {\em action space} of agent $i \in [1, \dots, n]$; 
$d^i:\mathcal{S}\to 2^{\mathcal{A}^i}\setminus\{\emptyset\}$ is the \emph{protocol} (action-availability) function of agent $i$, where $d^i(s)$ denotes the set of legal actions for agent $i$ in state $s$; the set of legal joint actions at $s$ is $d(s)=\prod_{i=1}^n d^i(s)\subseteq \mathcal{A}$;
$\mathcal{P}: \mathcal{S} \times \mathcal{A} \to \Delta(\mathcal{S})$ is the (partial) {\em transition function} (defined only for joint actions $a \in d(s)$), with $\Delta(\mathcal{S})$ being the set of probability distributions over $\mathcal{S}$;  $\mathcal{R}^i: \mathcal{S} \times \mathcal{A} \to \mathbb{R}$ is the (Markovian) reward function of agent $i$; and $\gamma$ is the discount factor.\\
A (stochastic) \emph{policy} $\pi^i$ (also known as a {\em strategy}) of agent $i$ is a function $\pi^i : \mathcal{S} \to \Delta(\mathcal{A}^i)$ with $\operatorname{supp}\pi^i(\cdot\mid s)\subseteq d^i(s)$ for every $s$. Then, the joint action space of agents is $\mathcal{A}=\prod_{i=1}^n\mathcal{A}^i$, and the joint policy of other agents is $\pi^{-i}(a^{-i}|s) = \prod_{j \neq i}\pi^j(a^j|s)$, where $a^{-i}$ is the joint action except agent $i$'s action. Each agent $i$ has their own goal to maximise their expected cumulative discounted rewards, defined as:
\begin{equation}
\mathop{\mathbb{E}}\limits_{\substack{s_{t+1}\sim\mathcal{P}(\cdot|s_t, a_t^i, a_t^{-i}), \\a_t^i\sim\pi^i(\cdot |s_t), a_t^{-i}\sim\pi^{-i}(\cdot |s_t)}}\left [ \sum_{t=0}^{\infty}\gamma^t\mathcal{R}^i(s_t,a_t^i,a_t^{-i})  \right ]
\end{equation}
The decentralised stochastic game, from the perspective of each agent, is \emph{non-stationary},  as the other agents concurrently update their policies. General-sum stochastic games have computational solution concepts which assess stability under equilibria, since they account for the actions of the remaining agents.

A \emph{labelled stochastic game} simply extends a stochastic game with a labelling function $\mathcal{L}: \mathcal{S} \to 2^{\Ap}$ which maps each state $s$ to a set $\mathcal{L}(s) \subseteq \Ap$ of atoms which are true at state $s$. We denote a joint set of strategies for a coalition $C \subseteq [1, \dots, n]$ as $\sigma_C$, and for all agents in the concurrent stochastic game, we denote the \emph{strategy profile} $\StrategyProfile \defeq  (\sigma_1, \dots, \sigma_n)$ as the set composed of each agent's strategy.

We denote a \emph{finite path} as $\pi=s_0a_0s_1\cdots a_{T-1}s_T$, where $a_t\in d(s_t)$ and $\mathcal P(s_{t+1}\mid s_t,a_t)>0$ for every $0\le t<T$. The horizon $H$ bounds the number of transitions, so $T\le H$ and the associated label trace has $T+1$ positions. Let $\FPath_{\mathcal M, s}$ denote the set of finite paths from $s$. For a strategy profile $\StrategyProfile$, let $\Pr^{\StrategyProfile}_{\mathcal M,s}$ be the induced probability measure on complete finite paths $\FPath^{\text{comp}}_{\mathcal M,s}$.  We refer to \cite{baier2008principles} for the standard construction of the probability measure $\Pr^{\StrategyProfile}_{\mathcal M,s}$ induced by a strategy profile. We learn a single strategy profile using \algo{IPPO}~\cite{schulman2017proximal,de2020independent}, \algo{IQL}~\cite{Tan1993}, \algo{Nash-Q Learning (NQL)}~\cite{hu2003nash}, and \algo{PR2-IQL}/\algo{PR2-IPPO} variants~\cite{wen2019pr2} (further details in Appendix~\ref{appsec:PR2}).

\paragraph*{Probabilistic Linear Temporal Logic on Finite Traces.}
We use Probabilistic $\LTLf$ ($\PLTLf$)~\cite{maggi2020temporal} to specify each agent's probabilistic temporal objective on finite traces, matching the episodic structure of RL tasks. Each completed episode supplies a finite sequence of state labels on which a temporal goal is evaluated: for example, reaching a target before the episode ends while avoiding collisions throughout. The probabilistic requirement then constrains how often that goal is satisfied under a fixed strategy profile. Equilibrium constraints capture rationality through utilities reflecting each agent's preferred direction of satisfaction.
The grammar of Probabilistic Linear Temporal Logic on Finite Traces ($\PLTLf$) is  defined as follows:
\begin{definition}[$\PLTLf$ Formula]
  $\PLTLf$ formulas are constructed inductively from the set of atoms $\Ap$, where $p \in \Ap$, $\top$ denotes true and $\bot$ denotes false, $q \in [0,1]$ is a rational constant, and $\bowtie \: \in \{ <, \leq, >, \geq \}$. Formally, the grammar is:
    \begin{align*}
      \varphi &::= \top
        \mid p
        \mid \neg \varphi
        \mid \varphi \land \varphi
        \mid \Next \, \varphi
        \mid \varphi_1 \, \Until \, \varphi_2
      \\
      \psi &::= \Prob_{\bowtie q}[\varphi]
    \end{align*}
\end{definition}
$\Next$ denotes temporal Next and $\Until$ denotes temporal Until. The remaining Boolean and temporal operators 
are syntactic sugar defined as usual: Eventually $\Eventually$ as $\Eventually \varphi \equiv \top \Until \varphi$ and Always $\Always$ as $\Always \varphi \equiv \lnot \Eventually \lnot \varphi$.
In multi-agent systems, agents with no specific objectives can be included in a residual coalition with a trivial objective always achievable with probability of 1 (e.g., $\top$). The full finite-trace semantics are given in Appendix~\ref{appsec:pltlf_semantics}. We provide the probabilistic clause of $\PLTLf$ as:
\[
\begin{gathered}
\mathcal M,s,\StrategyProfile \models \Prob_{\bowtie q}[\varphi]
\quad\Longleftrightarrow\\
\Pr^{\StrategyProfile}_{\mathcal M,s}\big(\{\pi\in\FPath^{\mathrm{comp}}_{\mathcal M,s}\mid
\mathcal M,\pi\models \varphi\}\big)\ \bowtie q.
\end{gathered}
\]

\paragraph*{Rational Synthesis.}
Classical logics with strategic coalition modalities (such as $\ATL$~\cite{alur2002alternating} and $\PATL$~\cite{chen2007probabilistic}) assert temporal goals against \emph{all} behaviours of the other agents. This treats other coalitions as strictly adversarial, which can lead to overly conservative solutions. In settings where other agents have their own objectives, it is more natural to assume they act \emph{rationally} and restrict their behaviour to strategy profiles satisfying a solution concept (e.g., (strong) Nash equilibrium, subgame-perfect equilibrium)~\cite{fisman2010rational,condurache2016complexity}. We write $\StrategyProfile \in \Sol_\zeta(s;\utilVec)$ to denote that $\StrategyProfile$ satisfies solution concept $\zeta$ at state $s$ for the payoff vector $\utilVec$, where the utilities encode each agent's preferred direction of satisfaction.

\begin{definition}[Satisfaction Probabilities and Utilities]\label{def:objective_utilities}
For agent $i$ with requirement $\Obj_i=\Prob_{\bowtie_i q_i}[\PathObj_i]$, define $p_i(s,\StrategyProfile)\defeq\Pr_{\mathcal M,s}^{\StrategyProfile}(\PathObj_i)$ and $f_i:[0,1]\to[0,1]$ by
\[
f_i(p)\defeq\begin{cases}
p, & \bowtie_i\in\{>,\ge\},\\
1-p, & \bowtie_i\in\{<,\le\}.
\end{cases}
\]
The utility is $u_i(s,\StrategyProfile)\defeq f_i(p_i(s,\StrategyProfile))$. Write $\utilVec(s,\StrategyProfile)\defeq\big(u_i(s,\StrategyProfile)\big)_{i=1}^{n}$.
\end{definition}
Definition~\ref{def:objective_utilities} makes utility maximisation favour larger satisfaction probabilities for lower-bound requirements and smaller probabilities for upper-bound requirements. We target the following notion of $\varepsilon$-Nash equilibrium:

\begin{definition}[$\varepsilon$-Nash Equilibrium] \label{def:eps_ne}
    A strategy profile $\StrategyProfile=(\sigma_1,\dots,\sigma_n)$ is an \emph{$\varepsilon$-Nash equilibrium} ($\EpsNE$) at state $s$ iff, for every agent $i$ and every alternative strategy $\sigma_i'$, we have 
    $
      u_i\!\left(s,(\sigma_i',\sigma_{-i})\right)\le u_i(s,\StrategyProfile)+\varepsilon,
    $
    where $\sigma_{-i}$ denotes the strategies of all agents except $i$. For $\varepsilon=0$, $\StrategyProfile$ is a \emph{Nash equilibrium} ($\NE$).
\end{definition}
The largest gain any single agent can obtain by deviating unilaterally measures how far a profile is from equilibrium:
\begin{definition}[Exploitability]\label{def:exploitability}
    The \emph{exploitability} of a strategy profile $\StrategyProfile$ at state $s$ is
    $
      \Exploit(s,\StrategyProfile)\defeq \max_i\sup_{\sigma_i'}\big[u_i\big(s,(\sigma_i',\sigma_{-i})\big)-u_i(s,\StrategyProfile)\big].
    $
\end{definition}
By Definitions~\ref{def:eps_ne} and~\ref{def:exploitability}, $\StrategyProfile$ is an $\EpsNE$ at $s$ if and only if $\Exploit(s,\StrategyProfile)\le\varepsilon$.

\paragraph{Problem Statement.}
We consider a finite labelled concurrent stochastic game $\mathcal M$ with initial state $s_0$, finite horizon $H$, and a solution concept $\zeta$, such as $\EpsNE$. The transition probabilities are unknown, and strategies are learned through episodic interaction with the environment. Each agent $i\in\{1,\dots,n\}$ has an $\LTLf$ path objective $\PathObj_i$ and a probabilistic requirement $\Obj_i=\Prob_{\bowtie_i q_i}[\PathObj_i]$. Let $\Sigma_i$ denote agent $i$'s admissible strategy space and $\Sigma\defeq\prod_{i=1}^{n}\Sigma_i$. Each agent seeks an extremal satisfaction probability at equilibrium: a maximum for a lower-bound requirement and a minimum for an upper-bound requirement, while meeting its threshold.

\begin{definition}[Equilibrium Profiles]\label{def:equilibrium_profiles}
The set of admissible profiles satisfying solution concept $\zeta$ for the utilities in Definition~\ref{def:objective_utilities} is
\[
\mathcal E_\zeta\defeq\left\{\StrategyProfile\in\Sigma\;\middle|\;\StrategyProfile\in\Sol_\zeta(s_0;\utilVec)\right\}.
\]
\end{definition}
Definition~\ref{def:equilibrium_profiles} captures stability under the agents' individual payoff-maximising preferences. For agent $i$, the acceptable equilibrium profiles are $\mathcal F_\zeta^i\defeq\{\StrategyProfile\in\mathcal E_\zeta\mid p_i(s_0,\StrategyProfile)\bowtie_i q_i\}$.

\begin{definition}[Rational Strategy Synthesis]\label{def:rational_synthesis}
Given $(\mathcal M,s_0,H,\allowbreak (\Obj_j)_{j=1}^{n},\zeta)$, admissible strategy spaces $(\Sigma_j)_{j=1}^{n}$, and an agent $i$, the \emph{rational strategy synthesis problem for agent $i$} is to find a strategy profile $\StrategyProfile^\star=(\sigma_i^\star,\sigma_{-i}^\star)\in\mathcal F_\zeta^i$, if such a profile exists.
\end{definition}
Definition~\ref{def:rational_synthesis} requires agent $i$'s threshold to hold at an equilibrium profile. The other agents' objectives determine their strategic preferences, but their thresholds do not necessarily need to hold. Correspondingly, the feasibility question is whether $\mathcal F_\zeta^i\neq\emptyset$, since the existence of equilibria alone is insufficient for the existence of threshold-constrained Nash equilibria. Proving infeasibility may require ruling out all admissible equilibrium profiles, which is outside the scope of model-free rational learning.

An equilibrium profile is additionally said to be \emph{jointly satisfying} if it belongs to $\mathcal F_\zeta\defeq\bigcap_{i=1}^{n}\mathcal F_\zeta^i$, so that every agent's probabilistic threshold holds at the same profile. Each agent uses a threshold-constrained Nash equilibrium as its primary solution concept, as opposed to a jointly satisfying Nash equilibrium.

We investigate model-free MARL approaches in which agents concurrently learn strategies towards their respective synthesis objectives. For each learned profile, we assess each agent's probabilistic requirement using finite-sample confidence bounds and the profile's strategic stability using finite-budget approximate best-response search.

\section{Encoding Quantitative Objectives} \label{sec:encoding}
We outline how to translate $\PLTLf$ objectives into Markovian reward signals for applying standard (model-free) multi-agent RL algorithms, while preserving the \emph{single strategy profile witness} requirement of strategy synthesis.

We compile each $\LTLf$ path objective into a deterministic reward automaton, then take the synchronous product of these automata with the stochastic game. Terminal acceptance rewards recover satisfaction probabilities (Theorem~\ref{thm:coherence}), and the product makes the objectives Markovian while preserving payoffs and equilibria (Theorem~\ref{thm:markov_sufficiency_product}). Discounting results quantify how terminal rewards relate to satisfaction probabilities under fixed and early-terminating horizons. Sampling results provide finite-sample certificates for probabilistic thresholds. Algorithm~\ref{alg:approach} summarises the procedure.

\begin{algorithm}[t]
\caption{Learning-based $\PLTLf$ rational synthesis under unknown dynamics}
\label{alg:approach}
\KwIn{Labelled stochastic game \(\mathcal M\); objective profile \(\ObjVec=(\Prob_{\bowtie_i q_i}[\PathObj_i])_{i=1}^n\); tolerance; additional budget $\mathcal{B}$, solution concept $\zeta$}
\KwOut{learned strategy profile \(\widehat{\StrategyProfile}\) and per-agent satisfaction certificates}
\For{each agent \(i\in\{1,\dots,n\}\)}{
    Compile \(\PathObj_i\) into a deterministic reward automaton \(\mathcal A_{\PathObj_i}\)\;
}
Construct the composite reward automaton \(\mathcal A_{\PathObjVec}\)\;
Train with terminal rewards \(\mathcal{R}_T^i=f_i(\mathbf 1[\bm m_T\models F_i])\) and optionally with counterfactual experience trajectories\;
Using fixed profile \(\widehat{\StrategyProfile}\), estimate unilateral-improvement score \(\widehat{\delta}_{\zeta}\) by rollouts\;
\If{$\widehat{\delta}_{\zeta}>\varepsilon_{\zeta}$}{
    Train once more with budget $\mathcal{B}$, then re-estimate \(\widehat{\delta}_{\zeta}\)\;
}
Estimate simultaneous intervals \([L_i,U_i]\) for all objectives from fresh rollouts of the final fixed profile\;
\Return{\(\widehat{\StrategyProfile}\), and for each agent \(i\), whether its threshold condition is certified}
\end{algorithm}

\paragraph{Reward Automata.}
For each agent, a deterministic finite automaton (DFA) tracks whether the labelled trace of an episode satisfies the $\LTLf$ path formula inside its probabilistic constraint. Its state summarises the history relevant to that formula and is updated as each new state label is read. For example, the automaton in Figure~\ref{fig:dfa} remembers whether a breach has occurred and whether a collision has made the objective impossible. At episode end, its accepting states identify successful traces, and averaging these binary outcomes estimates satisfaction probability. Constructing this automaton takes $\TWOEXPTIME$ in the worst case~\cite{de2015synthesis}. In practice, temporal objectives are often simple and avoid deep temporal nesting, allowing existing tools to construct their automata in milliseconds. Thus, the worst-case complexity need not make compilation a practical bottleneck. Figure~\ref{fig:dfa} illustrates a simple example.

\begin{figure}[t]
  \centering
  \definecolor{lightpurple}{rgb}{0.8, 0.8, 1}
\definecolor{darkpurple}{rgb}{0.6, 0.6, 0.9}

\definecolor{lightgreen}{rgb}{0.76, 0.96, 0.76}
\definecolor{darkgreen}{rgb}{0.46, 0.76, 0.46}

\definecolor{lightblue}{rgb}{0.80, 0.88, 1.00}
\definecolor{darkblue}{rgb}{0.25, 0.45, 0.85}

\tikzset{
    mynode/.style={circle, draw=darkblue, fill=lightblue, thick, minimum size=20pt, inner sep=1.5pt},
    myedge/.style={->, >=stealth', thick},
    mylabel/.style={midway, font=\scriptsize, align=center, inner sep=1pt},
    initial glow/.style={
        mynode,
        preaction={
          draw=darkblue,
          line width=3pt,
          opacity=.50
        }
   },
}

\begin{tikzpicture}[auto, font=\small]

  \node[mynode, initial glow] (0) at (0,1.5) {0};
  \node[mynode] (1) at (-1.5,0) {1};
  \node[mynode, double=darkpurple, double distance=1pt] (2) at (1.5,0) {2};

  \draw[myedge] (0) edge[loop above]
    node[mylabel, above, yshift=2pt] {$\lnot \mathsf{collision}$\\$\land\ \lnot \mathsf{intruder\_breach}$} (0);

  \draw[myedge] (0) -- node[mylabel, left] {$\mathsf{collision}$} (1);
  \draw[myedge] (0) -- node[mylabel, right, xshift=3pt, yshift=4pt]
  {$\mathsf{intruder\_breach}\land$\\ $\lnot\mathsf{collision}$} (2);

  \draw[myedge] (2) edge[loop right]
    node[mylabel, right] {$\lnot \mathsf{collision}$} (2);
  \draw[myedge] (2) -- node[mylabel, below] {$\mathsf{collision}$} (1);

  \draw[myedge] (1) edge[loop left]
    node[mylabel, left] {$\top$} (1);

\end{tikzpicture}
  \caption{DFA for $\Eventually(\mathsf{intruder\_breach})\land\Always(\lnot\mathsf{collision})$ in \env{Stochastic Pursuit Evasion}. State $0$ is initial and state $2$ is accepting. Acceptance is evaluated at episode end.}
  \Description{Three-state reward automaton. State 0 waits for an intruder breach without collision, state 2 records success, and any collision leads to the absorbing failure state 1.}
  \label{fig:dfa}
\end{figure}

\begin{definition}[Deterministic Reward Automaton]
  Given a $\LTLf$ formula, the corresponding \emph{reward automaton} is a deterministic finite automaton over the alphabet $\Alphabet\defeq 2^{\Ap}$, represented as the tuple $A_{\varphi} = (M, m_0, \Alphabet, \delta, F)$, where $M$ is the set of states; $m_0 \in M$ is the initial state; $F \subseteq M$ is the set of accepting states; and $\delta: M \times \Alphabet \to M$ is the total transition function.
\end{definition}
For every finite word $\tau\in\Alphabet^+$, we have:
$
\tau,0\models \varphi \; \text{iff} \; \delta^\ast(m_0,\tau)\in F,
$
where $\delta^\ast$ is the usual extension of $\delta$ to words.  The automaton records acceptance at the terminal timestep. Agent $i$'s learning reward applies $f_i$ to this indicator. For each objective $\varphi$ we define an episodic (terminal) reward functional on complete paths:
\[
\mathcal R_\varphi(\pi)\ \defeq \ \mathbf 1\big[\delta^\ast(m_0,\trace(\pi))\in F\big]
\ =\ \mathbf 1\big[\mathcal M,\pi\models \varphi\big].
\]
This is a Bernoulli random variable under the induced measure $\Pr^{\StrategyProfile}_{\mathcal M,s}$.  Theorem~\ref{thm:coherence} states coherence between the expected return induced by a reward automaton and the satisfaction probability.

\begin{restatable}[Coherence]{theorem}{coherencethm}
\label{thm:coherence}
For any strategy profile $\StrategyProfile$ and state $s$:
$
\mathbb E_{\pi\sim\Pr^{\StrategyProfile}_{\mathcal M,s}}\!\left[\mathcal R_\varphi(\pi)\right]
\;=\;
\Pr^{\StrategyProfile}_{\mathcal M,s}(\varphi).
$
\end{restatable}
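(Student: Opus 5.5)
The plan is to reduce the statement to the elementary fact that the expectation of an indicator random variable equals the probability of the event it indicates. Fix $\StrategyProfile$ and $s$. First, I will make the underlying probability space explicit. With a finite horizon $H$, finite $\mathcal S$ and finite $\mathcal A$, the set $\FPath^{\mathrm{comp}}_{\mathcal M,s}$ of complete finite paths is finite, or at most countable if termination is stochastic. The measure $\Pr^{\StrategyProfile}_{\mathcal M,s}$ is the standard one from~\cite{baier2008principles}. It assigns to each complete path $\pi=s_0a_0\cdots s_T$ the product of the strategy probabilities $\prod_i\sigma_i(a_t^i\mid s_t)$ and the transition probabilities $\mathcal P(s_{t+1}\mid s_t,a_t)$. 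The measurable sets are all subsets of this discrete space, so every function on complete paths is measurable.

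Next, I will identify $\mathcal R_\varphi$ as an indicator. By its definition and the DFA correctness property $\tau,0\models\varphi$ iff $\delta^\ast(m_0,\tau)\in F$, applied to $\tau=\trace(\pi)\in\Alphabet^+$, we have $\mathcal R_\varphi(\pi)=\mathbf 1[\pi\in E_\varphi]$. Here $E_\varphi\defeq\{\pi\in\FPath^{\mathrm{comp}}_{\mathcal M,s}\mid \mathcal M,\pi\models\varphi\}$. The trace of a path always has length at least one (it has $T+1\ge 1$ positions), so the DFA equivalence indeed applies. I will also note that $\mathcal M,\pi\models\varphi$ is, by the finite-trace semantics, defined as $\trace(\pi),0\models\varphi$.

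Then the computation is
\[
\mathbb E\big[\mathcal R_\varphi\big]=\sum_{\pi}\Pr^{\StrategyProfile}_{\mathcal M,s}(\pi)\,\mathbf 1[\pi\in E_\varphi]=\Pr^{\StrategyProfile}_{\mathcal M,s}(E_\varphi),
\]
and the right-hand side is by definition $\Pr^{\StrategyProfile}_{\mathcal M,s}(\varphi)$, the quantity used in the probabilistic clause of $\PLTLf$.

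The only step needing care is the identification of $\mathcal M,\pi\models\varphi$ with DFA acceptance of $\trace(\pi)$. This means checking that the label trace fed to the automaton coincides with the trace on which the $\LTLf$ semantics is evaluated, including the initial label $\mathcal L(s_0)$ and the terminal position. I expect this to be a definitional matter rather than a real obstacle. It is also where the assumption that acceptance is read only at the terminal timestep matters, since intermediate acceptance is irrelevant to $\mathcal R_\varphi$.
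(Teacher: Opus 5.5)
Your proposal is correct and follows the paper's proof. Both arguments identify $\mathcal R_\varphi(\pi)$ with the indicator $\mathbf 1[\mathcal M,\pi\models\varphi]$ through the DFA correctness property applied to $\tau(\pi)$, and then use that the expectation of an indicator equals the probability of its event. Your extra bookkeeping on the discrete path space and trace alignment is sound but not needed; note the set of complete paths is in fact always finite here, since path length is bounded by $H$.
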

Proof in Appendix~\ref{appsec:proof_coherence}. For multiple agents, the individual reward automata are run synchronously in a \emph{composite reward automaton}, whose state records each agent's automaton state and whose terminal reward vector records component-wise acceptance.

\begin{definition}[Composite Reward Automaton]
\label{def:composite_reward_automaton}
Let $\PathObj_1,\dots,\PathObj_n\in\LTLf(\Ap)$, and for each $i\in\{1,\dots,n\}$ let
$\mathcal A_{\PathObj_i}=(M_i,m_0^i,\Alphabet,\delta_i,F_i)$ be a deterministic reward automaton.
The \emph{composite reward automaton} is the deterministic monitor
$
\mathcal A_{\PathObjVec}\ \defeq \ \big(M_c,m_0^c,\Alphabet,\delta_c\big)
$
where $M_c\defeq \prod_{i=1}^n M_i$,
$m_0^c\defeq (m_0^1,\dots,m_0^n)$, and
$
\delta_c\big((m_1,\dots,m_n),\ell\big)
\defeq
\big(\delta_1(m_1,\ell),\dots,\delta_n(m_n,\ell)\big).
$
Acceptance is tracked component-wise via the sets $F_i$:
for a composite state $\bm m=(m_1,\dots,m_n)$ we write $\bm m\models F_i$ iff $m_i\in F_i$.
\end{definition}

Definition~\ref{def:composite_reward_automaton} synchronises the individual monitors. Along an episode $\pi=s_0\bm a_0s_1\cdots s_T$, initialise $\bm m_0=\delta_c(m_0^c,\mathcal L(s_0))$ and update $\bm m_{t+1}=\delta_c(\bm m_t,\mathcal L(s_{t+1}))$, so $\bm m_t$ records the label prefix through $s_t$.

\begin{definition}[Time-Augmented Product Game]\label{def:product_stochastic_game}
Let $\mathcal M$ be a finite labelled stochastic game, $\mathcal A_{\PathObjVec}$ its composite reward automaton, and $H$ the horizon. Let $\kappa:\mathcal S\times\{0,\dots,H\}\to\{0,1\}$ encode the original stopping rule, with $\kappa(s,H)=1$ for every $s$. The stopping time is $T\defeq\min\{0\le t\le H:\kappa(s_t,t)=1\}$ (any information needed by this rule is included in $s$). The \emph{time-augmented synchronous product} $\widehat{\mathcal M}_H$ has state space
\[
\widehat{\mathcal S}_H\defeq\mathcal S\times M_c\times\{0,\dots,H\},
\]
where $x=(s,\bm m,t)$ is formed of the environment state, composite monitor state, and elapsed time, with initial state $x_0=(s_0,\delta_c(m_0^c,\mathcal L(s_0)),0)$. A state $x$ is terminal exactly when $\kappa(s,t)=1$. No further action or transition occurs there. At each non-terminal $x$, agent $i$ retains its original legal actions $d_H^i(x)\defeq d^i(s)$, and we write $d_H(x)\defeq\prod_i d_H^i(x)$. For $x'=(s',\bm m',t')$ and $\bm a\in d_H(x)$, the transition kernel is
\[
\tilde{\mathcal P}(x'\mid x,\bm a)\defeq
\begin{cases}
\mathcal P(s'\mid s,\bm a), & \begin{aligned}[t] &\textnormal{if }t'=t+1 \: \text{and}\\ &\phantom{\textnormal{if }}\bm m'=\delta_c(\bm m,\mathcal L(s'));\end{aligned}\\[4pt]
0, & \textnormal{otherwise.}
\end{cases}
\]
The game has a terminal payoff of
$
R(s,\bm m,t)\defeq\big(f_i(\mathbf 1[\bm m\models F_i])\big)_{i=1}^n.
$
\end{definition}
Definition~\ref{def:product_stochastic_game} makes both the temporal objectives and the horizon Markovian. The episode ends at $T$ with the expected terminal payoff being $u_i$ by Theorem~\ref{thm:coherence}. Retaining $t$ allows strategies to depend on the remaining horizon. We take each $\Sigma_i$ to consist of randomised strategies that are memoryless on $(s_t,\bm m_t,t)$, with independent randomisation across agents.
\begin{restatable}[Markov Sufficiency in the Composite Product]{theorem}{markovsufficiencythm}
\label{thm:markov_sufficiency_product}
For finite-horizon $\LTLf$ objectives $\PathObj_1,\dots,\PathObj_n$ and every strategy profile $\StrategyProfile\in\Sigma$,
the satisfaction probabilities $\big(p_i(s_0,\StrategyProfile)\big)_{i=1}^n$ and utilities $\utilVec(s_0,\StrategyProfile)$
can be realised by a memoryless strategy profile on $\widehat{\mathcal M}_H$, and every such memoryless profile induces a profile in $\Sigma$ with the same payoffs.
\end{restatable}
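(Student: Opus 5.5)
The plan is to build an explicit correspondence between profiles in $\Sigma$ (strategies on $\mathcal M$ that are memoryless in the triple $(s_t,\bm m_t,t)$) and memoryless profiles on $\widehat{\mathcal M}_H$. We then show that corresponding profiles induce the same distribution over complete finite label traces. Theorem~\ref{thm:coherence} then turns equal trace distributions into equal satisfaction probabilities, and hence equal utilities. The key observation is that $(s_t,\bm m_t,t)$ is a deterministic function of the history $s_0\bm a_0\cdots s_t$: $\bm m_t=\delta_c^\ast(m_0^c,\mathcal L(s_0)\cdots\mathcal L(s_t))$ by Definition~\ref{def:composite_reward_automaton}, and $t$ is the length. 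So a strategy on $\mathcal M$ that reads $(s_t,\bm m_t,t)$ is the same object as a memoryless strategy on $\widehat{\mathcal M}_H$. Concretely, for $\sigma_i\in\Sigma_i$ we set $\hat\sigma_i(\cdot\mid x)\defeq\sigma_i(\cdot\mid s,\bm m,t)$ at $x=(s,\bm m,t)$. Conversely, a memoryless $\hat\sigma_i$ on $\widehat{\mathcal M}_H$ defines $\sigma_i\in\Sigma_i$ by the same formula. Legality is preserved because $d^i_H(x)=d^i(s)$.

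Next, we define the projection $\rho$ from finite paths $x_0\bm a_0x_1\cdots x_T$ of $\widehat{\mathcal M}_H$ to paths $s_0\bm a_0 s_1\cdots s_T$ of $\mathcal M$. By the kernel in Definition~\ref{def:product_stochastic_game}, every positive-probability product path has $\bm m$- and $t$-components equal to the deterministic functions above. Hence $\rho$ is a bijection between reachable product paths and paths of $\mathcal M$. Termination agrees on both sides: the product terminates exactly when $\kappa(s,t)=1$, which is the stopping rule $T$ defining complete paths in $\mathcal M$. We then show by induction on the prefix length that
\[
\Pr^{\hat{\StrategyProfile}}_{\widehat{\mathcal M}_H,x_0}(x_0\bm a_0\cdots x_k)=\Pr^{\StrategyProfile}_{\mathcal M,s_0}(s_0\bm a_0\cdots s_k).
\]
The inductive step multiplies both sides by the same factor $\prod_i\sigma_i(a^i_k\mid s_k,\bm m_k,t=k)\cdot\mathcal P(s_{k+1}\mid s_k,\bm a_k)$. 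This uses the independence of randomisation across agents and the fact that $\tilde{\mathcal P}$ puts all its mass on the unique consistent successor $(s',\delta_c(\bm m,\mathcal L(s')),t+1)$. Passing to cylinder sets gives equality of the measures on complete paths, and therefore on label traces.

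Finally, the payoffs follow. The terminal payoff $R(x_T)$ has $i$-th component $f_i(\mathbf 1[\bm m_T\models F_i])$. Since $\bm m_T$ is the composite automaton state after reading $\trace(\pi)$, we have $\mathbf 1[\bm m_T\models F_i]=\mathcal R_{\PathObj_i}(\rho(\pi))$. Taking expectations and using the measure equality together with Theorem~\ref{thm:coherence}, the expected $i$-th component of $R$ equals $\Pr^{\StrategyProfile}_{\mathcal M,s_0}(\PathObj_i)=p_i(s_0,\StrategyProfile)$. This also uses that $f_i$ is affine, so $\mathbb E[f_i(\mathbf 1[\cdot])]=f_i(\mathbb E[\mathbf 1[\cdot]])$. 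Hence the expected terminal payoff equals $u_i(s_0,\StrategyProfile)$. The same argument, read in the other direction, covers the converse claim.

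The main obstacle is the bookkeeping in the measure-equality step. It requires checking that no probability mass in $\widehat{\mathcal M}_H$ escapes to inconsistent monitor or time components, and that early stopping via $\kappa$ is handled identically on both sides. If the paper's $\Sigma_i$ allowed general history-dependent strategies, the first direction would also need the standard argument that a Markov strategy on the sufficient statistic $(s,\bm m,t)$ reproduces the payoffs. Since $\Sigma_i$ is restricted to strategies memoryless on that triple, this step reduces to the identification above.
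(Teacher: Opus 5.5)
Your proposal is correct and follows essentially the paper's argument. The paper packages the same ingredients into a general equivalence notion and a strategy-profile equivalence lemma, then instantiates them: the lift $\iota(s)=(s,\delta_c(m_0^c,\mathcal L(s)),0)$ and projection $\rho(s,\bm m,t)=s$, the probability-preserving successor bijection, matching stopping under $\kappa$, terminal payoffs equal to $f_i$ of the acceptance indicator, and affinity of $f_i$. You instead carry out the induction on history probabilities directly, and your identification of $\Sigma_i$ with memoryless product strategies is exactly the paper's observation that the final lifted state is $(s_t,\bm m_t,t)$.
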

Theorem~\ref{thm:markov_sufficiency_product} identifies the admissible strategies with memoryless product policies while preserving their objective values. Proof is provided in Appendix~\ref{appsec:proof_markov_sufficiency}.

\subsection{Discount Factor and Effective Horizon} \label{sec:discount_factor}
The product above represents undiscounted utilities. Our experiments use terminal rewards with $\gamma\in\{0.99,1\}$ at the original stopping time $T$. The relationship of discounted rewards to satisfaction probability depends on whether episode length is fixed.

The following results apply to terminal acceptance rewards for $\PathObj_i$ under lower-bound requirements and for $\neg\PathObj_i$ under upper-bound requirements. We first consider fixed-length episodes:
\begin{restatable}[Fixed-Horizon Terminal Reward Is Invariant to Discounting]{theorem}{terminalinvariancethm}
\label{thm:terminal_invariance}
Consider episodes of fixed time indices $t = 0, \dots, H$ and a terminal reward of the form $r_t=0$ for $t<H$ and $r_H = \mathbf{1}[\varphi] \in \{0,1\}$.
Then for any strategy profile $\StrategyProfile$,
$
\mathbb{E}_{\StrategyProfile}\!\left[\sum_{t=0}^{H}\gamma^t r_t\right]
= \gamma^{H}\,\Pr_{\StrategyProfile}(\varphi)
$.
In particular, for fixed $H$ and $\gamma\in(0,1)$, any strategy profile that maximises the discounted return also maximises $\Pr_{\StrategyProfile}(\varphi)$. Exact Nash equilibria are unchanged, but exploitability in discounted-return units is $\gamma^H\Exploit$.
\end{restatable}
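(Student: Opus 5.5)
The plan is to compute the discounted return directly and then transfer everything else through the positive constant $\gamma^H$. Since $r_t=0$ for every $t<H$, the sum $\sum_{t=0}^{H}\gamma^t r_t$ collapses pathwise to the single term $\gamma^H r_H=\gamma^H\mathbf 1[\varphi]$. This holds because every episode has the fixed length $H$, so there is no random stopping time and $\gamma^H$ is deterministic.

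Taking expectations under $\Pr_{\StrategyProfile}$ and using linearity gives $\mathbb E_{\StrategyProfile}[\gamma^H\mathbf 1[\varphi]]=\gamma^H\,\mathbb E_{\StrategyProfile}[\mathbf 1[\varphi]]$. Theorem~\ref{thm:coherence} then identifies $\mathbb E_{\StrategyProfile}[\mathcal R_\varphi]$ with $\Pr_{\StrategyProfile}(\varphi)$. Integrability is not an issue because the integrand is bounded in $[0,1]$.

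For the consequences, write $J_i(\StrategyProfile)$ for the discounted return. For each agent $i$, under the stated convention (terminal acceptance of $\PathObj_i$ for lower bounds, of $\neg\PathObj_i$ for upper bounds), $J_i(\StrategyProfile)=\gamma^H u_i(s_0,\StrategyProfile)$. The map $x\mapsto\gamma^H x$ is strictly increasing whenever $\gamma\in(0,1]$, since then $\gamma^H>0$. Hence:
\begin{itemize}
\item Argmax sets over any strategy set coincide, which gives the maximisation claim (in the single-objective reading, and likewise for best responses of each agent against fixed $\sigma_{-i}$).
\item The unilateral-deviation gaps scale exactly: $J_i(\sigma_i',\sigma_{-i})-J_i(\StrategyProfile)=\gamma^H\big[u_i(s_0,(\sigma_i',\sigma_{-i}))-u_i(s_0,\StrategyProfile)\big]$.
\end{itemize}
Taking $\sup_{\sigma_i'}$ and then $\max_i$, and pulling out the nonnegative constant $\gamma^H$, gives the discounted exploitability $\gamma^H\Exploit(s_0,\StrategyProfile)$ by Definition~\ref{def:exploitability}. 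In particular, it is $\le 0$ iff $\Exploit\le 0$, so exact Nash equilibria (Definition~\ref{def:eps_ne} with $\varepsilon=0$) are unchanged.

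The main point needing care is not a calculation. It is checking that the payoff recorded at $t=H$ really is the indicator whose expectation is $u_i$, which is why the theorem is restricted to the terminal-acceptance reward for $\PathObj_i$ or $\neg\PathObj_i$. For upper-bound agents, note that $f_i(\mathbf 1[\PathObj_i])=\mathbf 1[\neg\PathObj_i]$, so coherence applied to $\neg\PathObj_i$ gives $u_i=1-p_i$.

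One should also remark that $\varepsilon$-equilibria are not preserved verbatim: an $\varepsilon$-NE in satisfaction units is a $\gamma^H\varepsilon$-NE in discounted units. This is exactly the rescaling stated in the theorem.
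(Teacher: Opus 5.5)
Your proposal is correct and follows essentially the same route as the paper. Both collapse the sum pathwise to $\gamma^H\mathbf 1[\varphi]$ and take expectations. Both then use the positive constant $\gamma^H$ to preserve maximisers and to scale every unilateral gain, which commutes with the supremum and maximum defining $\Exploit$.
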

Proof in Appendix~\ref{appsec:proof_terminal_invariance}. Commonly in RL, when the goal is achieved early or some irrecoverable failure state is reached, the episode terminates and resets. With early termination, maximising discounted return is not exactly maximising satisfaction probability, as it favours earlier satisfaction.

\begin{restatable}[Discounted Return under Early Termination]{proposition}{earlytermdiscountprop}
\label{prop:early_term_discount}
Let $T\le H$ be the (random) termination time of an episode and let $A$ be the acceptance event.
If the reward is terminal, i.e., $r_t=0$ for $t<T$ and $r_T=\mathbf{1}[A]$, then the discounted return is
$
G_\gamma \;=\; \sum_{t=0}^{T}\gamma^t r_t \;=\; \gamma^{T}\mathbf{1}[A].
$
Hence for any strategy profile $\StrategyProfile$, we have
$
\mathbb{E}_{\StrategyProfile}[G_\gamma] \;=\; \mathbb{E}_{\StrategyProfile}\!\big[\gamma^T\mathbf{1}[A]\big]
$.
\end{restatable}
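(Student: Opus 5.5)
The plan is to compute the discounted return pathwise first, and only then take expectations under the measure induced by the profile. Fix a complete episode $\pi = s_0\bm a_0 s_1\cdots s_T$ with stopping time $T\le H$, as in the time-augmented product of Definition~\ref{def:product_stochastic_game}. Because the reward is terminal, $r_t=0$ for every $t<T$, and no reward is collected after $T$ since the episode has ended. The finite sum $\sum_{t=0}^{T}\gamma^t r_t$ therefore has exactly one possibly nonzero summand, namely $\gamma^T r_T=\gamma^T\mathbf 1[A]$. This gives the pathwise identity $G_\gamma=\gamma^T\mathbf 1[A]$ for every realisation, including $\gamma=1$, where it reduces to $\mathbf 1[A]$. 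Here I use the convention $0^0=1$ if $\gamma=0$ and $T=0$ is ever allowed.

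Next I would check measurability and integrability. The stopping time $T$ is a function of the finite path, since $T=\min\{t:\kappa(s_t,t)=1\}$. The acceptance event $A$ is likewise determined by the path, through the monitor state $\bm m_T$ (or by $\mathcal M,\pi\models\varphi$, as in Theorem~\ref{thm:coherence}). Hence $\gamma^T\mathbf 1[A]$ is a random variable on $\FPath^{\mathrm{comp}}_{\mathcal M,s}$. It is bounded in $[0,1]$ when $\gamma\in[0,1]$, so its expectation exists.

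Taking expectations of the pathwise identity under $\Pr^{\StrategyProfile}_{\mathcal M,s}$ then gives $\mathbb E_{\StrategyProfile}[G_\gamma]=\mathbb E_{\StrategyProfile}[\gamma^T\mathbf 1[A]]$ immediately. Optionally, the expectation can be written as $\sum_{k=0}^{H}\gamma^k\Pr_{\StrategyProfile}(A\cap\{T=k\})$, using linearity over the finitely many values of $T$. This form shows the contrast with Theorem~\ref{thm:terminal_invariance}, which is the special case $T\equiv H$.

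There is no real obstacle here. The only care point is being explicit that the episode is truncated at $T$, so that the terms for $t>T$ are absent rather than zero-valued rewards from absorbing states. With that made explicit, both the pathwise identity and the step of taking expectations are immediate.
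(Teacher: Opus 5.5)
Your proposal is correct and takes the same approach as the paper's proof: compute the return pathwise as the single surviving term $\gamma^T r_T=\gamma^T\mathbf 1[A]$, then take expectations under the measure induced by the profile. Your remarks on measurability and boundedness, and the optional decomposition over the events $\{T=k\}$, are valid additions that the paper leaves implicit.
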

Proof in Appendix~\ref{appsec:proof_early_term_discount}. However, by Corollary~\ref{cor:early_term_bounds}, even with early stopping the discounted terminal-reward objective is a lower-biased surrogate for $\Pr_{\StrategyProfile}(A)$
that additionally encodes a preference for earlier satisfaction:
\begin{restatable}[Discounted Objective Bounds Satisfaction Probability]{corollary}{earlytermboundscor}
\label{cor:early_term_bounds}
Under the assumptions of Prop.~\ref{prop:early_term_discount} and $T\le H$, we have
$
\gamma^H \Pr_{\StrategyProfile}(A)\ \le\ \mathbb{E}_{\StrategyProfile}[G_\gamma]\ \le\ \Pr_{\StrategyProfile}(A).
$
\end{restatable}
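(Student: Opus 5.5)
The argument is a pointwise sandwich on the random variable $\gamma^T\mathbf 1[A]$, followed by taking expectations. The only input we need is Proposition~\ref{prop:early_term_discount}, which gives $\mathbb{E}_{\StrategyProfile}[G_\gamma]=\mathbb{E}_{\StrategyProfile}[\gamma^T\mathbf 1[A]]$. It therefore suffices to bound $\gamma^T\mathbf 1[A]$ path by path between $\gamma^H\mathbf 1[A]$ and $\mathbf 1[A]$.

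Throughout, $\gamma\in(0,1]$, as in the experiments, where $\gamma\in\{0.99,1\}$. If $\gamma=1$ there is nothing to prove: all three quantities coincide with $\Pr_{\StrategyProfile}(A)$, using Theorem~\ref{thm:coherence} to read $\mathbb{E}[\mathbf 1[A]]$ as the probability.

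For $\gamma\in(0,1)$, the map $t\mapsto\gamma^t$ is non-increasing on $\{0,\dots,H\}$. The termination time satisfies $0\le T\le H$ on every complete path, by the stopping rule of Definition~\ref{def:product_stochastic_game}, where $\kappa(s,H)=1$ forces termination by time $H$. Hence $\gamma^H\le\gamma^T\le\gamma^0=1$ almost surely. Multiplying by the nonnegative indicator $\mathbf 1[A]\in\{0,1\}$ gives the pointwise inequality
\[
\gamma^H\mathbf 1[A]\ \le\ \gamma^T\mathbf 1[A]\ \le\ \mathbf 1[A].
\]

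All these random variables are bounded in $[0,1]$, so they are integrable. Monotonicity and linearity of expectation under $\Pr_{\StrategyProfile}$ then give
\[
\gamma^H\Pr_{\StrategyProfile}(A)\ \le\ \mathbb{E}_{\StrategyProfile}[\gamma^T\mathbf 1[A]]\ \le\ \Pr_{\StrategyProfile}(A).
\]
Substituting Proposition~\ref{prop:early_term_discount} for the middle term yields the claim.

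There is no real obstacle here. The one point needing care is that $T\le H$ holds surely, or at least almost surely, and not merely in expectation. This is guaranteed by the assumption $T\le H$ in the statement, equivalently by $\kappa(s,H)=1$. It is also worth noting that the bounds are tight: the upper bound is attained when acceptance always occurs at $T=0$, and the lower bound when it always occurs at $T=H$.
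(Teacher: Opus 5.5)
Your proposal is correct and matches the paper's proof: you use $T\le H$ to get the pointwise sandwich $\gamma^H\mathbf 1[A]\le\gamma^T\mathbf 1[A]\le\mathbf 1[A]$, then take expectations and apply Proposition~\ref{prop:early_term_discount}. Treating $\gamma=1$ separately and citing Theorem~\ref{thm:coherence} for $\mathbb{E}[\mathbf 1[A]]=\Pr(A)$ are unnecessary but harmless, and your tightness remark is consistent with the one the paper makes after Corollary~\ref{cor:gamma_epsilon_discount_bias}.
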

Proof in Appendix~\ref{appsec:proof_early_term_bounds}. Thus, when episodes terminate early upon success or failure, the discounted objective ranks policies by a \emph{discount-weighted} satisfaction probability, i.e., it may prefer earlier satisfaction even if the raw probability is unchanged.
If one wishes to optimise \emph{exactly} $\Pr_{\StrategyProfile}(A)$ under early termination, one can set $\gamma=1$, which still yields a finite return in episodic settings. Otherwise, the bias introduced can also be bounded by $\xi$ per Corollary~\ref{cor:gamma_epsilon_discount_bias}:
\begin{restatable}[$\xi$-Bounded Discount Bias under Early Termination]{corollary}{gammaepsilondiscountbiascor}
\label{cor:gamma_epsilon_discount_bias}
Under the assumptions of Corollary~\ref{cor:early_term_bounds}, to bound the discount-induced bias by at most $\xi \in [0,1]$ uniformly over strategy profiles, it suffices to choose
$
\gamma \ge (1-\xi)^{1/H}.
$
That is, for every strategy profile $\StrategyProfile$,
$
0 \le \Pr_{\StrategyProfile}(A)-\mathbb{E}_{\StrategyProfile}[G_\gamma]\le \xi.
$
\end{restatable}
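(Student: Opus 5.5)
The plan is to derive the bound directly from the two-sided estimate in Corollary~\ref{cor:early_term_bounds}. That corollary gives $\gamma^H \Pr_{\StrategyProfile}(A)\le \mathbb{E}_{\StrategyProfile}[G_\gamma]\le \Pr_{\StrategyProfile}(A)$ for every strategy profile $\StrategyProfile$. The nonnegativity part of the claim, $0\le \Pr_{\StrategyProfile}(A)-\mathbb{E}_{\StrategyProfile}[G_\gamma]$, is then immediate from the upper inequality. It reflects only that $\gamma^T\mathbf 1[A]\le \mathbf 1[A]$ pointwise whenever $\gamma\in(0,1]$.

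For the upper bound, we rearrange the lower inequality to get
\[
\Pr_{\StrategyProfile}(A)-\mathbb{E}_{\StrategyProfile}[G_\gamma]\le (1-\gamma^H)\Pr_{\StrategyProfile}(A)\le 1-\gamma^H .
\]
The last step uses $\Pr_{\StrategyProfile}(A)\le 1$. This bound does not depend on $\StrategyProfile$, which gives the required uniformity over strategy profiles.

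It then remains to show that $\gamma\ge(1-\xi)^{1/H}$ implies $1-\gamma^H\le\xi$. Since $\xi\in[0,1]$, the base $1-\xi$ lies in $[0,1]$, so $(1-\xi)^{1/H}$ is well defined and lies in $[0,1]$. We intend $\gamma\le 1$ throughout, as in the preceding results. On $[0,\infty)$ the map $x\mapsto x^H$ is monotone nondecreasing for $H\ge1$. Hence $\gamma^H\ge \big((1-\xi)^{1/H}\big)^H=1-\xi$, which is exactly $1-\gamma^H\le\xi$. Combining this with the previous display gives the claim.

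No step is a real obstacle. The only care needed is at the edge cases. When $\xi=1$ the condition is vacuous, and the bound $1-\gamma^H\le 1$ holds trivially. When $\xi=0$ the condition forces $\gamma=1$ (given $\gamma\le1$), and then $G_\gamma=\mathbf 1[A]$, so the bias is exactly zero. If $H=0$ is permitted, the exponent $1/H$ is undefined, but in that case $T=0$ and $G_\gamma=\mathbf 1[A]$, so the bias is zero for any $\gamma$. I will note this explicitly rather than treat it as a separate case.
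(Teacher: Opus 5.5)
Your proof is correct and follows the paper's argument essentially verbatim. Both read off $0\le \Pr_{\StrategyProfile}(A)-\mathbb{E}_{\StrategyProfile}[G_\gamma]\le(1-\gamma^H)\Pr_{\StrategyProfile}(A)\le 1-\gamma^H$ from Corollary~\ref{cor:early_term_bounds}, then turn $\gamma\ge(1-\xi)^{1/H}$ into $\gamma^H\ge 1-\xi$, i.e.\ $1-\gamma^H\le\xi$. Your edge-case remarks ($\xi\in\{0,1\}$, $H=0$, and the implicit requirement $\gamma\le 1$) are extra care that the paper leaves implicit, and they do not change the argument.
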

Proof is available in Appendix~\ref{appsec:proof_gamma_epsilon_discount_bias}.
\subsection{PAC-Style Guarantees by Sampling} \label{sec:sampling_guarantees}
We give finite-sample certificates for probabilistic constraints $\Prob_{\bowtie q}[\varphi]$ under a \emph{fixed} strategy profile $\StrategyProfile$ in a finite-horizon setting. Since the dynamics are unknown, we estimate $p$ from rollouts of the fixed profile and bound the estimation error. Such certificates are needed because convergence to an equilibrium does not ensure that probabilistic thresholds are met.

Fix $\mathcal M$, a start state $s$, a path formula $\varphi$, and a strategy profile $\StrategyProfile$. Let $p \ \defeq \ \Pr^{\StrategyProfile}_{\mathcal M,s}(\varphi) \: \in[0,1]$. Consider $n$ i.i.d.\ rollouts (episodes): $\pi_1,\dots,\pi_n \sim \Pr^{\StrategyProfile}_{\mathcal M,s}$ and define the binary satisfaction indicators
$
X_j \ \defeq \ \mathbf 1[\mathcal M,\pi_j\models \varphi] \in\{0,1\},
$
where $j=1,\dots,n$ so that $\mathbb E[X_j]=p$. Let $\hat p \defeq  \frac{1}{n}\sum_{j=1}^n X_j$.

\begin{restatable}[Hoeffding Confidence Interval for Satisfaction Probability]{theorem}{hoeffdingcithm}
\label{thm:hoeffding_ci}
For any confidence parameter $\delta\in(0,1)$, define the half-width
$
\rho_n(\delta) \ \defeq \ \sqrt{\frac{\ln(2/\delta)}{2n}},
$
and the clipped bounds
\[
L_n \ \defeq \ \max\{0,\hat p-\rho_n(\delta)\},
\qquad
U_n \ \defeq \ \min\{1,\hat p+\rho_n(\delta)\}.
\]
Then
$
\Pr\!\big(p \in [L_n,U_n]\big)\ \ge\ 1-\delta,
$
where the probability is over the randomness of the $n$ sampled rollouts.
\end{restatable}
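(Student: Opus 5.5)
The plan is to apply Hoeffding's inequality to the i.i.d.\ bounded indicators $X_1,\dots,X_n$ and then check that clipping to $[0,1]$ cannot remove the true parameter from the interval.

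First, independence and identical distribution. The rollouts $\pi_1,\dots,\pi_n$ are drawn i.i.d.\ from $\Pr^{\StrategyProfile}_{\mathcal M,s}$ under a \emph{fixed} profile $\StrategyProfile$. Each $X_j$ is a measurable function of $\pi_j$, because satisfaction of $\varphi$ on a complete finite path is decided by the reward automaton. Hence the $X_j$ are i.i.d., take values in $[0,1]$, and have mean $p$. The mean is exactly Theorem~\ref{thm:coherence}, since $X_j=\mathcal R_\varphi(\pi_j)$.

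Second, the concentration bound. The two-sided Hoeffding inequality for variables in $[0,1]$ gives
\[
\Pr\big(|\hat p-p|\ge \rho\big)\le 2\exp(-2n\rho^2).
\]
Setting the right-hand side equal to $\delta$ yields $\rho=\rho_n(\delta)=\sqrt{\ln(2/\delta)/(2n)}$. So with probability at least $1-\delta$,
\[
\hat p-\rho_n(\delta)< p<\hat p+\rho_n(\delta).
\]

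Third, clipping. On this event, combining it with $0\le p\le 1$ gives
\[
p\ge\max\{0,\hat p-\rho_n(\delta)\}=L_n,\qquad p\le \min\{1,\hat p+\rho_n(\delta)\}=U_n.
\]
The event $\{p\in[L_n,U_n]\}$ therefore contains the Hoeffding event, and its probability is at least $1-\delta$.

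There is no real obstacle here. The only point needing care is the first step: the i.i.d.\ property depends on the profile being frozen while the evaluation rollouts are collected, and this is precisely why the procedure uses fresh rollouts of the final fixed profile.
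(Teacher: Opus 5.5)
Your proposal is correct and follows essentially the same route as the paper: the two-sided Hoeffding bound (a union bound of the one-sided tails) with $t=\rho_n(\delta)$ chosen so that $2e^{-2nt^2}=\delta$, then the observation that clipping to $[0,1]$ preserves containment because $p\in[0,1]$. Your remarks on the measurability of $X_j$ and on freezing the profile during validation add detail that the paper leaves implicit in its setup.
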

Proof is available in Appendix~\ref{appsec:proof_hoeffding_ci}. Since $\hat p$ is the fraction of the $n$ rollouts that satisfy $\varphi$, Theorem~\ref{thm:hoeffding_ci} states that $|\hat p-p|\le\rho_n(\delta)$ with probability at least $1-\delta$, without knowledge of $\mathcal P$, where $\rho_n(\delta)=O(n^{-1/2})$. 

\begin{restatable}[One-Sided Certification of Threshold Constraints]{corollary}{thresholdcertcor}
\label{cor:threshold_cert}
Fix a threshold $q\in[0,1]$. Under the assumptions of Theorem~\ref{thm:hoeffding_ci}, with probability at least $1-\delta$ over the validation rollouts, all four implications hold:
\[
\begin{aligned}
L_n>q &\ \Longrightarrow\ p>q, & U_n<q &\ \Longrightarrow\ p<q,\\
L_n\ge q &\ \Longrightarrow\ p\ge q, & U_n\le q &\ \Longrightarrow\ p\le q.
\end{aligned}
\]
Thus the probability of issuing any false threshold certificate is at most $\delta$.
\end{restatable}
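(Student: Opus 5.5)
The plan is to reduce all four implications to the single event $\mathcal G \defeq \{p\in[L_n,U_n]\}$ supplied by Theorem~\ref{thm:hoeffding_ci}. That theorem gives $\Pr(\mathcal G)\ge 1-\delta$ over the validation rollouts. It then suffices to show that, deterministically on $\mathcal G$, each of the four implications holds. Because one event covers all four statements simultaneously, no union bound over the implications is needed. The same argument covers the final sentence: a false certificate can only be issued on the complement $\mathcal G^c$, whose probability is at most $\delta$.

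On $\mathcal G$ we have $L_n\le p\le U_n$, and the implications follow from transitivity of the order. If $L_n>q$, then $p\ge L_n>q$. If $L_n\ge q$, then $p\ge L_n\ge q$. Symmetrically, $U_n<q$ gives $p\le U_n<q$, and $U_n\le q$ gives $p\le U_n\le q$.

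For the final claim, a false certificate means that some premise holds while its conclusion fails. By the case analysis above, that situation is impossible on $\mathcal G$, so it forces $\mathcal G^c$. Hence the probability of a false certificate is at most $\Pr(\mathcal G^c)\le\delta$.

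There is no real obstacle here. The only point needing care is that the clipping in $L_n,U_n$ does not break the argument, and it does not: Theorem~\ref{thm:hoeffding_ci} is already stated for the clipped bounds, and clipping to $[0,1]$ cannot exclude $p\in[0,1]$. One should also note that $q$ is fixed before sampling. Otherwise the implications still hold pathwise on $\mathcal G$, so even a data-dependent $q$ would be fine. The event $\mathcal G$ itself concerns only $p$ and the sample, not $q$.
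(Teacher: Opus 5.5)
Your proposal is correct and follows the paper's proof: both condition on the single event $\{p\in[L_n,U_n]\}$ from Theorem~\ref{thm:hoeffding_ci}, then obtain each implication by chaining $L_n\le p\le U_n$ with the relevant endpoint comparison. Your extra remark, that this event does not depend on $q$ and so the implications hold for all $q$ simultaneously (including a data-dependent $q$), is valid and slightly stronger than what the paper states.
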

Proof is available in Appendix~\ref{appsec:proof_threshold_cert}. Corollary~\ref{cor:threshold_cert} certifies $\Prob_{\ge q}[\varphi]$ only when $L_n\ge q$, and symmetrically for upper bounds. If $L_n<q\le U_n$, the sample is inconclusive rather than evidence that $p<q$.

\begin{restatable}[Joint Certification via Bonferroni Adjustment]{corollary}{jointconfidencecertcor}
\label{cor:joint_confidence_cert}
Fix an objective profile
$\ObjVec=(\Obj_1,\dots,\Obj_m)$, where
$\Obj_i=\Prob_{\bowtie_i q_i}[\PathObj_i]$, and a fixed strategy profile $\StrategyProfile$.
Before validation, fix sample sizes $n_i\ge1$ and failure budgets $\delta_i\in(0,1)$ such that $\sum_{i=1}^m\delta_i\le\delta$ for a joint budget $\delta\in(0,1)$.
For each $i$, let
$p_i=\Pr^{\StrategyProfile}_{\mathcal M,s}(\PathObj_i)$ and construct a Hoeffding interval
$[L_i,U_i]$ as in Theorem~\ref{thm:hoeffding_ci}, from $n_i$ i.i.d.\ rollouts and with marginal failure probability $\delta_i$.
Then
\[
\Pr\!\left(\bigcap_{i=1}^m\{p_i\in[L_i,U_i]\}\right)\ge 1-\delta.
\]
Consequently, with probability at least $1-\delta$, all endpoint-based certificates from Corollary~\ref{cor:threshold_cert} are simultaneously sound. In particular, if, for every $i$, the relevant condition
\[
\begin{aligned}
&L_i>q_i \ (\bowtie_i={>}), &\qquad& U_i<q_i \ (\bowtie_i={<}),\\
&L_i\ge q_i \ (\bowtie_i={\ge}), && U_i\le q_i \ (\bowtie_i={\le})
\end{aligned}
\]
holds, then $\mathcal M,s,\StrategyProfile\models\ObjVec$ is jointly certified with confidence at least $1-\delta$.
The uniform Bonferroni choice $\delta_i=\delta/m$ gives half-width
$
\rho_{n_i}(\delta/m)=\sqrt{\ln(2m/\delta)/(2n_i)}.
$
The rollouts used for different objectives need not be independent. In particular, the same rollouts may be evaluated against every $\PathObj_i$.
\end{restatable}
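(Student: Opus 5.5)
The argument is a union bound over the complements of the per-objective coverage events, combined with the marginal guarantee of Theorem~\ref{thm:hoeffding_ci}. For each $i\in\{1,\dots,m\}$, let $E_i\defeq\{p_i\in[L_i,U_i]\}$. The sample size $n_i$ and budget $\delta_i$ are fixed before validation, so $[L_i,U_i]$ is built from $n_i$ i.i.d.\ rollouts of the fixed profile $\StrategyProfile$, and the indicators $X^{(i)}_j=\mathbf 1[\mathcal M,\pi_j\models\PathObj_i]$ are i.i.d.\ Bernoulli with mean $p_i$ (by Theorem~\ref{thm:coherence}, or directly from the definition of $p_i$). Theorem~\ref{thm:hoeffding_ci} therefore gives $\Pr(E_i^c)\le\delta_i$. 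This uses only the marginal law of the $i$-th sample, so it holds regardless of how the samples for different objectives are coupled.

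Boole's inequality then gives
\[
\Pr\Big(\bigcap_{i=1}^m E_i\Big)=1-\Pr\Big(\bigcup_{i=1}^m E_i^c\Big)\ge 1-\sum_{i=1}^m\delta_i\ge 1-\delta.
\]
The union bound requires no independence among the events $E_i$. This justifies the final remark that the same rollouts may be reused for every $\PathObj_i$: even then, each $E_i$ still has marginal failure probability at most $\delta_i$.

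On the event $\bigcap_i E_i$, the four implications of Corollary~\ref{cor:threshold_cert} hold simultaneously for every $i$. They are deterministic consequences of $p_i\in[L_i,U_i]$; for example, $L_i\ge q_i$ and $p_i\ge L_i$ give $p_i\ge q_i$. So if the relevant endpoint condition holds for each $i$, then $p_i\bowtie_i q_i$ for all $i$. By the probabilistic clause of $\PLTLf$, this is exactly $\mathcal M,s,\StrategyProfile\models\Obj_i$ for all $i$, i.e.\ $\mathcal M,s,\StrategyProfile\models\ObjVec$, with confidence at least $1-\delta$. Finally, substituting $\delta_i=\delta/m$ into $\rho_n(\delta)=\sqrt{\ln(2/\delta)/(2n)}$ gives the stated half-width $\sqrt{\ln(2m/\delta)/(2n_i)}$, and $\sum_i\delta/m=\delta$ meets the budget constraint.

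The only delicate step is the requirement that $n_i$ and $\delta_i$ be fixed in advance. If they were chosen adaptively from the data, the marginal Hoeffding guarantee would no longer apply directly. Since the statement stipulates pre-fixing, the proof needs only to note that each interval's construction is determined independently of the validation outcomes.
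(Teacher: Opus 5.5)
Your proposal is correct and follows the paper's proof essentially step for step. Both define the coverage events $E_i=\{p_i\in[L_i,U_i]\}$ and obtain $\Pr(E_i^c)\le\delta_i$ from Theorem~\ref{thm:hoeffding_ci}, apply the Bonferroni union bound with no independence needed (which justifies reusing rollouts), derive the endpoint implications of Corollary~\ref{cor:threshold_cert} deterministically on $\bigcap_i E_i$, and then substitute $\delta_i=\delta/m$ into $\rho_{n_i}$.
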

Proof is available in Appendix~\ref{appsec:proof_joint_confidence_cert}. Corollary~\ref{cor:joint_confidence_cert} certifies all agents' constraints simultaneously for a fixed profile, regardless of how it was learned when validation uses fresh i.i.d\ rollouts. 

\paragraph{Boundary Probabilities.}
At $p=q$, strict requirements ($p>q$ or $p<q$) are false, while non-strict requirements ($p\ge q$ or $p\le q$) hold but leave no margin for separating the confidence bound from the threshold using finitely many samples. A useful additional feasibility condition, analogous to~\cite{ohlmann2025synthesis}, is satisfaction with positive slack:
\begin{definition}[Threshold Strengthening]\label{def:threshold_strengthening}
For a positive rational $\eta$, let $T(\psi,\eta)$ replace $\Prob_{\ge q}$ by $\Prob_{\ge q+\eta}$ and $\Prob_{\le q}$ by $\Prob_{\le q-\eta}$ (and similarly for $>$ and $<$), provided the strengthened threshold remains in $[0,1]$.
\end{definition}
Definition~\ref{def:threshold_strengthening} moves each threshold in the more demanding direction.
\begin{assumption}[Generalised Slater Condition]\label{ass:gslater}
There exist a positive rational $\eta$ and a strategy profile $\StrategyProfile^\star\in\Sigma$ such that, for every agent $i$, $T(\Obj_i,\eta)$ is defined and $\mathcal M,s_0,\StrategyProfile^\star\models T(\Obj_i,\eta)$.
\end{assumption}
Assumption~\ref{ass:gslater} expresses joint feasibility with positive slack. It neither places the witness at equilibrium nor ensures that learning finds it. Rather, it ensures the existence of a profile whose thresholds hold with a positive certification margin.

\subsection{Existence of Equilibria} \label{sec:equilibria_existence}
For infinite-horizon concurrent stochastic games with qualitative objectives, equilibrium existence is only known for certain low levels of the Borel hierarchy (e.g., safety objectives). For example, existence of $\EpsNE$ for general $n$-player concurrent reachability games remains an open question~\cite{Chatterjee2007ErrataCSL04}. In the finite-horizon case, however, existence is classical: every finite extensive-form game with perfect recall has a subgame-perfect equilibrium~\cite{selten1975reexamination}, and a Markov one is obtained by backward induction, solving at each state and remaining time the one-shot stage game whose payoffs are the continuation values, which has a mixed equilibrium by Nash's theorem~\cite{Nash1950} (see e.g.,~\cite{fudenberg1991game}). We restate this standard result for our setting in Proposition~\ref{prp:spne_finite_horizon}:
\begin{restatable}[Existence of $\SPNE$ in Finite Horizon~\cite{selten1975reexamination}]{proposition}{spnefinitehorizonprop}
\label{prp:spne_finite_horizon}
Every fully observed concurrent stochastic game with finite state and action spaces, Markovian stage and terminal rewards, and horizon bounded by $H$ admits a \emph{subgame-perfect Nash equilibrium} ($\SPNE$) in randomised strategies that is \emph{Markov}, i.e.\ each agent's strategy depends only on the current state and the remaining time $H-t$. In particular, this holds for the fully observed product game $\widehat{\mathcal M}_H$ with utilities from Definition~\ref{def:objective_utilities}.
\end{restatable}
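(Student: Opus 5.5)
We will prove the statement by backward induction on the remaining time $k \defeq H-t$, building Markov strategies and continuation values on the time-augmented state space. Since time is a state component in $\widehat{\mathcal M}_H$, a strategy that is Markov in $(s,t)$ is the same as a memoryless strategy on the product. Write $x=(s,\bm m,t)$ for a generic state of $\widehat{\mathcal M}_H$. Terminal states are those with $\kappa(s,t)=1$, which includes every state with $t=H$. At a terminal $x$ we set the continuation value $V_i(x)$ equal to the terminal payoff $R_i(x)=f_i(\mathbf 1[\bm m\models F_i])$, plus any terminal reward in the general statement. No strategy needs to be specified there.

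For the inductive step, suppose $V(x')$ has been defined for all states with time index $t+1$. Take a non-terminal $x$ with time index $t$. We form the one-shot stage game $G_x$ in which player $i$ has the finite action set $d_H^i(x)$. For a joint action $\bm a$, player $i$'s payoff is the stage reward plus the expected continuation value:
\[
g_i^x(\bm a)\defeq \mathcal R^i(x,\bm a)+\sum_{x'}\tilde{\mathcal P}(x'\mid x,\bm a)\,V_i(x').
\]
The stage reward $\mathcal R^i(x,\bm a)$ is zero in the product game. The sum is finite and well defined because $\tilde{\mathcal P}$ only moves to time index $t+1$. By Nash's theorem~\cite{Nash1950}, $G_x$ has a mixed equilibrium $\sigma(x)=(\sigma_1(x),\dots,\sigma_n(x))$. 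We fix one, selecting it arbitrarily, and set $V_i(x)\defeq \mathbb E_{\bm a\sim\sigma(x)}[g_i^x(\bm a)]$. The equilibrium at each state is drawn from the product distribution $\prod_j\sigma_j(x)$, so the agents randomise independently. This matches the requirement on $\Sigma_i$.

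Next we check that the resulting profile $\StrategyProfile$ is a subgame-perfect Nash equilibrium. We first show, by the same induction, that $V_i(x)$ is player $i$'s expected utility under $\StrategyProfile$ from every history ending in $x$. Then, in the subgame at any history $h$ ending in $x$, fix an arbitrary deviation $\sigma_i'$ of player $i$. This deviation may be history-dependent and randomised. We prove by induction on the remaining time that its expected payoff from $h$ is at most $V_i(x)$. At the current step, the deviation induces some mixture $\alpha\in\Delta(d^i_H(x))$ over player $i$'s actions. By the induction hypothesis, after each successor $x'$ (whatever history precedes it), the continuation payoff of $\sigma_i'$ is at most $V_i(x')$. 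The deviation's payoff from $h$ is therefore at most $\mathbb E_{a_i\sim\alpha,\,a_{-i}\sim\sigma_{-i}(x)}[g_i^x(a_i,a_{-i})]$. Because $\sigma(x)$ is an equilibrium of $G_x$, this is at most $V_i(x)$. This one-shot-deviation argument is the step needing most care, since deviations may use memory. The induction handles this by bounding the continuation of each history-dependent deviation by the Markov value $V_i(x')$. It applies pointwise over histories, and every path has length at most $H$.

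Finally, we specialise to $\widehat{\mathcal M}_H$. It is finite: $\mathcal S$, $M_c$ and $\{0,\dots,H\}$ are finite, and so are the action sets. It is fully observed, and its payoffs are Markovian terminal rewards. By Theorem~\ref{thm:coherence}, the expected terminal payoff at $x_0$ is $f_i(p_i(s_0,\StrategyProfile))=u_i(s_0,\StrategyProfile)$. The expectation of $f_i(\mathbf 1[\cdot])$ equals $f_i$ of the probability because $f_i$ is affine. By Theorem~\ref{thm:markov_sufficiency_product}, memoryless profiles on the product are exactly the admissible profiles in $\Sigma$ with the same payoffs. Hence the constructed Markov $\SPNE$ is an $\SPNE$ for the utilities of Definition~\ref{def:objective_utilities}.
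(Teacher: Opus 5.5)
Your proposal is correct and follows the paper's route. Both arguments use backward induction over remaining time, apply Nash's theorem to each stage game whose payoffs are the expected continuation values, verify subgame perfection, and obtain the product case by setting stage rewards to zero. The differences are minor: you prove the no-profitable-deviation step directly (bounding every history-dependent deviation by the Markov values $V_i(x')$ by induction) where the paper simply cites the one-stage deviation principle, and the paper's general stage payoffs include the discount factor, $r_i+\gamma\sum_{s'}\mathcal P(s'\mid s,\bm a)V_i^{t+1}(s')$, which you should insert for discounted games (it equals $1$ in $\widehat{\mathcal M}_H$).
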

Proof in Appendix~\ref{appsec:proof_spne_finite_horizon}. Proposition~\ref{prp:spne_finite_horizon} ensures an equilibrium in the unrestricted class of randomised policies over the fully observed product state. However, existence does not guarantee feasibility for agent $i$ in Definition~\ref{def:rational_synthesis}: $\mathcal E_\zeta$ may be nonempty while $\mathcal F_\zeta^i$ is empty.

\subsection{Convergence Guarantees}
Many of the methods considered in our experiments, including \algo{IPPO}, \algo{IQL}, and \algo{PR2}, have no general guarantee of convergence to Nash equilibria under non-linear function approximation and concurrent policy updates~\cite{tsitsiklis1996analysis}. We instead analyse tabular \algo{Nash-Q}~\cite{hu2003nash}, which backs up equilibrium payoffs of successor-state games.

In finite horizon, convergence proceeds backwards from terminal outcomes as continuation values stabilise. The subtlety is equilibrium selection: when several Nash equilibria exist, small estimation errors may change which one \algo{Nash-Q} chooses. We therefore require vanishing errors in the stage-game payoffs to yield vanishing errors in the selected continuation values.

We use the fully observed product $\widehat{\mathcal M}_H$ of Definition~\ref{def:product_stochastic_game}. A state is \emph{reachable} if it has positive probability under some legal joint action sequence from $x_0$. At each non-terminal $x$, a stage game $g$ assigns a payoff vector in $[0,1]^n$ to each $\bm a\in d_H(x)$. In particular, $Q(x)$ denotes the game $\bm a\mapsto Q(x,\bm a)$.
\begin{definition}[Selector and Selector Profile]\label{def:selector_profile}
A \emph{selector} $\Sel$ maps each non-terminal state $x$ and stage game $g$ at $x$ to a Nash equilibrium $\Sel_x(g)$ of $g$, with value $\Val_x(g)\defeq \mathbb E_{\bm a\sim\Sel_x(g)}[g(\bm a)]\in[0,1]^n$. Its \emph{selector profile} $\StrategyProfile_{\Sel}$ is obtained by backward induction: $V^{\Sel}(x)\defeq R(x)$ for terminal $x$, and, at each non-terminal $x$ in time layers $t=H-1,\dots,0$,
\[
\begin{aligned}
Q^{\Sel}(x,\bm a)&\defeq \sum_{x'}\tilde{\mathcal P}(x'\mid x,\bm a)\,V^{\Sel}(x'),\\
\StrategyProfile_{\Sel}(x)&\defeq \Sel_x\big(Q^{\Sel}(x)\big),\qquad
V^{\Sel}(x)\defeq \Val_x\big(Q^{\Sel}(x)\big).
\end{aligned}
\]
\end{definition}
Definition~\ref{def:selector_profile} states the equilibrium choices used throughout backward induction. Intuitively, the selector fixes one Nash-equilibrium continuation at each state when several are available, and backward induction propagates the corresponding payoff vector to earlier states. In our experiments, we use a selector that ranks the equilibria returned by the stage-game solver by their total expected Q-value. Ties favour the equilibrium with the highest minimum individual value.

\algo{Nash-Q} with selector $\Sel$ maintains $Q^k(x,\bm a)\in\mathbb R^n$ and, after observing a transition $(x,\bm a,x')$ at iteration $k$, updates
$
Q^{k+1}(x,\bm a)=\big(1-\alpha_k(x,\bm a)\big)Q^k(x,\bm a)+\alpha_k(x,\bm a)\,V^k(x'),
$
where $V^k(x')=R(x')$ if $x'$ is terminal and $V^k(x')=\Val_{x'}(Q^k(x'))$ otherwise. All other entries are unchanged ($\alpha_k=0$). At non-terminal states, the extracted profile is $\StrategyProfile_k(x)\defeq \Sel_x(Q^k(x))$.

\begin{restatable}[Convergence of Nash-Q Values and Exploitability]{theorem}{nashqconvergencethm}
\label{thm:nashq_convergence}
Let $\Sel$ be a selector, and suppose that (i) $Q^0(x,\bm a)\in[0,1]^n$ for all $(x,\bm a)$; (ii) for every reachable non-terminal $x$ and $\bm a\in d_H(x)$, the step sizes satisfy $\alpha_k(x,\bm a)\in[0,1]$, $\sum_k\alpha_k(x,\bm a)=\infty$ and $\sum_k\alpha_k(x,\bm a)^2<\infty$ almost surely; (iii) each $\Sel_x$ is Borel measurable, the updated pair and step sizes are measurable functions of the past, and, conditional on that past, the successor is sampled from $\tilde{\mathcal P}(\cdot\mid x,\bm a)$; and (iv) for every reachable non-terminal $x$, $\Val_x$ is continuous at $Q^{\Sel}(x)$, which holds in particular if $\Sel_x$ is continuous at $Q^{\Sel}(x)$ or if all Nash equilibria of $Q^{\Sel}(x)$ have the same value vector. Then $\StrategyProfile_{\Sel}$ is a Markov $\SPNE$ of $\widehat{\mathcal M}_H$ and, almost surely, $Q^k\to Q^{\Sel}$ on all reachable non-terminal states, $u_i(s_0,\StrategyProfile_k)\to u_i(s_0,\StrategyProfile_{\Sel})$ for every agent $i$, and $\Exploit(s_0,\StrategyProfile_k)\to0$ (Definition~\ref{def:exploitability}). In particular, for every $\varepsilon>0$, $\StrategyProfile_k$ is an $\EpsNE$ for all sufficiently large $k$. If moreover $\Sel_x$ is continuous at $Q^{\Sel}(x)$ for every reachable non-terminal $x$, then almost surely $\StrategyProfile_k\to\StrategyProfile_{\Sel}$ on all reachable non-terminal states.
\end{restatable}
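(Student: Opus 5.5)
The plan is backward induction over the time layers of $\widehat{\mathcal M}_H$. This works because the product is layered: every transition goes from layer $t$ to layer $t+1$, and the terminal layer is fixed. The argument has three parts: the $\SPNE$ property of $\StrategyProfile_{\Sel}$, almost-sure convergence of $Q^k$, and convergence of utilities and exploitability.

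\textbf{$\SPNE$ property.} By construction, $V^{\Sel}(x)$ is the value vector of the profile $\StrategyProfile_{\Sel}$ from $x$. At each non-terminal $x$, $\Sel_x(Q^{\Sel}(x))$ is a Nash equilibrium of the stage game whose payoffs are continuation values under $\StrategyProfile_{\Sel}$. The one-shot deviation principle for finite-horizon games then applies, as in the proof of Proposition~\ref{prp:spne_finite_horizon}. An induction on remaining time shows that no unilateral deviation, from any state, improves any agent's payoff. Hence $\StrategyProfile_{\Sel}$ is a Markov $\SPNE$.

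\textbf{Convergence of $Q^k$.} I induct downward on $t=H-1,\dots,0$. The claim at layer $t$ is that $Q^k(x)\to Q^{\Sel}(x)$ almost surely for every reachable non-terminal $x$ in that layer.

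For the inductive step, assume the claim for all later layers. For a successor $x'$, $V^k(x')$ converges almost surely to $V^{\Sel}(x')$: if $x'$ is terminal, $V^k(x')=R(x')$ exactly; otherwise this follows from continuity of $\Val_{x'}$ at $Q^{\Sel}(x')$ (hypothesis (iv)).

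Write the update at $(x,\bm a)$ as a stochastic approximation:
\[
Q^{k+1}=(1-\alpha_k)Q^k+\alpha_k\big(Q^{\Sel}(x,\bm a)+w_k+b_k\big),
\]
with noise and bias terms
\[
w_k=V^{\Sel}(x')-\mathbb E[V^{\Sel}(x')\mid x,\bm a], \qquad b_k=V^k(x')-V^{\Sel}(x').
\]
\begin{itemize}
\item $w_k$ is a bounded martingale difference: it has conditional mean zero by (iii), and values lie in $[0,1]^n$ by (i) and induction.
\item $b_k\to0$ almost surely. Since the successor set is finite, this holds uniformly over successors.
\end{itemize}
The standard stochastic approximation lemma then gives $Q^k(x,\bm a)\to Q^{\Sel}(x,\bm a)$ almost surely, using the Robbins--Monro conditions (ii) (e.g.\ Jaakkola--Jordan--Singh, or Singh et al.\ with vanishing perturbations). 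I expect this step to be the main technical obstacle. The bias is random and only asymptotically vanishing, so the convergence lemma must allow an almost-surely vanishing, adapted perturbation. Checking measurability, which hypothesis (iii) supplies, is routine.

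\textbf{Utilities and exploitability.} The extracted profile $\StrategyProfile_k$ uses stage equilibria of $Q^k$. The obstacle is that $\StrategyProfile_k$ itself need not converge. So I avoid comparing strategies and compare values instead. Let $V_k^{\mathrm{true}}$ denote the actual values under $\StrategyProfile_k$.

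First I bound the gap between $V_k^{\mathrm{true}}$ and $\Val_x(Q^k(x))$ by backward induction. Under the exact continuation values, the stage game at $x$ differs from $Q^k(x)$ by a vanishing amount, and $\Sel_x(Q^k(x))$ is an exact Nash equilibrium of $Q^k(x)$. Hence it is an $\epsilon_k$-equilibrium of the true continuation stage game, with $\epsilon_k\to0$, and its true value is within $\epsilon_k$ of $\Val_x(Q^k(x))$. By induction, $V_k^{\mathrm{true}}(x)\to V^{\Sel}(x)$. At $x_0$ this gives $u_i(s_0,\StrategyProfile_k)\to u_i(s_0,\StrategyProfile_{\Sel})$.

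For exploitability, take agent $i$'s best-response value against $\StrategyProfile_k^{-i}$ and compute it by backward induction. At each layer, the best-response stage value exceeds the profile's value by at most the stage $\epsilon$-gap plus the propagated error. Summing over $H$ layers bounds $\Exploit(s_0,\StrategyProfile_k)$ by a sum of $2H$ vanishing terms, so it tends to $0$. The $\EpsNE$ claim then follows from the equivalence stated after Definition~\ref{def:exploitability}.

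Finally, if each $\Sel_x$ is continuous at $Q^{\Sel}(x)$, then $Q^k(x)\to Q^{\Sel}(x)$ gives $\Sel_x(Q^k(x))\to\Sel_x(Q^{\Sel}(x))=\StrategyProfile_{\Sel}(x)$ directly.
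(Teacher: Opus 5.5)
Your proof is correct, and the first two parts match the paper. The paper also gets the $\SPNE$ property by backward induction and the one-stage deviation principle. For the Q-values it likewise inducts layer by layer, applying Lemma~1 of Singh et al.\ coordinatewise with $\kappa=0$ and vanishing bias $c_k=\max_y\|V^k(y)-V^{\Sel}(y)\|_\infty$.

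Two small points on the Q-value step. Of the two results you name, only Singh et al.'s lemma has the bias term you need; the Jaakkola--Jordan--Singh theorem does not. Also, your $w_k$ lies in $[-1,1]^n$, not $[0,1]^n$, which is still enough for the lemma.

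The genuine difference is how you obtain convergence of payoffs and exploitability.

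\textbf{The paper's route.} The paper argues topologically. It shows $u_i(s_0,\cdot)$ and $\Exploit(s_0,\cdot)$ are continuous on Markov profiles: the first is a polynomial, and the second is a maximum of finitely many polynomials via deterministic Markov best responses. By compactness, every subsequential limit $\bar{\StrategyProfile}$ of $\StrategyProfile_k$ plays equilibria of $Q^{\Sel}(x)$ with value $V^{\Sel}(x)$, using (iv). Hence $\bar{\StrategyProfile}$ is in equilibrium in every reachable subgame with continuation payoffs $V^{\Sel}$, and the subsequence principle concludes.

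\textbf{Your route.} You use a simulation-lemma perturbation instead. Let $\eta_{k,t}$ be the sup-norm gap on layer $t$ between $Q^k$ and the one-step backup of the true values of $\StrategyProfile_k$. You get $|V^{\mathrm{true}}_k(x)-\Val_x(Q^k(x))|\le\eta_{k,t}$ and $\Exploit(s_0,\StrategyProfile_k)\le 2\sum_{t=0}^{H-1}\eta_{k,t}$. Strictly, the stage profile is a $2\eta_{k,t}$-equilibrium of the true stage game, not an $\eta_{k,t}$-equilibrium, but your count of $2H$ terms already reflects this.

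\textbf{What each buys.} Your version gives an explicit, non-asymptotic transfer, linear in $H$, from Q-value and $\Val$ errors to payoff and exploitability errors. Any rate for $Q^k\to Q^{\Sel}$ therefore becomes a rate for exploitability, with no compactness or subsequence extraction. The paper's version yields extra structure: every limit point of $\StrategyProfile_k$ is in equilibrium in every reachable subgame and has payoff $V^{\Sel}(x_0)$.

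Two things are implicit in your argument and should be stated. First, both inductions must run only over reachable states. This is harmless, because the opponents' strategies and any deviation by agent $i$ use legal actions, so only reachable states are ever visited. Second, computing the best-response value by backward induction on product states relies on the same finite-horizon MDP fact the paper cites: against fixed Markov opponents, a Markov policy attains the supremum over all deviations.
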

Proof in Appendix~\ref{appsec:proof_nashq_convergence}. Theorem~\ref{thm:nashq_convergence} assumes that the selected payoff map $\Val_x$ is continuous at $Q^{\Sel}(x)$. Together with (i)--(iii), this yields convergence of Q-values and payoffs and vanishing exploitability. The stronger assumption that $\Sel_x$ is continuous there also yields convergence of action probabilities. Payoff continuity alone allows switching between equilibria with the same payoff vector.

Since ties are broken only after maximising the total expected Q-value, the resulting backup may satisfy fewer thresholds than another available equilibrium if high values for some agents compensate for low values for others. Hence, equilibrium convergence does not ensure satisfaction of the probabilistic thresholds, which must be checked using Corollary~\ref{cor:threshold_cert} (or Corollary~\ref{cor:joint_confidence_cert} for joint certification).

\section{Experimental Analysis}
\label{sec:experimental_eval}

We evaluate Algorithm~\ref{alg:approach}, targeting $\zeta=\EpsNE$ with $\varepsilon=0.05$ (Definition~\ref{def:eps_ne}) as a practical tolerance for unilateral improvements. Improvements below this tolerance are treated as practically negligible relative to finite-sample payoff estimation, approximate best-response training, and stochastic learning dynamics. Appendix~\ref{appsec:hyperparams} provides full hyperparameters, and our code is available at \url{https://github.com/nightly/learning-rational-strategies} for reproducibility.

\newcommand{\MissingLearningCurve}[2][\linewidth]{%
  \fbox{\parbox[c][0.65in][c]{\dimexpr#1-2\fboxsep-2\fboxrule\relax}{\centering\footnotesize Figure unavailable}}%
}

\newif\ifPdfOutput
\ifx\pdfoutput\undefined
  \PdfOutputfalse
\else
  \ifnum\pdfoutput>0
    \PdfOutputtrue
  \else
    \PdfOutputfalse
  \fi
\fi

\newcommand{\PngGraphic}[3][\linewidth]{%
  \IfFileExists{#3}{%
    \ifPdfOutput
      \includegraphics[width=#1]{#3}%
    \else
      \includegraphics[bb=#2,width=#1]{#3}%
    \fi
  }{\MissingLearningCurve[#1]{#3}}%
}

\newcommand{\LearningCurveGraphic}[2][\linewidth]{%
  \PngGraphic[#1]{0 0 343 196}{#2}%
}

\newcommand{\LearningCurveLegend}{%
  \PngGraphic[0.98\linewidth]{0 0 1200 120}{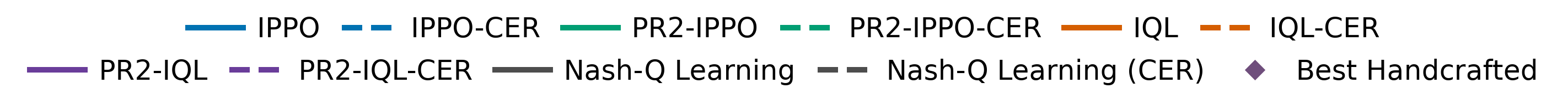}%
}

\newcommand{\AgentCurveBlockWidth}{0.68\textwidth}

\newcommand{\PursuitAgentLearningCurves}[1]{%
  \subfloat[Guard (\texttt{player\_0}), best algorithms]{%
    \LearningCurveGraphic[0.49\linewidth]{#1/agent_1_best_algos_tightened.png}%
  }\hfill
  \subfloat[Guard (\texttt{player\_0}), all algorithms]{%
    \LearningCurveGraphic[0.49\linewidth]{#1/agent_1_all_algos_tightened.png}%
  }\\[0.45em]
  \subfloat[Intruder (\texttt{player\_1}), best algorithms]{%
    \LearningCurveGraphic[0.49\linewidth]{#1/agent_2_best_algos_tightened.png}%
  }\hfill
  \subfloat[Intruder (\texttt{player\_1}), all algorithms]{%
    \LearningCurveGraphic[0.49\linewidth]{#1/agent_2_all_algos_tightened.png}%
  }%
}

\newcommand{\PursuitAgentLearningFigure}[3]{%
  \begin{figure*}[tp]
    \centering
    \begin{minipage}{\AgentCurveBlockWidth}
    \centering
    \LearningCurveLegend

    \medskip

    \PursuitAgentLearningCurves{#2}
    \end{minipage}
    \caption{Two-player Pursuit #1: per-agent satisfaction (mean and $95\%$ CIs over five seeds). Left: best variant per learner family; right: all objective-trained variants.}
    \Description{Learning curves for each Pursuit-Evasion role, comparing the best-performing algorithms on the left and all evaluated algorithms on the right, with confidence bands.}
    \label{fig:app_pursuit_#3_agent_curves}
  \end{figure*}
}

\newcommand{\PursuitThreeAgentLearningCurves}[1]{%
  \subfloat[Guard (\texttt{player\_0}), best algorithms]{%
    \LearningCurveGraphic[0.49\linewidth]{#1/summary/agent_1_best_algos_tightened.png}%
  }\hfill
  \subfloat[Guard (\texttt{player\_0}), all algorithms]{%
    \LearningCurveGraphic[0.49\linewidth]{#1/summary/agent_1_all_algos_tightened.png}%
  }\\[0.45em]
  \subfloat[Intruder 1 (\texttt{player\_1}), best algorithms]{%
    \LearningCurveGraphic[0.49\linewidth]{#1/summary/agent_2_best_algos_tightened.png}%
  }\hfill
  \subfloat[Intruder 1 (\texttt{player\_1}), all algorithms]{%
    \LearningCurveGraphic[0.49\linewidth]{#1/summary/agent_2_all_algos_tightened.png}%
  }\\[0.45em]
  \subfloat[Intruder 2 (\texttt{player\_2}), best algorithms]{%
    \LearningCurveGraphic[0.49\linewidth]{#1/summary/agent_3_best_algos_tightened.png}%
  }\hfill
  \subfloat[Intruder 2 (\texttt{player\_2}), all algorithms]{%
    \LearningCurveGraphic[0.49\linewidth]{#1/summary/agent_3_all_algos_tightened.png}%
  }%
}

\newcommand{\PursuitThreeAgentLearningFigure}[3]{%
  \begin{figure*}[tp]
    \centering
    \begin{minipage}{\AgentCurveBlockWidth}
    \centering
    \LearningCurveLegend

    \medskip

    \PursuitThreeAgentLearningCurves{#2}
    \end{minipage}
    \caption{Three-player Pursuit #1: per-agent satisfaction (mean and $95\%$ CIs over five seeds). Left: best variant per learner family; right: all objective-trained variants.}
    \Description{Learning curves for each Pursuit-Evasion role, comparing the best-performing algorithms on the left and all evaluated algorithms on the right, with confidence bands.}
    \label{fig:app_pursuit_#3_agent_curves}
  \end{figure*}
}

\newcommand{\StagHuntAgentLearningCurves}[1]{%
  \subfloat[\texttt{player\_0}, best algorithms]{%
    \LearningCurveGraphic[0.49\linewidth]{#1/agent_1_best_algos_tightened.png}%
  }\hfill
  \subfloat[\texttt{player\_0}, all algorithms]{%
    \LearningCurveGraphic[0.49\linewidth]{#1/agent_1_all_algos_tightened.png}%
  }\\[0.45em]
  \subfloat[\texttt{player\_1}, best algorithms]{%
    \LearningCurveGraphic[0.49\linewidth]{#1/agent_2_best_algos_tightened.png}%
  }\hfill
  \subfloat[\texttt{player\_1}, all algorithms]{%
    \LearningCurveGraphic[0.49\linewidth]{#1/agent_2_all_algos_tightened.png}%
  }%
}

\newcommand{\StagHuntAgentLearningFigure}[1]{%
  \begin{figure*}[tp]
    \centering
    \begin{minipage}{\AgentCurveBlockWidth}
    \centering
    \LearningCurveLegend

    \medskip

    \StagHuntAgentLearningCurves{#1}
    \end{minipage}
    \caption{Matrix Stag Hunt: per-agent satisfaction (mean and $95\%$ CIs over five seeds). Left: best variant per learner family; right: all objective-trained variants.}
    \Description{Learning curves for both Stag Hunt players, comparing the best-performing algorithms on the left and all evaluated algorithms on the right, with confidence bands.}
    \label{fig:app_stag_hunt_agent_curves}
  \end{figure*}
}

\newcommand{\CoinsAgentLearningCurves}[1]{%
  \subfloat[\texttt{player\_0}, best algorithms]{%
    \LearningCurveGraphic[0.49\linewidth]{#1/summary/agent_1_best_algos_tightened.png}%
  }\hfill
  \subfloat[\texttt{player\_0}, all algorithms]{%
    \LearningCurveGraphic[0.49\linewidth]{#1/summary/agent_1_all_algos_tightened.png}%
  }\\[0.45em]
  \subfloat[\texttt{player\_1}, best algorithms]{%
    \LearningCurveGraphic[0.49\linewidth]{#1/summary/agent_2_best_algos_tightened.png}%
  }\hfill
  \subfloat[\texttt{player\_1}, all algorithms]{%
    \LearningCurveGraphic[0.49\linewidth]{#1/summary/agent_2_all_algos_tightened.png}%
  }%
}

\newcommand{\CoinsAgentLearningFigure}[1]{%
  \begin{figure*}[tp]
    \centering
    \begin{minipage}{\AgentCurveBlockWidth}
    \centering
    \LearningCurveLegend

    \medskip

    \CoinsAgentLearningCurves{#1}
    \end{minipage}
    \caption{Coins: per-agent satisfaction (mean and $95\%$ CIs over five seeds). Left: best variant per learner family; right: all objective-trained variants.}
    \Description{Per-agent learning curves for both Coins players, comparing the best-performing algorithms on the left and all evaluated algorithms on the right, with confidence bands.}
    \label{fig:app_coins_agent_curves}
  \end{figure*}
}

\newcommand{\TerritoryAgentLearningCurves}[1]{%
  \subfloat[Competitor (\texttt{player\_0}), best algorithms]{%
    \LearningCurveGraphic[0.49\linewidth]{#1/agent_1_best_algos_tightened.png}%
  }\hfill
  \subfloat[Competitor (\texttt{player\_0}), all algorithms]{%
    \LearningCurveGraphic[0.49\linewidth]{#1/agent_1_all_algos_tightened.png}%
  }\\[0.45em]
  \subfloat[Competitor (\texttt{player\_1}), best algorithms]{%
    \LearningCurveGraphic[0.49\linewidth]{#1/agent_2_best_algos_tightened.png}%
  }\hfill
  \subfloat[Competitor (\texttt{player\_1}), all algorithms]{%
    \LearningCurveGraphic[0.49\linewidth]{#1/agent_2_all_algos_tightened.png}%
  }%
}

\newcommand{\TerritoryAgentLearningFigure}[1]{%
  \begin{figure*}[tp]
    \centering
    \begin{minipage}{\AgentCurveBlockWidth}
    \centering
    \LearningCurveLegend

    \medskip

    \TerritoryAgentLearningCurves{#1}
    \end{minipage}
    \caption{Territory: per-agent satisfaction (mean and $95\%$ CIs over five seeds). Left: best variant per learner family; right: all objective-trained variants.}
    \Description{Learning curves for both Territory competitors, comparing the best-performing algorithms on the left and all evaluated algorithms on the right, with confidence bands.}
    \label{fig:app_territory_agent_curves}
  \end{figure*}
}

\newcommand{\HarvestAgentLearningCurves}[1]{%
  \subfloat[\texttt{player\_0}, best algorithms]{%
    \LearningCurveGraphic[0.49\linewidth]{#1/agent_1_best_algos_tightened.png}%
  }\hfill
  \subfloat[\texttt{player\_0}, all algorithms]{%
    \LearningCurveGraphic[0.49\linewidth]{#1/agent_1_all_algos_tightened.png}%
  }\\[0.45em]
  \subfloat[\texttt{player\_1}, best algorithms]{%
    \LearningCurveGraphic[0.49\linewidth]{#1/agent_2_best_algos_tightened.png}%
  }\hfill
  \subfloat[\texttt{player\_1}, all algorithms]{%
    \LearningCurveGraphic[0.49\linewidth]{#1/agent_2_all_algos_tightened.png}%
  }\\[0.45em]
  \subfloat[\texttt{player\_2}, best algorithms]{%
    \LearningCurveGraphic[0.49\linewidth]{#1/agent_3_best_algos_tightened.png}%
  }\hfill
  \subfloat[\texttt{player\_2}, all algorithms]{%
    \LearningCurveGraphic[0.49\linewidth]{#1/agent_3_all_algos_tightened.png}%
  }\\[0.45em]
  \subfloat[\texttt{player\_3}, best algorithms]{%
    \LearningCurveGraphic[0.49\linewidth]{#1/agent_4_best_algos_tightened.png}%
  }\hfill
  \subfloat[\texttt{player\_3}, all algorithms]{%
    \LearningCurveGraphic[0.49\linewidth]{#1/agent_4_all_algos_tightened.png}%
  }%
}

\newcommand{\HarvestAgentLearningFigure}[1]{%
  \begin{figure*}[tp]
    \centering
    \begin{minipage}{0.46\textwidth}%
    \centering
    \LearningCurveLegend

    \medskip

    \HarvestAgentLearningCurves{#1}
    \end{minipage}
    \caption{Commons Harvest: per-agent satisfaction (mean and $95\%$ CIs over five seeds). Left: best variant per learner family; right: all objective-trained variants.}
    \Description{Learning curves for the four Commons Harvest agents, comparing the best-performing algorithms on the left and all evaluated algorithms on the right, with confidence bands.}
    \label{fig:app_commons_harvest_agent_curves}
  \end{figure*}
}

\begin{figure*}[t]
  \centering
  \begin{minipage}{\textwidth}
  \centering
  \LearningCurveLegend

  \smallskip

  \subfloat[Pursuit Classic 2]{\LearningCurveGraphic[0.24\linewidth]{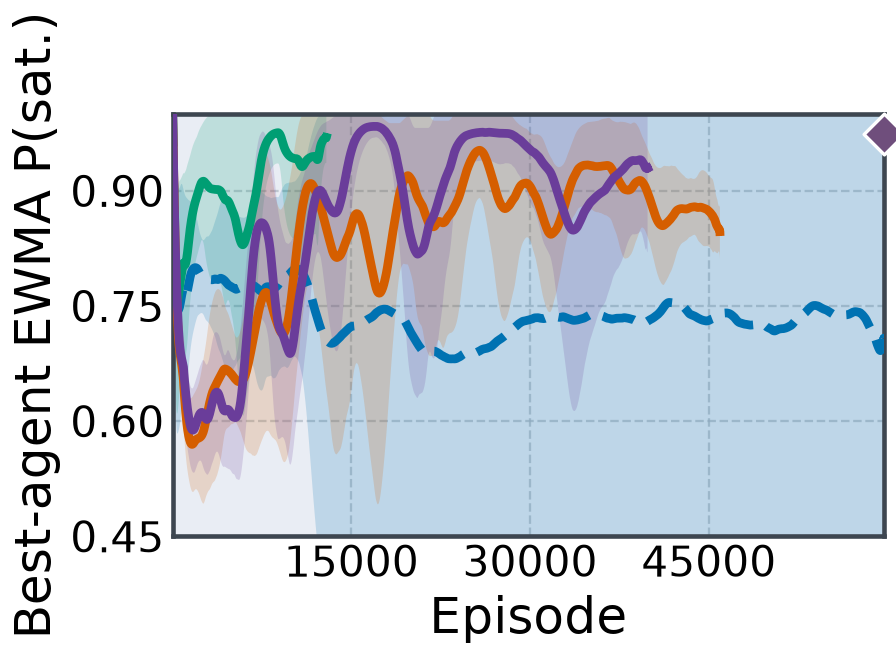}}\hfill
  \subfloat[Pursuit Capture 2]{\LearningCurveGraphic[0.24\linewidth]{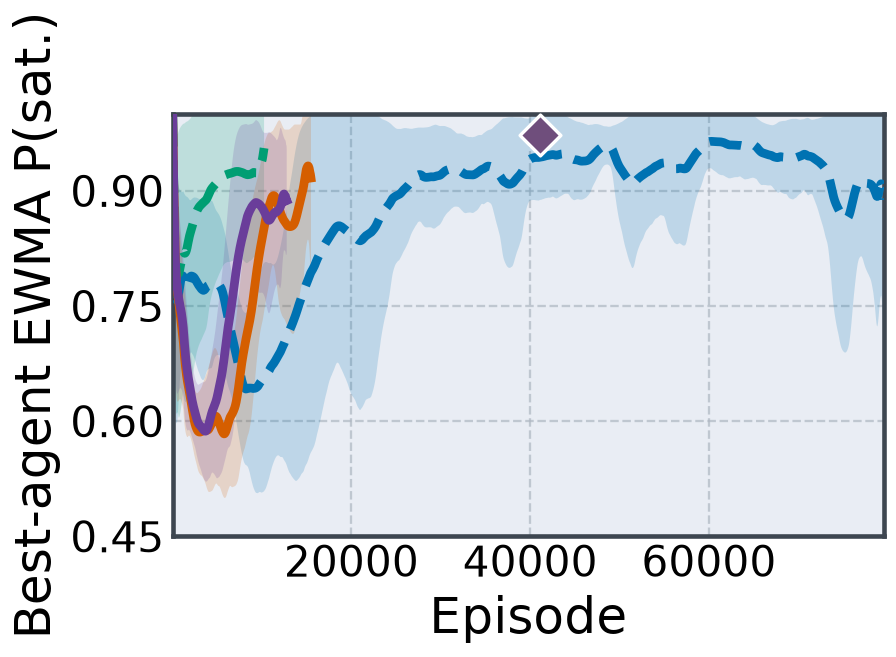}}\hfill
  \subfloat[Pursuit Classic 3]{\LearningCurveGraphic[0.24\linewidth]{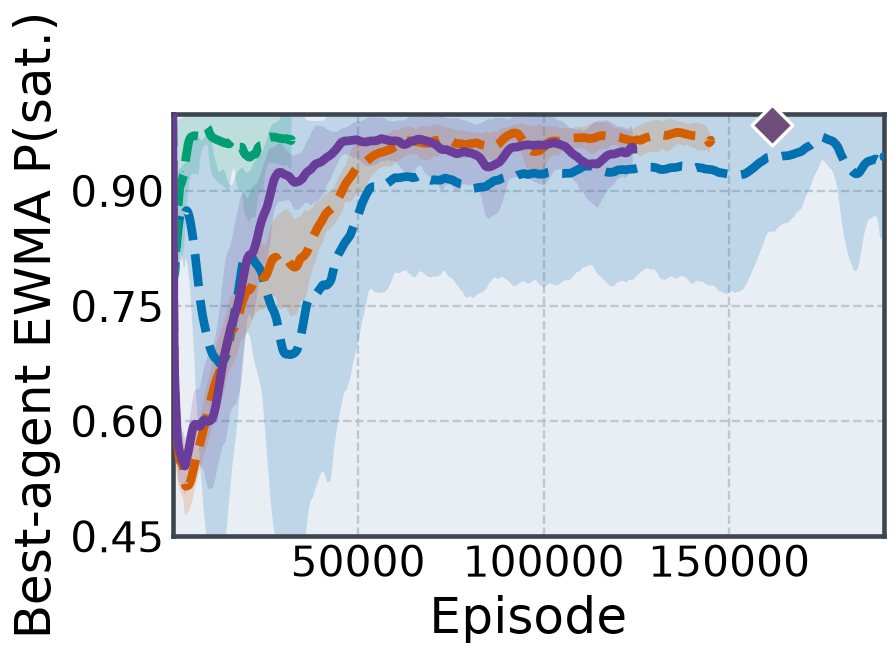}}\hfill
  \subfloat[Pursuit Capture 3]{\LearningCurveGraphic[0.24\linewidth]{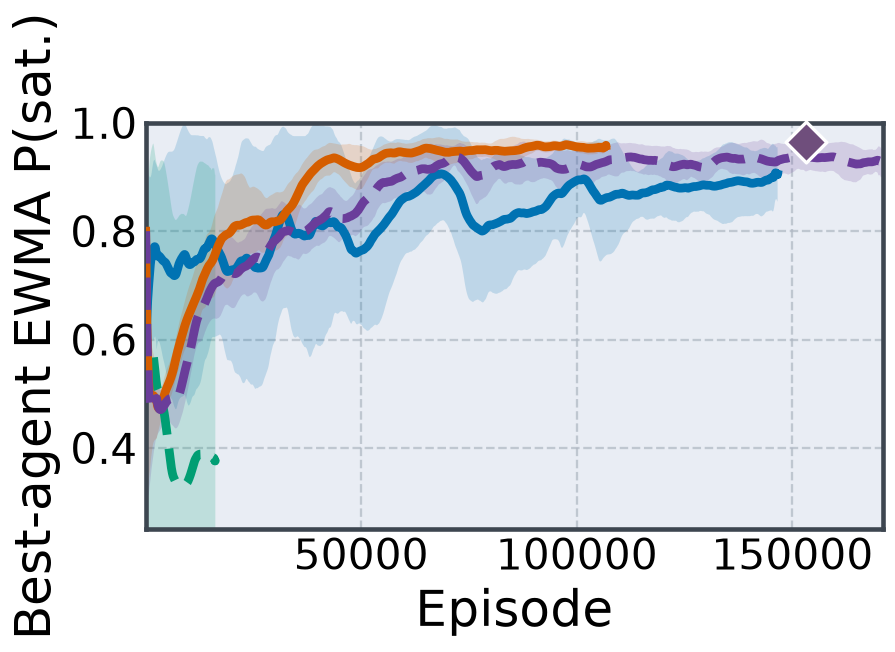}}\\[0.35em]
  \subfloat[\env{Coins}]{\LearningCurveGraphic[0.24\linewidth]{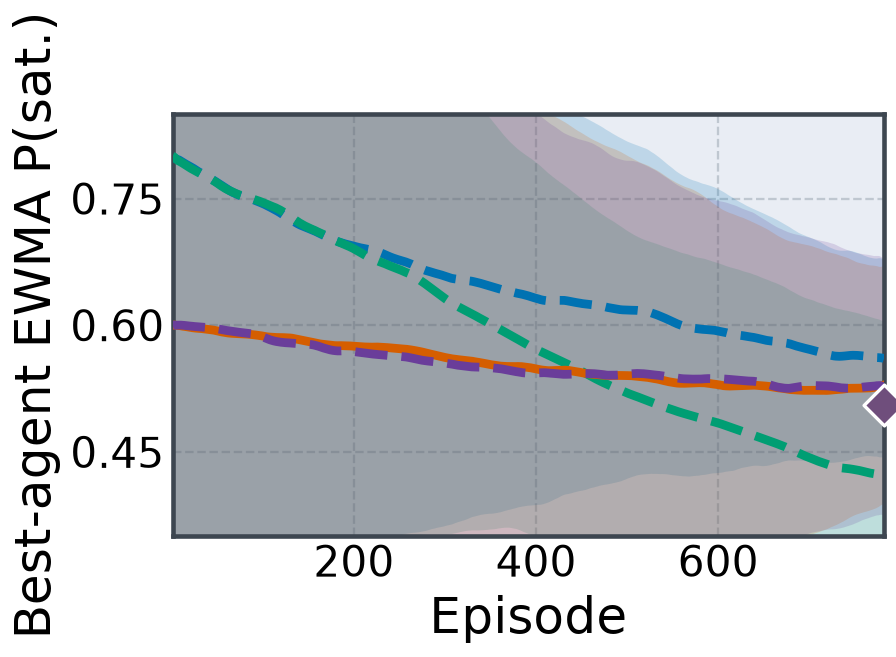}}\hfill
  \subfloat[\env{Matrix Stag Hunt}]{\LearningCurveGraphic[0.24\linewidth]{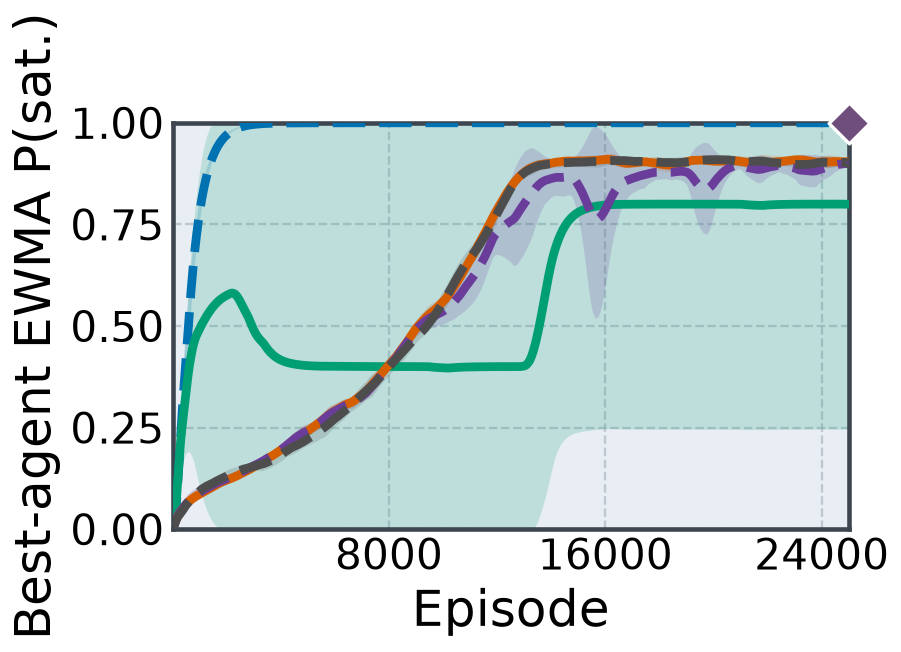}}\hfill
  \subfloat[\env{Commons Harvest}]{\LearningCurveGraphic[0.24\linewidth]{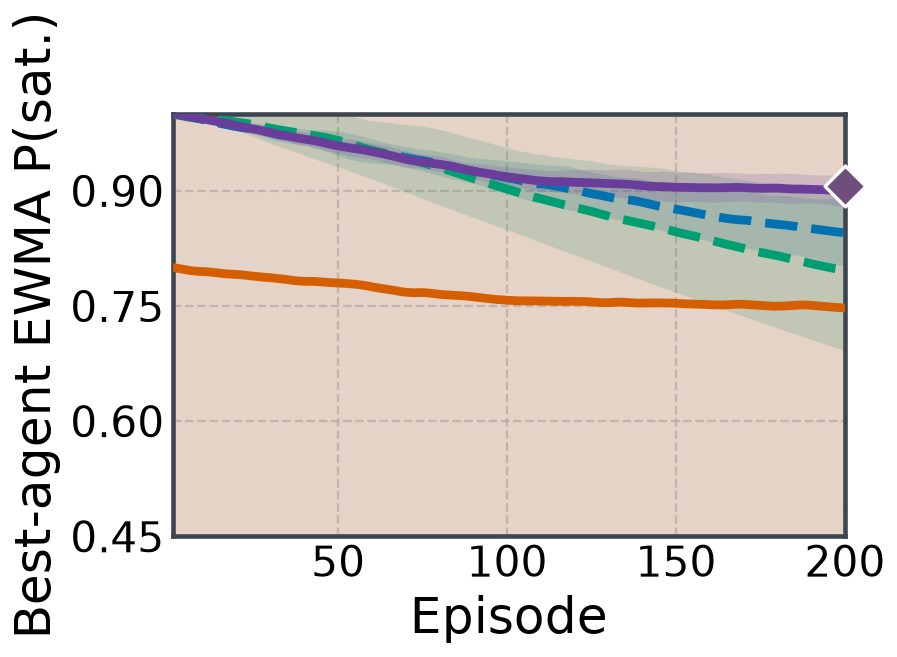}}\hfill
  \subfloat[\env{Territory}]{\LearningCurveGraphic[0.24\linewidth]{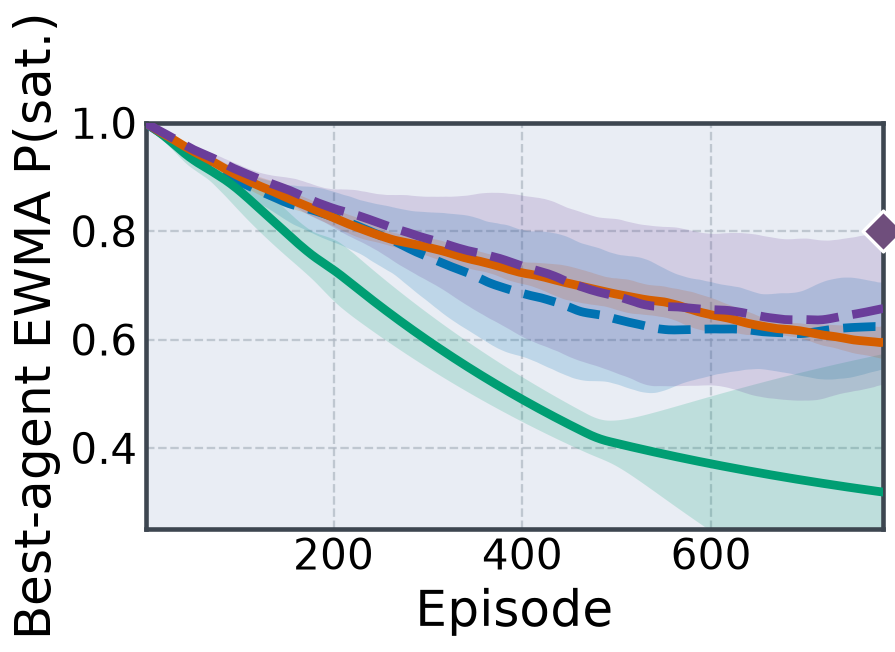}}
  \end{minipage}
  \caption{Best-agent satisfaction of each family's stronger variant, with or without CER (mean and $95\%$ CIs over five runs). Diamonds mark the strongest handcrafted-reward baseline's separately evaluated final satisfaction at its final training step.}
  \Description{Eight panels compare best-agent satisfaction over training across the benchmarks. Bands show confidence intervals, and diamonds mark handcrafted-reward baselines.}
  \label{fig:app_best_agent_learning}
\end{figure*}

\begin{table*}[t]
\centering
\small
\setlength{\tabcolsep}{3pt}
\renewcommand{\arraystretch}{1.15}
\rowcolors{2}{black!3}{white}
\begin{adjustbox}{max width=\textwidth,center}
\begin{tabular}{@{} L{0.15\textwidth} L{0.11\textwidth} C{0.025\textwidth} L{0.45\textwidth} C{0.045\textwidth} C{0.1\textwidth} C{0.081\textwidth} @{}}
\toprule
\textbf{Benchmark} & \textbf{Learner} & \textbf{$i$} & \textbf{Agent's objective} & \textbf{$\widehat p_i$} & \textbf{95\% CI} & \textbf{$\ApxNashConv$} \\
\midrule
Pursuit Classic 2 & \algo{PR2-IPPO} & $0$ & $\Prob_{\ge0.1}[\Always\neg\mathtt{goal\_reached}]$ & $0.979$ & $[0.936,0.993]$ & $0.000$ \\
Pursuit Capture 2 & \algo{PR2-IPPO} & $1$ & $\Prob_{\ge0.5}[\Always\neg\mathtt{guard\_intruder\_collision}\land\Always\neg\mathtt{collision\_ever}]$ & $1.000$ & $[0.957,1.000]$ & $0.000$ \\
Pursuit Classic 3 & \algo{PR2-IQL-CER} & $2$ & $\Prob_{\ge0.1}[\Eventually\mathtt{goal\_reached}\land\Always\neg\mathtt{collision\_ever}]$ & $0.975$ & $[0.932,0.993]$ & $0.008$ \\
Pursuit Capture 3 & \algo{IQL-CER} & $2$ & $\Prob_{\ge0.8}[\Always\neg\mathtt{guard\_intruder\_collision}\land\Always\neg\mathtt{collision\_ever}]$ & $0.969$ & $[0.926,0.998]$ & $0.002$ \\
Coins Own Coin First & \algo{IPPO-CER} & $1$ & $\Prob_{\ge0.4}[\neg w_1\ \Until\ m_1]$ & $0.503$ & $[0.460,0.546]$ & $0.044$ \\
Matrix Sustained Stag & \algo{Nash-Q-CER} & $1$ & $\Prob_{\ge0.9}[\Eventually(\mathtt{both\_stag}\land\Next(\mathtt{both\_stag}\land\Always\neg\mathtt{unmirrored}))]$ & $1.000$ & $[0.957,1.000]$ & $0.000$ \\
Commons Safe Harvest & \algo{IQL-CER} & $0$ & $\Prob_{\ge0.8}[\Always\neg\mathtt{player\_0\_zapped}\land\Eventually\mathtt{player\_0\_apple\_collected}]$ & $0.817$ & $[0.774,0.860]$ & $0.206$ \\
Territory Role Claim & \algo{IQL} & $0$ & $\Prob_{\ge0.25}[\neg\mathtt{p_1\_claimed}\ \Until\ (\mathtt{p_0\_claimed}\land\neg\mathtt{p_1\_claimed})]$ & $0.923$ & $[0.880,0.944]$ & $0.031$ \\
\bottomrule
\end{tabular}
\end{adjustbox}
\rowcolors{2}{}{}
\caption{Selected learners and agents (\texttt{player\_}$i$); objectives omit the outer $\Next$. Satisfaction estimates and marginal $95\%$ Hoeffding interval endpoints are five-seed means from $1000$ new rollouts per seed, without selection adjustment. Appendix~\ref{appsec:experiments} gives selection details, atom definitions and all agents' results.}
\label{tab:main_results}
\end{table*}

\paragraph{Counterfactual Experience Replay.}\label{sec:counterfactual_experience}
Terminal rewards are sparse and less suited to long-horizon tasks, so we use $\LTLf$ counterfactual experience replay (\algo{CER}), inspired by~\citet{voloshin2023eventual}. For each agent, we replay a recorded rollout chunk from alternative reward-automaton states, using the observed proposition labels to recompute monitor states and terminal rewards. Environment observations, actions, and episode endings stay fixed: the same rollout provides experience for several stages of progress towards the objective. \algo{IQL} variants store relabelled experience in replay, \algo{Nash-Q} updates online, and \algo{PPO} variants combine original and relabelled chunks and recompute rollout statistics. Appendix~\ref{appsec:cer} gives complete pseudocode and update details.

\paragraph{Environments.}
We consider 8 task instances from five environment families, fully specified in Appendix~\ref{appsec:environments} and Appendix~\ref{appsec:experiments}. The suite contains two- and three-player classic and capture variants of \env{Stochastic Pursuit Evasion}~\cite{weintraub2020introduction}, the repeated matrix-game \env{Stag Hunt}, and three spatial substrates adapted from the \env{Melting Pot} suite~\cite{agapiou2022melting}: \env{Coins}, \env{Commons Harvest}, and \env{Territory}. Each agent $i$ has objective $\Obj_i=\Prob_{\ge q_i}[\Next\PathObj_i]$ with a role-specific threshold $q_i$ (Appendix~\ref{appsec:experiments}).

\paragraph{Algorithms.}
Our setup targets five algorithm families: variants of \algo{IPPO}, \algo{PR2-IPPO}, \algo{IQL}, \algo{PR2-IQL}, and \algo{Nash-Q}, with and without counterfactual experience replay (\algo{CER}, e.g., \algo{IQL} and \algo{IQL-CER}), and one handcrafted-reward (\algo{hc-}) counterpart for each learner family. Probabilistic recursive reasoning (\algo{PR2}) evaluates actions against modelled opponent responses, guiding action selection and policy updates.

However, \algo{Nash-Q} completed only for \env{Stag Hunt} due to it being a tractable low-dimensional matrix game, whereas the other environments timed out after 96 hours. The handcrafted-reward baseline tries to capture the same objective faithfully with a Python reward function, but has a different equilibrium set in practice, and thus serves primarily as an auxiliary baseline.

\paragraph{Evaluation Metrics.}
Each fixed profile is evaluated on $1000$ episodes per seed, using five training seeds. We select the objective-trained learner and agent with the highest mean satisfaction in this initial evaluation. Table~\ref{tab:main_results} reports that agent's satisfaction estimate from an independent evaluation, threshold, marginal Hoeffding interval endpoints, and the profile's approximate NashConv. Appendix~\ref{appsec:experiments} gives the per-agent results, learning curves, and handcrafted comparisons.

We model the unilateral improvements found by finite-budget response searches using approximate NashConv~\cite{timbers2020approximate}. For each agent $i$, we freeze its opponents and continue training from the evaluated profile with the same learner and a fixed additional budget. In practice, this can instead be estimated using best-response predictors.

\begin{definition}[Approximate NashConv]\label{def:approx_nashconv}
Let $\StrategyProfile^{(i)}$ denote the profile obtained by the unilateral search for agent $i$. For objective-reward profiles, with payoff $u_i(\StrategyProfile)=\Pr^{\StrategyProfile}_{\mathcal M,s_0}(\Next\PathObj_i)$ and rollout estimate $\widehat u_i$, the score is
\[
\ApxNashConv(\StrategyProfile)\defeq
\sum_{i=1}^n\max\!\left\{0,\widehat u_i(\StrategyProfile^{(i)})-\widehat u_i(\StrategyProfile)\right\}.
\]
\end{definition}
The score in Definition~\ref{def:approx_nashconv} measures the unilateral improvements found by the search.  We call it approximate $\NashConv$ because each unilateral response is obtained by finite-budget training rather than exact best-response computation, and the resulting utility gains are estimated from finite rollouts. Consequently, finite search and sampling noise preclude an $\EpsNE$ certificate. As a heuristic, if $\ApxNashConv>0.05$, we train the original profile for up to $25\%$ additional steps. Unlike exploitability (Definition~\ref{def:exploitability}), which takes the maximum unilateral gain over agents, $\ApxNashConv$ sums the gains found across agents.

\paragraph{Results.}
Figure~\ref{fig:app_best_agent_learning} shows learning curves across all 8 benchmarks, while per-agent curves and complete learner comparisons appear in Appendix~\ref{appsec:experiments}. The selected learners in Table~\ref{tab:main_results} achieve independent satisfaction estimates above $0.9$ on six benchmarks: \algo{PR2-IPPO} on two-player Pursuit, \algo{PR2-IQL-CER} on Classic~3, \algo{IQL-CER} on Capture~3, \algo{IQL} on Territory, and \algo{Nash-Q-CER} on Stag Hunt. Harvest selects \algo{IQL-CER} with mean $0.817$, and Coins selects \algo{IPPO-CER} with mean $0.503$.

\paragraph{Threshold feasibility.} Capture~2 illustrates why rational synthesis is conditional on feasibility (Definition~\ref{def:rational_synthesis}). Its dynamics yield a numerically computed minimax guard satisfaction probability $v\approx0.130$. At any unrestricted $0.05$-NE, the guard's satisfaction probability is at most $v+0.05\approx0.180$, below its required $0.3$. The intruder's threshold $0.5$ remains feasible. Thus, the guard's acceptable-equilibrium set $\mathcal F_{0.05\text{-NE}}^{\mathrm g}$ is empty. Additional training is insufficient to resolve this, as the obstruction lies in the environment dynamics coupled with the specified requirement.

\section{Related Work}
\label{sec:related_work}
We provide an extended review of related work in Appendix~\ref{appsec:extended_literature_review}, but summarise the most relevant works here. \citeauthor{hahn2024multi} consider verifying minimax objectives using the logic \PATL, although they consider a model checking problem rather than a synthesis problem as they allow relabelling, and only allow an objective considered between a single coalition and its complement.  A closely related model-based tool is $\mathsf{PRISM}$-$\mathsf{Games}$~\cite{kwiatkowska2020prism}, which supports verification and strategy synthesis for $\rPATL$-style probabilistic and reward objectives in stochastic games, including multi-objective and equilibrium-based extensions~\cite{kwiatkowska2016model,kwiatkowska2013automated,kwiatkowska2020multi}. These methods assume explicit transition models.
\citeauthor{varricchione2023synthesising} construct a reward machine from an $\ATL$ specification, but their approach assumes a fully cooperative grand coalition, which in turn allows delegating to reward machines generated for individual agents. \citeauthor{hu2023reinforcement} consider reward machines for stochastic games, but are limited to exact Nash equilibria, non-quantitative specifications, and two-player general-sum games.  \citeauthor{jothimurugan2022specification} provide a bespoke enumerate-and-verify procedure for a non-probabilistic task specification language that is designed exclusively for equilibria with high social welfare. \citeauthor{hammond2021multi} studied temporal-logic reward optimisation of a sum of probabilities, but restricted their analysis to cooperative MARL. Our use of equilibrium constraints is inspired by classical rational synthesis~\cite{fisman2010rational,ummels2006rational,condurache2016complexity}. \citeauthor{subramanian2026multiagent} study reward-machine based learning in general-sum stochastic games, proposing CROM and QROM, which combine counterfactual experience updates over reward-machine states with opponent modelling and provide tabular convergence and finite-time guarantees for Nash Q-values under additional assumptions. However, our paper deals with probabilistic thresholds rather than discounted rewards. We provide sampling-based certification guarantees and bounds on discounting effects, as well as equilibrium stability and convergence analysis.

\section{Conclusions and Future Work}
\label{sec:conclusion}
Our approach targets multiple strategic agents, each equipped with its own probabilistic temporal-logic objective in $\PLTLf$ over finite traces. Each agent seeks a strategy satisfying its own probabilistic threshold within a profile that is stable under a chosen equilibrium notion (solution concept), interpreted over the utilities induced by the probabilistic requirements.

Compared to existing approaches, our work focuses on model-free strategy synthesis in a learning setting, without requiring known transition dynamics. This allows considering objectives beyond those that a strategic coalition can unilaterally enforce, as required by logics such as $\ATL$ and $\PATL$. We analyse how discounting affects the learning of probabilistic objectives, establish conditional convergence guarantees for tabular Nash-Q learning, and provide algorithm-agnostic finite-sample certificates of objective satisfaction.

Future work could target nested probabilistic properties and the undiscounted infinite-horizon case for $\omega$-regular properties, which would require learning maximal end components~\cite{chatterjee2011faster,hahn2019omega}, although this is non-trivial as end components for multi-agent systems depend on the dynamics of other agents. Another extension of interest could be evaluating different solution concepts, such as a jointly satisficing objective~\cite{rajasekaran2023multi}. It may also be worth investigating confidence-aware equilibrium selection with \algo{Nash-Q}; however this introduces statistical noise into the selector and non-local dependence across states.

\begin{acks}
The research described in this paper was supported by the EPSRC (grant number EP/X015823/1) and the UKRI Centre for Doctoral Training in Safe and Trusted AI (grant number EP/S0233356/1).
\end{acks}

\clearpage
\nobalance
\bibliographystyle{ACM-Reference-Format} 
\bibliography{cite}

\clearpage

\appendix
\raggedbottom %

\section*{Appendix}
The appendix is organised as follows:
\begin{itemize} [Appendix]
    \item \ref{appsec:pltlf_semantics}: The finite-trace semantics of $\PLTLf$
    \item \ref{appsec:overview}: Overview of our approach
    \item \ref{appsec:environments}: Environment descriptions and handcrafted rewards
    \item \ref{appsec:technical_details}: Algorithm details and proofs
    \item \ref{appsec:experiments}: Experimental results
    \item \ref{appsec:extended_literature_review}: Extended literature review
    \item \ref{appsec:hyperparams}: Hyperparameters
\end{itemize}

\section{\texorpdfstring{$\PLTLf$}{PLTLf} Semantics}\label{appsec:pltlf_semantics}

\begin{definition}[$\PLTLf$ Semantics]\label{def:pltlf-semantics}
Fix a labelled stochastic game $\mathcal M$ and a horizon of $H$ transitions (label traces have at most $H+1$ positions).
Let $\Pr^{\StrategyProfile}_{\mathcal M,s}$ be the probability measure induced by strategy profile $\StrategyProfile$ on complete finite paths
$\FPath^{\mathrm{comp}}_{\mathcal M,s}$.
For any finite trajectory $\pi=s_0 a_0 s_1\cdots s_\ell\in\FPath^{\mathrm{comp}}_{\mathcal M,s}$, let
$\trace(\pi)=\mathcal L(s_0)\cdots \mathcal L(s_\ell)\in\Alphabet^{\ell+1}$.
\end{definition}

\logicsemanticssetlhswidth{0.29\linewidth}
\logicsemanticssetrhswidth{0.63\linewidth}
\logicsemanticssetbodyfont{\footnotesize}
\begin{logicsemantics}[]
\logicsemanticsfirstsection{$\LTLf$ satisfaction (finite label traces $\trace\in\Alphabet^+$):}
\logicsemanticsrule{$\trace,i \models \top$}{}{}
\logicsemanticsrule{$\trace,i \models p$}{$\iff$}{$p\in \trace[i]$}
\logicsemanticsrule{$\trace,i \models \neg\varphi$}{$\iff$}{$\trace,i \not\models \varphi$}
\logicsemanticsrule{$\trace,i \models \varphi_1\land\varphi_2$}{$\iff$}{$(\trace,i\models\varphi_1)\ \wedge\ (\trace,i\models\varphi_2)$}
\logicsemanticsrule{$\trace,i \models \Next \varphi$}{$\iff$}{$i+1<|\trace|\ \wedge\ \trace,i+1\models\varphi$}
\logicsemanticsruleblock{$\trace,i \models \varphi_1\Until \varphi_2$}{$\iff$}{$\exists j\in[i,|\trace|-1].\ \big(\trace,j\models\varphi_2\ \wedge$\\$\forall k\in[i,j-1],\ \trace,k\models\varphi_1\big)$}
\logicsemanticssection{Path satisfaction:}
\logicsemanticsrule{$\mathcal M,\pi\models \varphi$}{$\iff$}{$\trace(\pi),0\models \varphi$}
\logicsemanticssection{Probabilistic modality:}
\logicsemanticsruleblock{$\mathcal M,s,\StrategyProfile \models \Prob_{\bowtie q}[\varphi]$}{$\iff$}{$\Pr^{\StrategyProfile}_{\mathcal M,s}\big(\{\pi\in\FPath^{\mathrm{comp}}_{\mathcal M,s}\mid$\\$\mathcal M,\pi\models \varphi\}\big)\ \bowtie q.$}
\end{logicsemantics}

\section{Overview}\label{appsec:overview}

Figure~\ref{fig:app_overview_pipeline} summarises our approach. Given each agent's objective and a target solution concept, we compile the path formulas into reward automata and form the product stochastic game. We train a single joint profile using model-free MARL, with optional counterfactual replay, and evaluate its satisfaction probabilities and \ApxNashConv.

\begin{figure*}[t]
  \centering
  \begin{adjustbox}{max width=\textwidth}
\pgfdeclarelayer{bg}
\pgfsetlayers{bg,main}
\begin{tikzpicture}[
  >=Latex,
  line width=0.75pt,
  node/.style={font=\footnotesize},
  box/.style={
    draw,
    rounded corners=1.5mm,
    fill=black!3,
    minimum height=1.7cm,
    text width=3.25cm,
    align=center,
    inner sep=4pt
  },
  decision/.style={
    diamond,
    draw,
    fill=black!3,
    aspect=2.15,
    text width=2.55cm,
    align=center,
    inner sep=1.5pt
  },
  arrow/.style={->, shorten >=2pt, shorten <=2pt},
  looplabel/.style={
    fill=white,
    inner sep=1.2pt,
    font=\footnotesize,
    align=center
  }
]
  \node[box] (objectives) at (0,0)
    {\textbf{Objective profile}\\Per-agent $\PLTLf$ objectives $\Prob_{\bowtie_i q_i}[\PathObj_i]$
    };
  \node[box] (automata) at (4.25,0)
    {\textbf{Reward automata}\\Compile each $\PathObj_i$ to a DFA and form the composite reward automaton $\mathcal A_{\PathObjVec}$};
  \node[box] (product) at (8.5,0)
    {\textbf{Product game}\\Synchronise $\mathcal M$ with automaton state and expose terminal acceptance rewards};

  \node[box] (cer) at (0,-2.9)
    {\textbf{Counterfactual replay}\\Relabel observed traces from alternative automaton states using the known $\delta_c$};
  \node[box] (learn) at (4.25,-2.9)
    {\textbf{Model-free learning}\\Train one joint profile with \algo{IPPO}, \algo{IQL}, \algo{NQL}, or \algo{PR2}};
  \node[box] (eval) at (8.5,-2.9)
    {\textbf{Assessment}\\Estimate satisfaction, Hoeffding bounds, and $\ApxNashConv$};

  \node[decision] (check) at (4.25,-5.65)
    {Empirical checks\\acceptable?};
  \node[box, minimum height=1.2cm] (report) at (8.9,-5.65)
    {\textbf{Candidate witness}\\Selected joint profile\\for reporting};

  \draw[arrow] (objectives) -- (automata);
  \draw[arrow] (automata) -- (product);
  \draw[arrow] (product.south) -- ++(0,-0.55) -| (learn.north);
  \draw[arrow] (learn.east) -- (eval.west);
  \draw[arrow] (eval.south) -- ++(0,-0.45) -| (check.north east);
  \draw[arrow] (check.east) --
    node[looplabel, above, pos=0.25] {yes}
    (report.west);

  \begin{pgfonlayer}{bg}
    \draw[arrow] (check.north) --
      node[looplabel, left, pos=0.38] {no}
      (learn.south);
    \node[looplabel, text width=2.35cm] at (3.0,-4.35)
      {continue training\\within budget};
  \end{pgfonlayer}
\end{tikzpicture}%
\end{adjustbox}
  \caption{Overview of our approach to automata-based rational synthesis.}
  \Description{Temporal objectives are compiled into reward automata and combined with the game. Model-free learners, optionally using counterfactual replay, produce a profile evaluated for satisfaction and strategic stability.}
    \label{fig:app_overview_pipeline}
\end{figure*}

\section{Environments}\label{appsec:environments}
We consider 8 benchmark instances drawn from five environment families. Table~\ref{tab:app_env_overview} summarises the complete suite.

\begin{table*}[t]
\centering
\small
\setlength{\tabcolsep}{5pt}
\renewcommand{\arraystretch}{1.15}
\rowcolors{2}{black!3}{white}
\begin{adjustbox}{max width=\textwidth,center}
\begin{tabular}{@{} L{0.17\textwidth} L{0.22\textwidth} C{0.08\textwidth} L{0.18\textwidth} L{0.27\textwidth} @{}}
\toprule
\textbf{Environment} & \textbf{Selected benchmark} & \textbf{Agents} & \textbf{Roles} & \textbf{High-level objective} \\
\midrule
\env{Stochastic Pursuit-Evasion} & Classic (2-player) & $2$ & Guard / intruder & Guard prevents a breach; the intruder reaches the goal without collision. \\
\env{Stochastic Pursuit-Evasion} & Capture (2-player) & $2$ & Guard / intruder & Guard seeks contact; the intruder avoids guard contact and all collisions. \\
\env{Stochastic Pursuit-Evasion} & Classic (3-player) & $3$ & Guard / two intruders & The opposed classic objectives are instantiated for both intruders. \\
\env{Stochastic Pursuit-Evasion} & Capture (3-player) & $3$ & Guard / two intruders & The opposed capture/stealth objectives are instantiated for both intruders. \\
\env{Coins} & Own Coin First (2-player) & $2$ & Two colour owners & Each player must collect its own colour before collecting the other colour. \\
\env{Matrix Stag Hunt} & Sustained Stag (2-player) & $2$ & Two symmetric players & Coordinate on a sustained mutual \(\mathtt{Stag}\) convention. \\
\env{Commons Harvest} & Safe Harvest (4-player) & $4$ & Four indexed harvesters & Each player collects an apple while avoiding being zapped. \\
\env{Territory} & Role Claim (2-player) & $2$ & Two competing claimants & Each player tries to make the first sole resource claim. \\
\bottomrule
\end{tabular}
\end{adjustbox}
\rowcolors{2}{}{}
\caption{Our eight environments and their high-level objectives and roles. Objectives that share a schema remain agent-indexed unless they are symmetric.}
\label{tab:app_env_overview}
\end{table*}

\begin{figure*}[t]
  \centering
  \subfloat[\env{Stochastic Pursuit-Evasion}]{
    \PngGraphic[0.21\linewidth]{0 0 540 540}{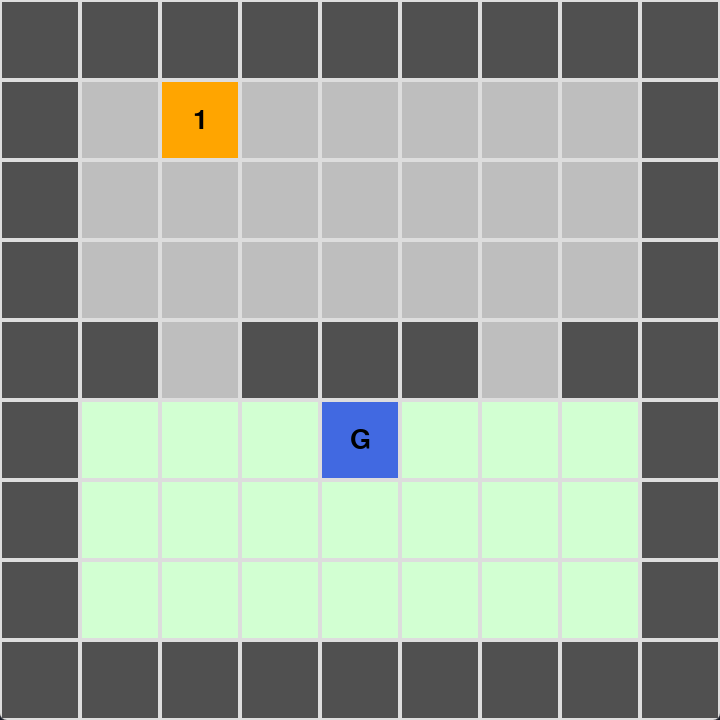}
  }
  \hfill
  \subfloat[\env{Coins}]{
    \PngGraphic[0.21\linewidth]{0 0 540 540}{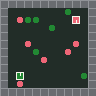}
  }
  \hfill
  \subfloat[\env{Territory}]{
    \PngGraphic[0.21\linewidth]{0 0 655 655}{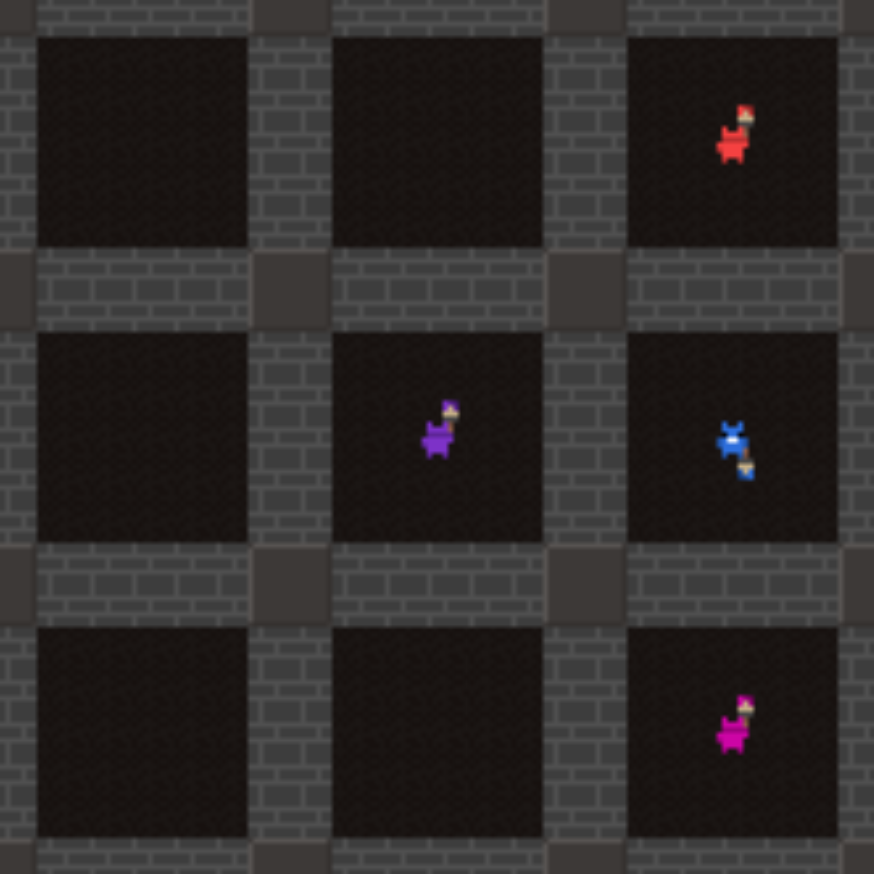}
  }
  \hfill
  \subfloat[\env{Commons Harvest}]{
    \PngGraphic[0.21\linewidth]{0 0 657 490}{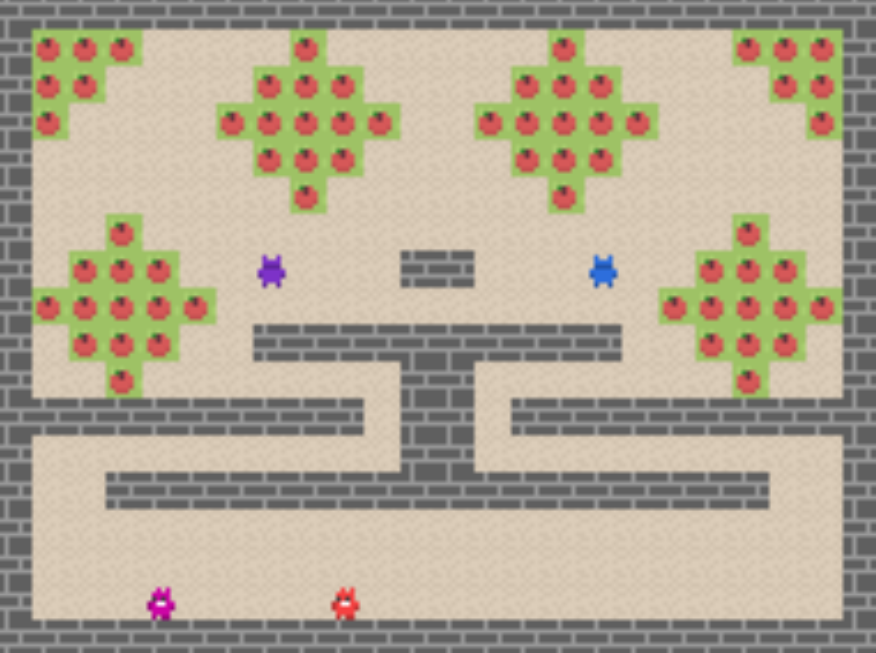}
  }
  \caption{The four spatial environment families used in our experiments.}
  \Description{Spatial layouts for Pursuit-Evasion, Coins, Territory, and Commons Harvest.}
  \label{fig:gridworld_visualisations}
\end{figure*}

Figure~\ref{fig:gridworld_visualisations} shows the spatial environments.

\paragraph{Observations.} Agents observe structured symbolic grids: each grid records, cell by cell, the identity of one layer of the scene (terrain, avatars and their orientations, coins, apples, claims, beams, etc.), alongside scalar side channels. The grids are flattened and scaled to $[0,1]$, and the index of the agent's reward-automaton state is appended alongside spatial coordinates.

\subsection{Stochastic Pursuit-Evasion}
We provide a stochastic variation of the \env{Pursuit Evasion} environment in the literature~\cite{weintraub2020introduction}, which is also referred to as \env{Guard Gridworld}. The slippery gridworld is instantiated with one guard, $\texttt{player}_0$, and either one or two intruders. In the classic task, the guard satisfies $\Always\neg\mathtt{goal\_reached}$, whereas every intruder satisfies $\Eventually\mathtt{goal\_reached}\land\Always\neg\mathtt{collision\_ever}$. In the capture task, the guard satisfies $\Eventually\mathtt{guard\_intruder\_collision}$, whereas each intruder satisfies $\Always\neg\mathtt{guard\_intruder\_collision}\land\Always\neg\mathtt{collision\_ever}$. Each move is subject to a role-specific slip probability, so the environment remains stochastic under a fixed joint policy.

\paragraph*{Handcrafted rewards.}
Let \(D_t\) denote episode completion after step \(t\), and let \(G_t\), \(C_t\), \(K_t\), and \(E_t\) denote the Boolean labels \(\mathtt{goal\_reached}\), \(\mathtt{collision}\), \(\mathtt{guard\_intruder\_collision}\), and \(\mathtt{collision\_ever}\), respectively. For the classic benchmark, the guard reward is
\[
\tilde r_{t,\mathrm{classic}}^0 =
\begin{cases}
-1, & G_t,\\
1, & D_t \land \neg G_t,\\
0, & \text{otherwise},
\end{cases}
\]
and each intruder \(i>0\) receives
\[
\tilde r_{t,\mathrm{classic}}^i = \mathbf{1}[G_t]-\mathbf{1}[C_t].
\]
For the capture benchmark, the guard reward is
\[
\tilde r_{t,\mathrm{capture}}^0 =
\begin{cases}
1, & K_t,\\
-0.5, & D_t \land \neg K_t,\\
0, & \text{otherwise},
\end{cases}
\]
and each intruder \(i>0\) receives
\[
\tilde r_{t,\mathrm{capture}}^i =
\begin{cases}
-1, & K_t \lor E_t,\\
1, & D_t \land \neg K_t \land \neg E_t,\\
0, & \text{otherwise}.
\end{cases}
\]

\subsection{Coins}
This benchmark uses a two-player adaptation of the \env{Melting Pot} \texttt{coins} substrate with a $256$-step horizon and a coin regrowth rate of $0.005$. Each player owns one coin colour. Writing $m_i$ and $w_i$ for the indexed propositions that player $i$ collected a matching or mismatching coin, respectively, its Own Coin First objective is
\[
\PathObj_i = \neg w_i\ \Until\ m_i,
\qquad i\in\{0,1\}.
\]
The strong-until objective requires a matching collection to occur and forbids a mismatching collection beforehand. Coin consumption and regrowth create strategic externalities even though the formulae are symmetric.

\paragraph*{Handcrafted reward.}
On each coin collection, the collector receives
\[
\tilde r_t^i = \mathbf 1[m_i]-\mathbf 1[w_i].
\]
The other agent receives no reward for that collection.

\subsection{Matrix Stag Hunt}
This benchmark uses a repeated two-player matrix-game version of the classic \env{Stag Hunt}. At each of $20$ stages, both agents choose between \(\mathtt{Hare}\) and \(\mathtt{Stag}\). The public state records the previous joint action and whether the agents chose different actions at that stage. The base environment truncates only at the horizon, and the default stage payoffs are shown in Figure~\ref{fig:stag_hunt_payoff}: mutual \(\mathtt{Stag}\) is payoff-dominant, mutual \(\mathtt{Hare}\) is the safer coordinated outcome, and unilateral \(\mathtt{Stag}\) receives the failed-hunt payoff.

\begin{figure}[tbp]
  \centering
  \begin{tikzpicture}[
    font=\sffamily,
    payoff/.style={minimum width=2.6cm, minimum height=1.6cm, align=center},
    label/.style={font=\sffamily\bfseries},
    grid/.style={draw=black!18, line width=0.6pt},
    box/.style={rounded corners=6pt, draw=black!55, line width=0.9pt}
]

\matrix (P) [
    matrix of nodes,
    nodes in empty cells,
    row sep=-\pgflinewidth,
    column sep=-\pgflinewidth,
]{
    |[payoff]| $3,3$ & |[payoff]| $3,0$ \\
    |[payoff]| $0,3$ & |[payoff]| $4,4$ \\
};

\begin{scope}[on background layer]
    \clip[rounded corners=6pt]
        (P-1-1.north west)
        rectangle
        (P-2-2.south east);

    \fill[blue!6]
        (P-1-1.north west) rectangle (P-1-1.south east);
    \fill[green!8]
        (P-2-2.north west) rectangle (P-2-2.south east);

    \fill[black!2]
        (P-1-2.north west) rectangle (P-1-2.south east);
    \fill[black!2]
        (P-2-1.north west) rectangle (P-2-1.south east);
\end{scope}

\draw[box]
    (P-1-1.north west)
    rectangle
    (P-2-2.south east);

\draw[grid] (P-1-1.south west) -- (P-1-2.south east);
\draw[grid] (P-1-1.north east) -- (P-2-1.south east);

\node[label] (B1) at ($ (P-1-1.north) + (0,11pt) $) {$\mathbf{Hare}$};
\node[label] (B2) at ($ (P-1-2.north) + (0,11pt) $) {$\mathbf{Stag}$};

\node[label] at ($ (B1.north)!0.5!(B2.north) + (0,11pt) $) {Player 2};

\node[label] at ($ (P-1-1.west) + (-13pt,0) $) {$\mathbf{Hare}$};
\node[label] at ($ (P-2-1.west) + (-13pt,0) $) {$\mathbf{Stag}$};

\node[label, rotate=90] at ($ (P-1-1.west)!0.5!(P-2-1.west) + (-36pt,0) $) {Player 1};

\end{tikzpicture}
  \caption{Stage payoff matrix for \env{Matrix Stag Hunt}. The ordered pair in each cell gives the payoff to Player 1 and Player 2, respectively.}
  \Description{Rows and columns are Hare and Stag for Players 1 and 2. The payoffs are (3,3), (3,0), (0,3), and (4,4), respectively.}
  \label{fig:stag_hunt_payoff}
\end{figure}

Both agents have the Sustained Stag objective and receive the same terminal Boolean reward for satisfying
\[
\begin{gathered}
\PathObj_i =
\Eventually\!\big(
  \mathtt{both\_stag}
  \land
  \Next(\mathtt{both\_stag}\land\Always\,\neg\mathtt{unmirrored})
\big),
\\
i\in\{0,1\}.
\end{gathered}
\]
The atoms \(\mathtt{both\_stag}\) and \(\mathtt{unmirrored}\) are derived from the previous stage outcome. Thus, the objective allows failed coordination attempts before the convention is established, but requires two consecutive \(\mathtt{Stag}\)-\(\mathtt{Stag}\) outcomes after which the agents choose the same action at every remaining stage. Earlier pairs followed by a mismatch are permitted, as are later coordinated \(\mathtt{Hare}\)-\(\mathtt{Hare}\) outcomes.

\paragraph*{Handcrafted reward.}
We use the native stage payoff as the handcrafted reward for \env{Stag Hunt}:
\[
\tilde r_t^i = r_{\mathrm{env},t}^i =
\begin{cases}
3, & a_t^0=a_t^1=\mathtt{Hare},\\
4, & a_t^0=a_t^1=\mathtt{Stag},\\
0, & a_t^i=\mathtt{Stag}\land a_t^{-i}=\mathtt{Hare},\\
3, & a_t^i=\mathtt{Hare}\land a_t^{-i}=\mathtt{Stag}.
\end{cases}
\]

\subsection{Commons Harvest}
This benchmark uses the Melting Pot substrate \texttt{commons\_\allowbreak harvest\_\_\allowbreak partnership}, where four harvesters compete for apples in a shared field and can zap one another. Each player has the \env{Safe Harvest} objective:
\begin{multline*}
\PathObj_i = \Always\neg\mathtt{player\_i\_zapped}
\land\Eventually\mathtt{player\_i\_apple\_collected},\\
i\in\{0,1,2,3\}.
\end{multline*}

\paragraph*{Handcrafted reward.}
Each agent receives $+1$ for the first apple it collects before being zapped. The first zap gives $-1$ and prevents further apple rewards, cancelling any earlier apple reward. Another player's apple collection or zap does not affect this reward. We obtain apple labels from the collector's positive native payoff and zap labels from the substrate's per-target zap events.

\subsection{Territory}
This benchmark uses the \texttt{territory\_\_rooms} substrate from Melting Pot. In the two-player \env{Role Claim} task, players race to claim a resource first. Let $c_i$ denote the ownership-change event \(\mathtt{player\_i\_resource\_claimed}\). The objectives are
\[
\PathObj_0 = \neg c_1\ \Until\ (c_0\land\neg c_1),
\qquad
\PathObj_1 = \neg c_0\ \Until\ (c_1\land\neg c_0).
\]
Thus each player tries to make the first sole claim. Simultaneous first claims are ties, and later claims cannot reverse the temporal outcome.

\paragraph*{Handcrafted reward.}
At the first step on which a claim occurs, the sole claimant receives $+1$ and the opponent $-1$. A simultaneous first claim gives both players $0$. Later claims give no reward.

\subsection{Handcrafted Baselines}\label{appsec:handcrafted_algorithms}
We train each handcrafted baseline (\algo{hc-IPPO}, \algo{hc-PR2-IPPO}, \algo{hc-IQL}, \algo{hc-PR2-IQL} and \algo{hc-Nash-Q}) on the rewards above, using the corresponding learner's hyperparameters (Appendix~\ref{appsec:hyperparams}) and five seeds $\{0,\dots,4\}$, without counterfactual replay. The $\PLTLf$ monitors record objective satisfaction throughout training, while the learner receives the handcrafted reward.

We evaluate each fixed profile on $1000$ new episodes per seed, reporting objective satisfaction and Hoeffding intervals at overall $\delta=0.05$, as for the objective-trained profiles. Each learning-curve panel marks the strongest handcrafted baseline's final satisfaction with a diamond, using the same aggregation over agents as the curves. We place the diamond at the last training episode across that baseline's seeds, capped at the end of the plotted curves. We also compute its \ApxNashConv from improvements in temporal satisfaction (Appendix~\ref{appsec:deviation_search}).

\section{Technical Details}\label{appsec:technical_details}

\subsection{Counterfactual Experience Replay}\label{appsec:cer}
We apply CER separately to each agent's deterministic reward automaton. We append the automaton-state identifier to the agent's observation and store the successor-state propositions at each step. Relabelling keeps the environment rollout fixed and recomputes the automaton states from alternative start states. We keep the original sample and discard duplicate relabellings. Our procedure adapts~\citet{voloshin2023eventual} to finite-trace $\LTLf$. In particular, our deterministic finite automata require no $\varepsilon$-jump transitions used with B\"uchi automata in $\LTL$, and rewards depend on acceptance at episode termination instead of repeated visits to B\"uchi accepting states.

Write $x_t=(o_t,m_t)$ for the stored observation, with $o_t$ the environment features and $m_t$ the appended monitor identifier. Relabelling replaces that identifier, giving $\tilde x_t=(o_t,\tilde m_t)$. In Algorithm~\ref{alg:cer_relabel}, $d_t$ denotes episode completion under the environment termination/truncation rule or wrapper horizon, independently of the relabelled monitor state. A rollout-chunk boundary alone does not set $d_t=1$. Acceptance is evaluated at the post-transition monitor state $m^+$ before reset, while the stored next observation at an episode boundary uses the reset monitor state.

\paragraph*{Actor-critic integration.}
\algo{IPPO-CER} and \algo{PR2-IPPO-CER} combine factual and relabelled chunks, recomputing rollout statistics with fixed pre-update parameters. The implementation retains factual behaviour log-probabilities for conditional importance correction. Write $\rho_t=\pi_i^{\mathrm{old}}(a_t^i\mid\tilde x_t)/\mu_i(a_t^i\mid x_t)$, where $\mu_i$ generated the factual action. In \texttt{corrected} mode, $\min(1,\rho_t)$ weights the clipped PPO surrogate separately from PPO's update-ratio clipping. IPPO also uses $\lambda$-V-trace-style targets~\cite{espeholt2018impala}, with $\lambda=0.95$ and action and trace caps one. PR2-IPPO retains its response-aware advantage and one-step joint-action critic. In \texttt{critic\_only} mode, actor updates, entropy and advantage normalisation use factual samples, while relabelled samples update the critic. Each run records its replay mode and importance-cap settings. Evaluation always uses the original temporal objectives.

\begin{algorithm}[t]
\caption{Relabelling a rollout chunk for one agent}
\label{alg:cer_relabel}
\KwIn{agent reward automaton \(A_i=(M_i,m^0_i,\Alphabet,\delta_i,F_i)\), rollout chunk \((x_0,a_0,r_0,\ell_0,d_0,\allowbreak\dots,x_{T-1},a_{T-1},r_{T-1},\ell_{T-1},d_{T-1},x_T)\), where \(x_t=(o_t,m_t)\) and \(\ell_t=\mathcal L(s_{t+1})\)}
\KwOut{set \(\widetilde{\mathcal C}\) of relabelled chunks}
Initialise \(\widetilde{\mathcal C}\) with the factual chunk\;
\For{each counterfactual start state \(\tilde m_0 \in M_i\)}{
    Set \(m \leftarrow \tilde m_0\)\;
    \For{\(t=0,\dots,T-1\)}{
        Set relabelled observation \(\tilde x_t\leftarrow(o_t,m)\)\;
        Compute pre-reset successor \(m^+ \leftarrow \delta_i(m,\ell_t)\)\;
        Set terminal reward \(\tilde r_t \leftarrow \mathbf{1}[d_t]f_i(\mathbf{1}[m^+\in F_i])\)\;
        Set \(m\leftarrow\tilde m_0\) if \(d_t\), else \(m^+\)\;
        Set stored next observation \(\tilde x_{t+1}\leftarrow(o_{t+1},m)\)\;
    }
    Add \((\tilde x_0,a_0,\tilde r_0,d_0,\dots,\tilde x_T)\) to \(\widetilde{\mathcal C}\) if non-duplicate\;
}
\Return{\(\widetilde{\mathcal C}\)}
\end{algorithm}

\begin{algorithm}[t]
\caption{CER integration for replay-based learners}
\label{alg:cer_q}
\KwIn{replay buffer \(\mathcal D\), factual rollout chunk \(c\)}
Compute \(\widetilde{\mathcal C}\leftarrow\) \textsc{RelabelChunk}\((c)\)\;
\For{each \(\tilde c \in \widetilde{\mathcal C}\)}{
    Insert \(\tilde c\) into \(\mathcal D\)\;
}
\Return{\(\mathcal D\)}
\end{algorithm}

\begin{algorithm}[t]
\caption{CER integration for actor-critic learners}
\label{alg:cer_ac}
\KwIn{factual rollout batch \(\mathcal B\), fixed pre-update parameters}
\For{each agent \(i\)}{
    \For{each factual rollout chunk \(c\) of agent \(i\)}{
        Compute \(\widetilde{\mathcal C}\leftarrow\) \textsc{RelabelChunk}\((c)\)\;
        Add all chunks in \(\widetilde{\mathcal C}\) to agent \(i\)'s update batch\;
    }
    Recompute the required rollout statistics using the fixed pre-update parameters\;
}
\Return{updated batch \(\mathcal B\)}
\end{algorithm}

\paragraph*{Nash-Q integration}
For \algo{NQL}, we apply CER online: we relabel each observed joint transition and perform a tabular Nash-Q backup on the original transition and each relabelled variant before the next environment step.

\subsection{\algo{PR2}} \label{appsec:PR2}
Our discrete-action variants of probabilistic recursive reasoning (\algo{PR2}) are inspired by~\citet{wen2019pr2} and use the \algo{IQL} and \algo{IPPO} training interfaces. For each agent \(i\), we encode the opponents' actions as a single joint action \(a^{-i}\in\mathcal A^{-i}=\prod_{j\ne i}\mathcal A^j\). The agent has a joint-action critic
\[
Q_i(x_i)\in\mathbb R^{|\mathcal A^i|\times |\mathcal A^{-i}|},
\]
where \(x_i\) is the agent observation, including the reward-automaton state when monitoring is enabled. The opponent-response distribution is induced directly from this critic,
\[
\rho_i(a^{-i}\mid x_i,a^i)
=
\operatorname{softmax}_{a^{-i}} Q_i(x_i,a^i,a^{-i}),
\]
and the value of the agent's action under this response model is
\[
\bar Q_i(x_i,a^i)
=
\sum_{a^{-i}\in\mathcal A^{-i}}
\rho_i(a^{-i}\mid x_i,a^i) Q_i(x_i,a^i,a^{-i}).
\]

We use unit temperature for $\rho_i$. This response model comes from agent $i$'s own critic and need not match the opponents' conditional action distribution. The PR2-IPPO advantage below uses this model.

\paragraph*{\algo{PR2-IQL}}
Action selection is \(\varepsilon\)-greedy with respect to \(\bar Q_i\). Replay stores the agent's own action, the observed encoded opponent joint action, and the terminal automaton reward. The update regresses only the observed joint-action entry:
\[
\begin{aligned}
\mathcal L_i=\Big(&Q_i(x_i,a^i,a^{-i}_{\mathrm{obs}})-r_i\\
&-\gamma(1-d)\max_{b^i\in\mathcal A^i}\bar Q_i^{\mathrm{targ}}(x'_i,b^i)\Big)^2.
\end{aligned}
\]

\paragraph*{\algo{PR2-IPPO}}
The actor outputs logits for \(\pi_i(\cdot\mid x_i)\), while the critic has the same joint-action shape as above. The state value used by the policy update is
\[
V_i(x_i)=\sum_{a^i\in\mathcal A^i}\pi_i(a^i\mid x_i)\bar Q_i(x_i,a^i),
\]
and the rollout advantage, fixed before optimisation, is
\[
\begin{aligned}
\widehat A_i^\rho(x_i,a^i)=\operatorname{stopgrad}\!\Big[&\bar Q_i^{\mathrm{old}}(x_i,a^i)\\
&-\sum_b\pi_i^{\mathrm{old}}(b\mid x_i)\bar Q_i^{\mathrm{old}}(x_i,b)\Big].
\end{aligned}
\]
Here ``old'' denotes the pre-update parameters. The clipped PPO objective uses this advantage after normalisation over each minibatch, while the critic regresses
\[
Q_i(x_i,a^i,a^{-i}_{\mathrm{obs}})
\quad\text{towards}\quad
r_i+\gamma(1-d)V_i^{\mathrm{old}}(x'_i).
\]
The \texttt{-CER} variants use the relabelling procedure in Appendix~\ref{appsec:cer}. For \algo{PR2-IPPO}, the log-probabilities and advantages are recomputed after relabelling.

\subsection{Unilateral Improvement Search}\label{appsec:deviation_search}
For each agent, we initialise its corresponding learner (\algo{IPPO}, \algo{PR2-IPPO}, \algo{IQL}, \algo{PR2-IQL}, or \algo{Nash-Q}) from the evaluated profile and train that agent with its opponents fixed. The additional budget is the product of the BR iteration and batch-step counts in Table~\ref{tab:hyperparams_eval}. Response searches preserve the learner's CER settings. For \texttt{hc-} profiles, the deviating agent instead trains on its temporal objective with CER (Appendix~\ref{appsec:handcrafted_algorithms}).

We evaluate the candidate and original policies with paired episode seeds and measure their temporal satisfaction $u_i$, including for handcrafted profiles. The score sums the unilateral improvements found within the given budget and learner family.

\paragraph{Frozen-opponent operator.}
The Nash-Q trainer has a dedicated single-agent training mode. It obtains opponents' policies from a frozen copy of the original joint Q-tables and replaces the Nash continuation value by $\max_{a_i}\mathbb E_{a_{-i}\sim\sigma_{-i}}Q_i(x',a_i,a_{-i})$. Candidate evaluation likewise samples opponents from that frozen copy. With full product state and the temporal utility, the corresponding undiscounted recursion would be
\[
B_i(x)=\max_{a_i}\sum_{a_{-i}}\sigma_{-i}(a_{-i}\mid x)\sum_{x'}\tilde{\mathcal P}(x'\mid x,a_i,a_{-i})B_i(x'),
\]
with $B_i(x)=R_i(x)$ at termination and sums and maxima over legal actions. In our experiments, the search uses the learner's discount factor, while the score measures improvements in the undiscounted temporal utility $u_i$.

\paragraph{Aggregation and sampling.}
Writing $\Delta_i=\sup_{\sigma_i'}u_i(s_0,(\sigma_i',\sigma_{-i}))-u_i(s_0,\StrategyProfile)$, exact NashConv is $\sum_i\Delta_i$, whereas exploitability is $\max_i\Delta_i$. Retaining the original strategy is admissible, so every $\Delta_i\ge0$ and hence $\Exploit\le\sum_i\Delta_i\le n\Exploit$. Two gains of $0.03$ therefore give NashConv $0.06$ while satisfying $0.05$-NE. Finite search can miss deviations, while sampling noise and clipping can inflate either aggregation, so the estimated score is not a statistical lower bound on exploitability.

For paired success differences in $[-1,1]$, a simultaneous two-sided Hoeffding half-width over $m$ agents is $\sqrt{2\ln(2m/\delta)/N}$. At $N=32$ and $\delta=0.05$, it is approximately $0.52$ for two agents and $0.56$ for four, substantially exceeding $\varepsilon=0.05$. Small scores therefore do not establish strategic stability.

\subsection{Proof of Theorem~\ref{thm:coherence}}\label{appsec:proof_coherence}
\coherencethm*

\begin{proof}
By construction, $\mathcal R_\varphi(\pi)=1$ iff $\mathcal M,\pi\models \varphi$, and otherwise $\mathcal R_\varphi(\pi)=0$. Therefore
\[
\mathcal R_\varphi(\pi)=\mathbf{1}\big[\mathcal M,\pi\models \varphi\big].
\]
Taking expectation under $\Pr^{\StrategyProfile}_{\mathcal M,s}$ yields
\[
\begin{aligned}
\mathbb E_{\pi\sim\Pr^{\StrategyProfile}_{\mathcal M,s}}\!\left[\mathcal R_\varphi(\pi)\right]
&=
\mathbb E_{\pi\sim\Pr^{\StrategyProfile}_{\mathcal M,s}}\!\left[\mathbf{1}\big[\mathcal M,\pi\models \varphi\big]\right]\\
&=
\Pr^{\StrategyProfile}_{\mathcal M,s}\!\big(\{\pi\mid \mathcal M,\pi\models \varphi\}\big)
=
\Pr^{\StrategyProfile}_{\mathcal M,s}(\varphi).
\end{aligned}
\]
\end{proof}

\Needspace{10\baselineskip}
\subsection{Proof of Theorem~\ref{thm:markov_sufficiency_product}}\label{appsec:proof_markov_sufficiency}

\markovsufficiencythm*

We adapt the equivalence argument of~\cite{brafman2018ltlf,bacchus1996rewarding} to stochastic games.

\begin{definition}[Non-Markovian Reward Stochastic Game]\label{def:nmrsg}
A \emph{non-Markovian reward stochastic game} (NMRSG) $\mathcal N$ inherits the states, initial state, legal actions, transition kernel, horizon $H$, and stopping rule $\kappa$ of $\mathcal M$. Rewards are zero before termination, with undiscounted terminal payoffs $\mathcal R^i(h_T)$ depending on the complete history $h_T=s_0\bm a_0\cdots s_T$.
\end{definition}

\begin{definition}[Equivalence of Reward Games]\label{def:equivalence_nmrsg}
An NMRSG $\mathcal N$ is \emph{equivalent} to an extended Markovian game $\widehat{\mathcal G}$ with zero non-terminal rewards and undiscounted terminal payoffs $\widehat R_i$ if there are maps $\rho:\widehat{\mathcal S}\to\mathcal S$ and $\iota:\mathcal S\to\widehat{\mathcal S}$ such that:
\begin{enumerate}
\item $\rho(\iota(s))=s$ for every $s$, and agent $i$'s legal actions at each non-terminal $x$ are $d^i(\rho(x))$.
\item For every non-terminal $x$ and legal joint action $\bm a$, $\rho$ maps positive-probability extended successors bijectively onto original successors, preserving probabilities:
\[
\widehat{\mathcal P}(x'\mid x,\bm a)=\mathcal P(\rho(x')\mid\rho(x),\bm a).
\]
\item For corresponding feasible histories $h_t=s_0\bm a_0\cdots s_t$ and $\hat h_t=x_0\bm a_0\cdots x_t$, with $x_0=\iota(s_0)$ and $\rho(x_k)=s_k$, $x_t$ is terminal iff $\kappa(s_t,t)=1$, and $\widehat R_i(x_T)=\mathcal R^i(h_T)$ for every agent $i$ at termination.
\end{enumerate}
\end{definition}
In Definition~\ref{def:equivalence_nmrsg}, $\iota$ initialises the additional state and $\rho$ projects back to the environment, preserving dynamics, termination, and payoffs.

\begin{lemma}[Strategy-Profile Equivalence]\label{lemma:markovian_equivalence}
For equivalent games, feasible histories from $s_0$ lift uniquely from $\iota(s_0)$. Memoryless extended strategies correspond to original strategies depending only on the final lifted state, up to choices at unreachable states. With independent randomisation across agents, corresponding profiles preserve complete-history probabilities, expected terminal payoffs, and unilateral deviation gains within these strategy classes.
\end{lemma}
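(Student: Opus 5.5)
The plan is to construct the history-lifting map explicitly, push strategies back and forth along it, and then compare the induced path measures by induction on time. The lift is built step by step. Given a feasible history $h_t=s_0\bm a_0\cdots s_t$ from $s_0$, set $x_0=\iota(s_0)$. Suppose $x_k$ has been built with $\rho(x_k)=s_k$ and $x_k$ non-terminal. Then $\bm a_k\in d^{\cdot}(s_k)$ is legal at $x_k$ by condition~1 of Definition~\ref{def:equivalence_nmrsg}. Since $\mathcal P(s_{k+1}\mid s_k,\bm a_k)>0$, condition~2 gives exactly one positive-probability successor $x_{k+1}$ with $\rho(x_{k+1})=s_{k+1}$. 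Non-termination of the prefix is consistent by condition~3. Induction on $t$ then yields existence and uniqueness of the lifted history $\hat h_t$; write $\lambda(h_t)\defeq x_t$ for its final state. Conversely, $\rho$ applied componentwise maps feasible extended histories from $\iota(s_0)$ to feasible original histories. These two maps are mutually inverse, so we have a bijection between feasible histories on the two sides.

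Next, the strategy correspondence. A memoryless extended strategy $\hat\sigma_i$ induces the original strategy $\sigma_i(h_t)\defeq\hat\sigma_i(\lambda(h_t))$, which depends only on the final lifted state. Conversely, a strategy depending only on $\lambda(h_t)$ defines $\hat\sigma_i$ on every reachable extended state. At unreachable states it can be chosen arbitrarily, since those choices never affect the induced measure. This establishes the stated correspondence up to unreachable choices.

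For preservation of probabilities, I will prove by induction on $t$ that $\Pr^{\StrategyProfile}(h_t)=\widehat{\Pr}^{\hat{\StrategyProfile}}(\hat h_t)$ for corresponding profiles. The base case is $t=0$, with mass $1$ on $s_0$ and on $\iota(s_0)$. For the inductive step, each one-step factor is a product over agents of $\sigma_j(\bm a^j_t\mid h_t)=\hat\sigma_j(\bm a^j_t\mid x_t)$, using independence across agents, times $\mathcal P(s_{t+1}\mid s_t,\bm a_t)=\widehat{\mathcal P}(x_{t+1}\mid x_t,\bm a_t)$ by condition~2. Condition~3 ensures that termination occurs at the same index $T$, so complete histories correspond and carry equal mass. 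The equality $\widehat R_i(x_T)=\mathcal R^i(h_T)$ then gives equal expected terminal payoffs, since both expectations are finite sums over corresponding complete histories.

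Finally, deviation gains. A unilateral deviation $\hat\sigma_i'$ in the memoryless extended class corresponds to $\sigma_i'$ in the original class, and vice versa. The profile $(\hat\sigma_i',\hat\sigma_{-i})$ corresponds to $(\sigma_i',\sigma_{-i})$, so the payoffs on both sides agree, and hence the differences and suprema defining the gains agree.

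The main obstacle is the uniqueness step of the lift. It needs the bijectivity in condition~2 to hold only among positive-probability successors, so zero-probability extended states must be excluded carefully. It also needs a check that the "up to unreachable states" clause cannot change the payoffs. The rest is bookkeeping.
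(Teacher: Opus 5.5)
Your proposal is correct and follows essentially the same route as the paper's proof. Both build the lift recursively from condition~2 with $\rho$ as its inverse, transfer strategies by $\sigma_i(h_t)=\hat\sigma_i(x_t)$ with arbitrary legal choices at unreachable states, and prove equality of history probabilities by induction using independent randomisation and condition~2. Both then use condition~3 for stopping and payoffs, and preserve deviation gains because the correspondence acts agent by agent.
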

\begin{proof}
Condition~2 constructs the lift recursively, with projection as its inverse, while conditions~1 and~3 preserve legal actions and stopping. For corresponding histories ending at $x_t$, set $\sigma_i(h_t)=\hat\sigma_i(x_t)$. This defines both directions of the correspondence, since original strategies depend only on $x_t$. Choose arbitrary legal distributions at unreachable non-terminal states. Independent randomisation gives equal joint-action probabilities, so induction using condition~2 gives equal history probabilities. Condition~3 then gives equal expected payoffs. Since the correspondence acts agent by agent, it also preserves unilateral deviations and their gains.
\end{proof}

\begin{proof}[Proof of Theorem~\ref{thm:markov_sufficiency_product}]
Consider the NMRSG of Definition~\ref{def:nmrsg} with terminal payoffs
\[
\mathcal R^i(h_T)\defeq f_i\big(\mathbf 1[\mathcal M,h_T\models\PathObj_i]\big).
\]
Their expectations equal $u_i(s_0,\StrategyProfile)$ because $f_i$ is affine. To establish equivalence with $\widehat{\mathcal M}_H$, define
\[
\begin{aligned}
\iota(s)&\defeq\big(s,\delta_c(m_0^c,\mathcal L(s)),0\big),\\
\rho(s,\bm m,t)&\defeq s.
\end{aligned}
\]
Condition~1 is immediate. By Definition~\ref{def:product_stochastic_game}, each original successor $s'$ gives the unique product successor
\[
\big(s',\delta_c(\bm m,\mathcal L(s')),t+1\big)
\]
with probability $\mathcal P(s'\mid s,\bm a)$, establishing condition~2. Both games terminate at the same $T$ under $\kappa$. The monitor reads exactly the labels through $s_T$, so
\[
\begin{aligned}
R_i(s_T,\bm m_T,T)
&=f_i\big(\mathbf 1[\bm m_T\models F_i]\big)\\
&=f_i\big(\mathbf 1[\mathcal M,h_T\models\PathObj_i]\big)\\
&=\mathcal R^i(h_T),
\end{aligned}
\]
which establishes condition~3.

The final lifted state is $(s_t,\bm m_t,t)$, so the original strategy class in Lemma~\ref{lemma:markovian_equivalence} is exactly $\Sigma$. The lemma therefore gives both directions of the memoryless correspondence and preserves utilities. Acceptance indicators also agree on corresponding complete histories, so equality of their probabilities gives equality of every satisfaction probability $p_i$.
\end{proof}

\subsection{Proof of Theorem~\ref{thm:terminal_invariance}}\label{appsec:proof_terminal_invariance}
\terminalinvariancethm*

\begin{proof}
Since $r_t=0$ for all $t<H$ and $r_H=\mathbf 1[\varphi]$, we have
\[
\sum_{t=0}^{H}\gamma^t r_t=\gamma^H r_H=\gamma^H\mathbf 1[\varphi].
\]
Taking expectation under $\StrategyProfile$ gives
\[
\mathbb{E}_{\StrategyProfile}\!\left[\sum_{t=0}^{H}\gamma^t r_t\right]
=
\gamma^H\mathbb{E}_{\StrategyProfile}[\mathbf 1[\varphi]]
=
\gamma^H\Pr_{\StrategyProfile}(\varphi).
\]
Because $\gamma^H>0$ for $\gamma\in(0,1)$ and fixed $H$, multiplying by the constant $\gamma^H$ does not change the set of maximisers. Hence, any strategy profile maximising discounted return also maximises $\Pr_{\StrategyProfile}(\varphi)$. The same positive scaling applies to every unilateral gain, and commutes with their supremum and maximum, giving the stated equilibrium consequences.
\end{proof}

\subsection{Proof of Proposition~\ref{prop:early_term_discount}}\label{appsec:proof_early_term_discount}
\earlytermdiscountprop*

\begin{proof}
By assumption, $r_t=0$ for all $t<T$ and $r_T=\mathbf 1[A]$. Therefore
\[
G_\gamma=\sum_{t=0}^{T}\gamma^t r_t
=
\sum_{t=0}^{T-1}\gamma^t\cdot 0+\gamma^T r_T
=
\gamma^T\mathbf 1[A].
\]
Taking expectation under the trajectory distribution induced by $\StrategyProfile$ yields
\[
\mathbb E_{\StrategyProfile}[G_\gamma]
=
\mathbb E_{\StrategyProfile}\!\big[\gamma^T\mathbf 1[A]\big].
\]
\end{proof}

\subsection{Proof of Corollary~\ref{cor:early_term_bounds}}\label{appsec:proof_early_term_bounds}
\earlytermboundscor*

\begin{proof}
Under the assumptions of Proposition~\ref{prop:early_term_discount}, on the event $A$ we have $T\le H$, and therefore $\gamma^T\in[\gamma^H,1]$. On the complement event $\neg A$, the return is $0$. Hence, pointwise,
\[
\gamma^H\mathbf 1[A]\le \gamma^T\mathbf 1[A]\le \mathbf 1[A].
\]
Taking expectation under $\StrategyProfile$ and using Proposition~\ref{prop:early_term_discount} gives
\[
\gamma^H\Pr_{\StrategyProfile}(A)
\le
\mathbb E_{\StrategyProfile}[G_\gamma]
\le
\Pr_{\StrategyProfile}(A).
\]
\end{proof}

\subsection{Proof of Theorem~\ref{thm:hoeffding_ci}}\label{appsec:proof_hoeffding_ci}
\hoeffdingcithm*

\begin{proof}
The variables $X_1,\dots,X_n$ are i.i.d.\ and take values in $[0,1]$ with mean $p$. By Hoeffding's inequality, for any $t>0$,
\[
\Pr(\hat p-p\ge t)\le e^{-2nt^2}
\qquad\text{and}\qquad
\Pr(p-\hat p\ge t)\le e^{-2nt^2}.
\]
Applying a union bound yields
\[
\Pr(|\hat p-p|\ge t)\le 2e^{-2nt^2}.
\]
Now set
\[
t=\rho_n(\delta)=\sqrt{\frac{\ln(2/\delta)}{2n}}.
\]
Then $2e^{-2nt^2}=\delta$, so
\[
\Pr\big(|\hat p-p|\le \rho_n(\delta)\big)\ge 1-\delta.
\]
Whenever $|\hat p-p|\le \rho_n(\delta)$ holds, we also have
\[
p\in [\hat p-\rho_n(\delta),\hat p+\rho_n(\delta)].
\]
Since $p\in[0,1]$, clipping the endpoints to $[0,1]$ preserves this containment, so
\[
\Pr\!\big(p\in[L_n,U_n]\big)\ge 1-\delta.
\]
\end{proof}

\subsection{Proof of Corollary~\ref{cor:gamma_epsilon_discount_bias}} \label{appsec:proof_gamma_epsilon_discount_bias}
\gammaepsilondiscountbiascor*

\begin{proof}
By Corollary~\ref{cor:early_term_bounds},
\[
0
\le
\Pr_{\StrategyProfile}(A)-\mathbb E_{\StrategyProfile}[G_\gamma]
\le
(1-\gamma^H)\Pr_{\StrategyProfile}(A)
\le
(1-\gamma^H),
\]
since $\Pr_{\StrategyProfile}(A)\le 1$. If $\gamma\ge(1-\xi)^{1/H}$, then $\gamma^H\ge 1-\xi$, and therefore $1-\gamma^H\le\xi$. Combining the inequalities gives the result.
\end{proof}

The bound is tight in that if $A$ occurs with probability $1$ and \(T=H\) almost surely, then
$
\Pr_{\StrategyProfile}(A)-\mathbb E_{\StrategyProfile}[G_\gamma]
=
1-\gamma^H.
$
Hence no smaller \(\gamma\) can guarantee bias at most \(\xi\) uniformly (over all possible acceptance-time distributions).

\subsection{Proof of Corollary~\ref{cor:threshold_cert}}\label{appsec:proof_threshold_cert}
\thresholdcertcor*

\begin{proof}
Let $E=\{p\in[L_n,U_n]\}$. By Theorem~\ref{thm:hoeffding_ci}, we have $\Pr(E)\ge 1-\delta$. On the event $E$, each claim follows directly from the appropriate endpoint comparison:
\[
L_n>q \;\Rightarrow\; p\ge L_n>q,
\]
\[
U_n<q \;\Rightarrow\; p\le U_n<q,
\]
\[
L_n\ge q \;\Rightarrow\; p\ge L_n\ge q,
\]
and
\[
U_n\le q \;\Rightarrow\; p\le U_n\le q.
\]
These are exactly the semantic conditions for $\Prob_{>q}[\varphi]$, $\Prob_{<q}[\varphi]$, $\Prob_{\ge q}[\varphi]$, and $\Prob_{\le q}[\varphi]$, respectively.
\end{proof}

\subsection{Proof of Corollary~\ref{cor:joint_confidence_cert}}\label{appsec:proof_joint_confidence_cert}
\jointconfidencecertcor*

\begin{proof}
For each $i$, let $E_i=\{p_i\in[L_i,U_i]\}$. By
Theorem~\ref{thm:hoeffding_ci}, $\Pr(E_i^c)\le\delta_i$. Hence the
Bonferroni inequality (union bound) gives
\[
\begin{aligned}
\Pr\!\left(\bigcap_{i=1}^m E_i\right)
&=1-\Pr\!\left(\bigcup_{i=1}^m E_i^c\right)\\
&\ge 1-\sum_{i=1}^m\Pr(E_i^c)\\
&\ge 1-\sum_{i=1}^m\delta_i\ge 1-\delta.
\end{aligned}
\]
No independence among the events $E_i$ is required. On their intersection,
each endpoint comparison implies the corresponding threshold constraint by
Corollary~\ref{cor:threshold_cert}, so all such implications hold
simultaneously. If every objective is certified, this is precisely
$\mathcal M,s,\StrategyProfile\models\ObjVec$. Finally, substituting
$\delta_i=\delta/m$ into $\rho_{n_i}(\delta_i)$ yields the stated
uniform half-width.
\end{proof}

\subsection{Proof of Proposition~\ref{prp:spne_finite_horizon}}\label{appsec:proof_spne_finite_horizon}
\spnefinitehorizonprop*

\begin{proof}
This is the standard backward-induction argument~\cite{fudenberg1991game}, recorded for completeness. Let $r_i$ be agent $i$'s stage reward and set $V_i^t(s)$ to agent $i$'s terminal payoff whenever $(s,t)$ is terminal, including at $t=H$. In $\widehat{\mathcal M}_H$, $r_i\equiv 0$, $\gamma=1$, and the terminal payoff is $R_i(x)$. For $t=H-1,\dots,0$ and every non-terminal $(s,t)$, let $G_t(s)$ be the finite normal-form game with action sets $d^1(s),\dots,d^n(s)$ and payoffs
\[
g_i^t(s,\bm a)\ \defeq \ r_i(s,\bm a)+\gamma\sum_{s'}\mathcal P(s'\mid s,\bm a)\,V_i^{t+1}(s').
\]
By Nash's theorem~\cite{Nash1950}, $G_t(s)$ has a mixed equilibrium $\alpha^t(s)$. Set $\sigma_i(s,t)\defeq \alpha_i^t(s)$ and $V_i^t(s)\defeq \mathbb E_{\bm a\sim\alpha^t(s)}\big[g_i^t(s,\bm a)\big]$. By induction on $H-t$, $V_i^t(s)$ is agent $i$'s expected payoff under $\StrategyProfile$ from state $s$ at time $t$, irrespective of the preceding history. Hence, as $\alpha^t(s)$ is an equilibrium of $G_t(s)$, no agent gains from a one-stage deviation at any history. By the one-stage deviation principle for finite-horizon games~\cite{fudenberg1991game}, no deviation is profitable in any subgame, so $\StrategyProfile$ is an $\SPNE$, and it is Markov by construction.
\end{proof}

\subsection{Proof of Theorem~\ref{thm:nashq_convergence}}\label{appsec:proof_nashq_convergence}

\nashqconvergencethm*

\begin{proof}
\emph{Equilibrium.} Definition~\ref{def:selector_profile} applies the backward induction of Proposition~\ref{prp:spne_finite_horizon} to $\widehat{\mathcal M}_H$, choosing $\Sel_x(Q^{\Sel}(x))$ at each non-terminal state. Hence $\StrategyProfile_{\Sel}$ is a Markov $\SPNE$ with continuation payoffs $V^{\Sel}$.

\emph{Convergence of Q-values.} By (i), convex updates and equilibrium averaging keep all Q-values and targets in $[0,1]^n$. We prove convergence of $Q^k$ and $V^k$ on reachable states by backward induction on time. At every terminal state, $V^k=R=V^{\Sel}$, covering all states at $t=H$ and any earlier stopping states. Suppose $V^k(y)\to V^{\Sel}(y)$ almost surely at every reachable state $y$ at time $t+1$. Let $\mathcal G_k$ contain the past and the current choice of updated pair and step sizes, before sampling its successor $x'$. For an updated pair $(x,\bm a)$ at a non-terminal state in time layer $t$, condition (iii) and the definition of $Q^{\Sel}$ give
\[
\begin{aligned}
\left\|\mathbb E[V^k(x')\mid\mathcal G_k]-Q^{\Sel}(x,\bm a)\right\|_\infty&\le c_k,\\
c_k\defeq\max_y\left\|V^k(y)-V^{\Sel}(y)\right\|_\infty&\longrightarrow0,
\end{aligned}
\]
where the maximum is over reachable states at time $t+1$. The target errors $V_i^k(x')-Q_i^{\Sel}(x,\bm a)$ are bounded by one. Subtracting $Q_i^{\Sel}(x,\bm a)$ from the update gives the scalar recursion of Lemma~1 of~\cite{singh2000convergence} coordinatewise, with $\alpha_k=0$ when $(x,\bm a)$ is not updated. Thus (ii) and that lemma, with $\kappa=0$ and vanishing bias $c_k$, imply $Q^k(x)\to Q^{\Sel}(x)$ almost surely for every reachable non-terminal $x$ at time $t$. Condition (iv) then gives $V^k(x)=\Val_x(Q^k(x))\to V^{\Sel}(x)$, completing the induction. Finiteness makes these conclusions simultaneous on all reachable states.

\emph{Sufficient conditions for (iv).} Continuity of $\Sel_x$ at $g^\star=Q^{\Sel}(x)$ implies continuity of $\Val_x(g)=\mathbb E_{\bm a\sim\Sel_x(g)}[g(\bm a)]$ there. Alternatively, suppose all equilibria of $g^\star$ have value $v^\star$. For any $g_j\to g^\star$, compactness and the equilibrium inequalities ensure that every limit point of $\Sel_x(g_j)$ is an equilibrium of $g^\star$. Every corresponding value limit is therefore $v^\star=\Val_x(g^\star)$, proving continuity.

\emph{Continuity of payoffs and exploitability.} On Markov profiles, $u_i(s_0,\StrategyProfile)$ is a polynomial in the action probabilities at reachable states. Against fixed Markov opponents, agent $i$ faces a finite-horizon MDP, so its best-response payoff over all deviations is attained by a deterministic Markov policy~\cite{puterman2014markov}. It is therefore the maximum of finitely many such polynomials. Consequently, both $u_i(s_0,\cdot)$ and $\Exploit(s_0,\cdot)$ are continuous on Markov profiles.

\emph{Limiting profiles.} Fix an outcome on which $Q^k\to Q^{\Sel}$. By compactness, every subsequence of $(\StrategyProfile_k)$ has a further subsequence converging on reachable non-terminal states to some $\bar{\StrategyProfile}$. Write its indices as $k_\ell$. Passing to the limit in the stage-equilibrium inequalities shows that $\bar{\StrategyProfile}(x)$ is an equilibrium of $Q^{\Sel}(x)$. Moreover, along this subsequence,
\[
\begin{aligned}
\mathbb E_{\bm a\sim\bar{\StrategyProfile}(x)}[Q^{\Sel}(x,\bm a)]
&=\lim_{\ell\to\infty}\Val_x(Q^{k_\ell}(x))\\
&=V^{\Sel}(x),
\end{aligned}
\]
where the first equality follows from convergence of both the profiles and Q-values, and the second from (iv). Backward induction then gives continuation payoffs $V^{\Sel}$ and equilibrium in every reachable subgame, as in Proposition~\ref{prp:spne_finite_horizon}. Thus $u_i(s_0,\bar{\StrategyProfile})=u_i(s_0,\StrategyProfile_{\Sel})$ and $\Exploit(s_0,\bar{\StrategyProfile})=0$. Continuity yields the claimed payoff and exploitability convergence, and hence eventual $\EpsNE$ for each $\varepsilon>0$. Finally, continuity of each $\Sel_x$ at $Q^{\Sel}(x)$ gives $\StrategyProfile_k(x)\to\StrategyProfile_{\Sel}(x)$ directly.
\end{proof}

\section{Experiments}\label{appsec:experiments}
\subsection{Experimental Setup}\label{appsec:selection_protocol}
For each task, we select the learner, discount and agent with the highest mean satisfaction in the initial evaluation, using five seeds and the training budgets in Table~\ref{tab:hyperparams_protocol}. Appendix~\ref{appsec:hyperparams} summarises training discounts by environment and lists the remaining hyperparameters.

\paragraph{Learning curves.} Each row shows one agent, with the best variant per learner family on the left and all objective-trained variants on the right. Variants are chosen by final smoothed training satisfaction averaged over five seeds, also in Figure~\ref{fig:app_best_agent_learning}. Solid and dashed lines distinguish variants without and with CER; diamonds mark the strongest handcrafted baseline. Means and $95\%$ confidence bands use exponential smoothing ($\alpha=0.002$), followed by a centred window covering $3\%$ of the episode range. Bands may extend beyond the vertical axis limits.

\paragraph{Computational resources.} Each algorithm--task job uses seeds $\{0,1,2,3,4\}$, one physical CPU core (two threads) on an AMD EPYC 7552, 7.81\,GiB RAM and a 96-hour limit, without a GPU. Nash-Q variants complete only on Matrix Stag Hunt, using one Intel Core Ultra 7 155H thread per job and the same training and evaluation budgets.

\begin{table*}[t]
\centering
\small
\setlength{\tabcolsep}{3pt}
\renewcommand{\arraystretch}{1.15}
\rowcolors{2}{black!3}{white}
\begin{adjustbox}{max width=\textwidth,center}
\begin{tabular}{@{} L{0.15\textwidth} C{0.045\textwidth} C{0.08\textwidth} L{0.415\textwidth} C{0.045\textwidth} C{0.1\textwidth} C{0.05\textwidth} C{0.081\textwidth} @{}}
\toprule
\textbf{Benchmark} &
\textbf{$n$} &
\textbf{Selected agent} &
\textbf{Selected agent's objective} &
\textbf{$\widehat p_i$} &
\textbf{95\% CI} &
\textbf{$\underline m$} &
\textbf{$\ApxNashConv$} \\
\midrule
Pursuit Classic 2 & $2$ & \texttt{player\_0} & $\Prob_{\ge0.1}[\Always\neg\mathtt{goal\_reached}]$ & $0.979$ & $[0.936,0.993]$ & $-0.744$ & $0.000$ \\
Pursuit Capture 2 & $2$ & \texttt{player\_1} & $\Prob_{\ge0.5}[\Always\neg\mathtt{guard\_intruder\_collision}\land\Always\neg\mathtt{collision\_ever}]$ & $1.000$ & $[0.957,1.000]$ & $-0.300$ & $0.000$ \\
Pursuit Classic 3 & $3$ & \texttt{player\_2} & $\Prob_{\ge0.1}[\Eventually\mathtt{goal\_reached}\land\Always\neg\mathtt{collision\_ever}]$ & $0.975$ & $[0.932,0.993]$ & $-0.500$ & $0.008$ \\
Pursuit Capture 3 & $3$ & \texttt{player\_2} & $\Prob_{\ge0.8}[\Always\neg\mathtt{guard\_intruder\_collision}\land\Always\neg\mathtt{collision\_ever}]$ & $0.969$ & $[0.926,0.998]$ & $-0.049$ & $0.002$ \\
Coins Own Coin First & $2$ & \texttt{player\_1} & $\Prob_{\ge0.4}[\neg w_1\ \Until\ m_1]$ & $0.503$ & $[0.460,0.546]$ & $+0.046$ & $0.044$ \\
Matrix Sustained Stag & $2$ & \texttt{player\_1} & $\Prob_{\ge0.9}[\Eventually(\mathtt{both\_stag}\land\Next(\mathtt{both\_stag}\land\Always\neg\mathtt{unmirrored}))]$ & $1.000$ & $[0.957,1.000]$ & $+0.053$ & $0.000$ \\
Commons Safe Harvest & $4$ & \texttt{player\_0} & $\Prob_{\ge0.8}[\Always\neg\mathtt{player\_0\_zapped}\land\Eventually\mathtt{player\_0\_apple\_collected}]$ & $0.817$ & $[0.774,0.860]$ & $-0.383$ & $0.206$ \\
Territory Role Claim & $2$ & \texttt{player\_0} & $\Prob_{\ge0.25}[\neg\mathtt{p_1\_claimed}\ \Until\ (\mathtt{p_0\_claimed}\land\neg\mathtt{p_1\_claimed})]$ & $0.923$ & $[0.880,0.944]$ & $-0.156$ & $0.031$ \\
\bottomrule
\end{tabular}
\end{adjustbox}
\rowcolors{2}{}{}
\caption{Selected-agent objectives, marginal intervals and joint margins for the learners in Table~\ref{tab:main_results}. Satisfaction estimates and marginal $95\%$ Hoeffding interval endpoints are five-seed means from $1000$ independent confirmatory rollouts per seed. The reported margin averages $\underline m(\StrategyProfile)\defeq\min_i(L_i-q_i)$ over seeds, where $L_i$ is the joint Hoeffding lower bound with $\delta_i=0.05/n$ for $n$ agents. A non-negative per-seed margin certifies every agent's threshold for that profile. \ApxNashConv is averaged over seeds. Objectives omit the outer $\Next$.}
\label{tab:app_best_agent_summary}
\end{table*}

\begin{table*}[t]
\centering
\footnotesize
\setlength{\tabcolsep}{3pt}
\renewcommand{\arraystretch}{1.12}
\rowcolors{2}{black!3}{white}
\begin{adjustbox}{max width=\textwidth,center}
\begin{tabular}{@{} L{0.09\textwidth} L{0.08\textwidth} L{0.33\textwidth} C{0.04\textwidth} C{0.06\textwidth} C{0.11\textwidth} C{0.11\textwidth} C{0.06\textwidth} C{0.08\textwidth} @{}}
\toprule
\textbf{Benchmark} & \textbf{Agent} & \textbf{Objective $\PathObj_i$} & \textbf{$q_i$} & \textbf{$\widehat p_i$} & \textbf{Marginal 95\% CI} & \textbf{Joint 95\% CI} & \makecell[t]{\textbf{Certified}\\\textbf{seeds}} & \textbf{$\ApxNashConv$} \\
\midrule
Classic 2 & \texttt{player\_0} & $\Always\neg\mathtt{goal\_reached}$ & $0.1$ & $0.979$ & $[0.936,0.993]$ & $[0.932,0.994]$ & $5/5$ & $0.000$ \\
Classic 2 & \texttt{player\_1} & $\Eventually\mathtt{goal\_reached}\land\Always\neg\mathtt{collision\_ever}$ & $0.75$ & $0.020$ & $[0.007,0.063]$ & $[0.006,0.067]$ & $0/5$ & $0.000$ \\
\addlinespace
Capture 2 & \texttt{player\_0} & $\Eventually\mathtt{guard\_intruder\_collision}$ & $0.3$ & $0.000$ & $[0.000,0.043]$ & $[0.000,0.047]$ & $0/5$ & $0.000$ \\
Capture 2 & \texttt{player\_1} & $\Always\neg\mathtt{guard\_intruder\_collision}\land\Always\neg\mathtt{collision\_ever}$ & $0.5$ & $1.000$ & $[0.957,1.000]$ & $[0.953,1.000]$ & $5/5$ & $0.000$ \\
\addlinespace
Classic 3 & \texttt{player\_0} & $\Always\neg\mathtt{goal\_reached}$ & $0.5$ & $0.011$ & $[0.000,0.054]$ & $[0.000,0.060]$ & $0/5$ & $0.008$ \\
Classic 3 & \texttt{player\_1} & $\Eventually\mathtt{goal\_reached}\land\Always\neg\mathtt{collision\_ever}$ & $0.1$ & $0.975$ & $[0.932,0.993]$ & $[0.926,0.994]$ & $5/5$ & $0.008$ \\
Classic 3 & \texttt{player\_2} & $\Eventually\mathtt{goal\_reached}\land\Always\neg\mathtt{collision\_ever}$ & $0.1$ & $0.975$ & $[0.932,0.993]$ & $[0.926,0.994]$ & $5/5$ & $0.008$ \\
\addlinespace
Capture 3 & \texttt{player\_0} & $\Eventually\mathtt{guard\_intruder\_collision}$ & $0.05$ & $0.030$ & $[0.002,0.073]$ & $[0.001,0.079]$ & $0/5$ & $0.002$ \\
Capture 3 & \texttt{player\_1} & $\Always\neg\mathtt{guard\_intruder\_collision}\land\Always\neg\mathtt{collision\_ever}$ & $0.8$ & $0.969$ & $[0.926,0.998]$ & $[0.920,0.999]$ & $5/5$ & $0.002$ \\
Capture 3 & \texttt{player\_2} & $\Always\neg\mathtt{guard\_intruder\_collision}\land\Always\neg\mathtt{collision\_ever}$ & $0.8$ & $0.969$ & $[0.926,0.998]$ & $[0.920,0.999]$ & $5/5$ & $0.002$ \\
\bottomrule
\end{tabular}
\end{adjustbox}
\rowcolors{2}{}{}
\caption{Per-agent Pursuit results: \algo{PR2-IPPO} for Classic~2 and Capture~2, \algo{PR2-IQL-CER} for Classic~3, and \algo{IQL-CER} for Capture~3. Satisfaction estimates and Hoeffding interval endpoints are five-seed means from $1000$ independent confirmatory rollouts per seed, shared across agents. Marginal intervals use $\delta=0.05$; joint intervals use $\delta_i=0.05/m$ for $m$ agents. \emph{Certified seeds} counts runs whose joint lower bound $L_i$ meets the row's threshold $q_i$. \ApxNashConv is averaged over seeds and shared across each profile's agents.}
\label{tab:app_results_pursuit}
\end{table*}

\PursuitAgentLearningFigure{Classic}{figures/experiments/pursuit/classic_2/summary}{classic_2}

\PursuitAgentLearningFigure{Capture}{figures/experiments/pursuit/capture_2/summary}{capture_2}

\PursuitThreeAgentLearningFigure{Classic}{figures/experiments/pursuit/classic_3}{classic_3}

\PursuitThreeAgentLearningFigure{Capture}{figures/experiments/pursuit/capture_3}{capture_3}

\begin{table*}[t]
\centering
\footnotesize
\setlength{\tabcolsep}{3pt}
\renewcommand{\arraystretch}{1.12}
\rowcolors{2}{black!3}{white}
\begin{adjustbox}{max width=\textwidth,center}
\begin{tabular}{@{} L{0.10\textwidth} L{0.37\textwidth} C{0.05\textwidth} C{0.07\textwidth} C{0.12\textwidth} C{0.12\textwidth} C{0.06\textwidth} C{0.08\textwidth} @{}}
\toprule
\textbf{Agent} & \textbf{Objective $\PathObj_i$} & \textbf{$q_i$} & \textbf{$\widehat p_i$} & \textbf{Marginal 95\% CI} & \textbf{Joint 95\% CI} & \makecell[t]{\textbf{Certified}\\\textbf{seeds}} & \textbf{$\ApxNashConv$} \\
\midrule
\texttt{player\_0} & $\neg w_0\ \Until\ m_0$ & $0.4$ & $0.507$ & $[0.464,0.550]$ & $[0.460,0.554]$ & $5/5$ & $0.044$ \\
\texttt{player\_1} & $\neg w_1\ \Until\ m_1$ & $0.4$ & $0.503$ & $[0.460,0.546]$ & $[0.456,0.550]$ & $5/5$ & $0.044$ \\
\bottomrule
\end{tabular}
\end{adjustbox}
\rowcolors{2}{}{}
\caption{Per-agent Coins results for \algo{IPPO-CER}; $m_i,w_i$ denote matching- and mismatching-coin collections. Satisfaction estimates and Hoeffding interval endpoints are five-seed means from $1000$ independent confirmatory rollouts per seed, shared across agents. Marginal intervals use $\delta=0.05$; joint intervals use $\delta_i=0.025$ for the two agents. \emph{Certified seeds} counts runs whose joint lower bound $L_i$ meets the row's threshold $q_i$. \ApxNashConv is averaged over seeds and shared across agents.}
\label{tab:app_results_coins_full}
\end{table*}

\CoinsAgentLearningFigure{figures/experiments/coins/own_coin_first_2_full}

\begin{table*}[t]
\centering
\footnotesize
\setlength{\tabcolsep}{3pt}
\renewcommand{\arraystretch}{1.12}
\rowcolors{2}{black!3}{white}
\begin{adjustbox}{max width=\textwidth,center}
\begin{tabular}{@{} L{0.10\textwidth} L{0.37\textwidth} C{0.05\textwidth} C{0.07\textwidth} C{0.12\textwidth} C{0.12\textwidth} C{0.06\textwidth} C{0.08\textwidth} @{}}
\toprule
\textbf{Agent} & \textbf{Objective $\PathObj_i$} & \textbf{$q_i$} & \textbf{$\widehat p_i$} & \textbf{Marginal 95\% CI} & \textbf{Joint 95\% CI} & \makecell[t]{\textbf{Certified}\\\textbf{seeds}} & \textbf{$\ApxNashConv$} \\
\midrule
\texttt{player\_0} & $\Eventually(\mathtt{both\_stag}\land\Next(\mathtt{both\_stag}\land\Always\neg\mathtt{unmirrored}))$ & $0.9$ & $1.000$ & $[0.957,1.000]$ & $[0.953,1.000]$ & $5/5$ & $0.000$ \\
\texttt{player\_1} & $\Eventually(\mathtt{both\_stag}\land\Next(\mathtt{both\_stag}\land\Always\neg\mathtt{unmirrored}))$ & $0.9$ & $1.000$ & $[0.957,1.000]$ & $[0.953,1.000]$ & $5/5$ & $0.000$ \\
\bottomrule
\end{tabular}
\end{adjustbox}
\rowcolors{2}{}{}
\caption{Per-agent Matrix Stag Hunt results for \algo{Nash-Q-CER}. Satisfaction estimates and Hoeffding interval endpoints are five-seed means from $1000$ independent confirmatory rollouts per seed, shared across agents. Marginal intervals use $\delta=0.05$; joint intervals use $\delta_i=0.025$ for the two agents. \emph{Certified seeds} counts runs whose joint lower bound $L_i$ meets the row's threshold $q_i$. \ApxNashConv is averaged over seeds and shared across agents.}
\label{tab:app_results_stag_hunt_matrix}
\end{table*}

\StagHuntAgentLearningFigure{figures/experiments/stag_hunt/sustained_stag_2/summary}

\begin{table*}[t]
\centering
\footnotesize
\setlength{\tabcolsep}{3pt}
\renewcommand{\arraystretch}{1.12}
\rowcolors{2}{black!3}{white}
\begin{adjustbox}{max width=\textwidth,center}
\begin{tabular}{@{} L{0.10\textwidth} L{0.37\textwidth} C{0.05\textwidth} C{0.07\textwidth} C{0.12\textwidth} C{0.12\textwidth} C{0.06\textwidth} C{0.08\textwidth} @{}}
\toprule
\textbf{Agent} & \textbf{Objective $\PathObj_i$} & \textbf{$q_i$} & \textbf{$\widehat p_i$} & \textbf{Marginal 95\% CI} & \textbf{Joint 95\% CI} & \makecell[t]{\textbf{Certified}\\\textbf{seeds}} & \textbf{$\ApxNashConv$} \\
\midrule
\texttt{player\_0} & $\Always\neg\mathtt{player\_0\_zapped}\land\Eventually\mathtt{player\_0\_apple\_collected}$ & $0.8$ & $0.817$ & $[0.774,0.860]$ & $[0.767,0.867]$ & $3/5$ & $0.206$ \\
\texttt{player\_1} & $\Always\neg\mathtt{player\_1\_zapped}\land\Eventually\mathtt{player\_1\_apple\_collected}$ & $0.7$ & $0.608$ & $[0.565,0.651]$ & $[0.558,0.658]$ & $3/5$ & $0.206$ \\
\texttt{player\_2} & $\Always\neg\mathtt{player\_2\_zapped}\land\Eventually\mathtt{player\_2\_apple\_collected}$ & $0.3$ & $0.001$ & $[0.000,0.044]$ & $[0.000,0.051]$ & $0/5$ & $0.206$ \\
\texttt{player\_3} & $\Always\neg\mathtt{player\_3\_zapped}\land\Eventually\mathtt{player\_3\_apple\_collected}$ & $0.2$ & $0.005$ & $[0.000,0.048]$ & $[0.000,0.056]$ & $0/5$ & $0.206$ \\
\bottomrule
\end{tabular}
\end{adjustbox}
\rowcolors{2}{}{}
\caption{Per-agent Commons Harvest results for \algo{IQL-CER}. Satisfaction estimates and Hoeffding interval endpoints are five-seed means from $1000$ independent confirmatory rollouts per seed, shared across agents. Marginal intervals use $\delta=0.05$; joint intervals use $\delta_i=0.0125$ for the four agents. \emph{Certified seeds} counts runs whose joint lower bound $L_i$ meets the row's threshold $q_i$. \ApxNashConv is averaged over seeds and shared across agents.}
\label{tab:app_results_commons}
\end{table*}

\HarvestAgentLearningFigure{figures/experiments/harvest/safe_harvest_4/summary}

\begin{table*}[t]
\centering
\footnotesize
\setlength{\tabcolsep}{3pt}
\renewcommand{\arraystretch}{1.12}
\rowcolors{2}{black!3}{white}
\begin{adjustbox}{max width=\textwidth,center}
\begin{tabular}{@{} L{0.10\textwidth} L{0.37\textwidth} C{0.05\textwidth} C{0.07\textwidth} C{0.12\textwidth} C{0.12\textwidth} C{0.06\textwidth} C{0.08\textwidth} @{}}
\toprule
\textbf{Agent} & \textbf{Objective $\PathObj_i$} & \textbf{$q_i$} & \textbf{$\widehat p_i$} & \textbf{Marginal 95\% CI} & \textbf{Joint 95\% CI} & \makecell[t]{\textbf{Certified}\\\textbf{seeds}} & \textbf{$\ApxNashConv$} \\
\midrule
\texttt{player\_0} & $\neg\mathtt{p_1\_claimed}\ \Until\ (\mathtt{p_0\_claimed}\land\neg\mathtt{p_1\_claimed})$ & $0.25$ & $0.923$ & $[0.880,0.944]$ & $[0.876,0.945]$ & $5/5$ & $0.031$ \\
\texttt{player\_1} & $\neg\mathtt{p_0\_claimed}\ \Until\ (\mathtt{p_1\_claimed}\land\neg\mathtt{p_0\_claimed})$ & $0.2$ & $0.054$ & $[0.045,0.097]$ & $[0.044,0.101]$ & $1/5$ & $0.031$ \\
\bottomrule
\end{tabular}
\end{adjustbox}
\rowcolors{2}{}{}
\caption{Per-agent Territory results for \algo{IQL}; $\mathtt{p_i\_claimed}$ abbreviates $\mathtt{player\_i\_resource\_claimed}$. Satisfaction estimates and Hoeffding interval endpoints are five-seed means from $1000$ independent confirmatory rollouts per seed, shared across agents. Marginal intervals use $\delta=0.05$; joint intervals use $\delta_i=0.025$ for the two agents. \emph{Certified seeds} counts runs whose joint lower bound $L_i$ meets the row's threshold $q_i$. \ApxNashConv is averaged over seeds and shared across agents.}
\label{tab:app_results_territory}
\end{table*}

\TerritoryAgentLearningFigure{figures/experiments/territory/role_claim_2/summary}

\clearpage

\section{Extended Literature Review}\label{appsec:extended_literature_review}
\paragraph*{Probabilistic Strategic Logics and Synthesis Tools.}
\citet{hahn2024multi} translate reachability-normal $\PATL$ objectives into rewards for Minimax Q-Learning. Their model-checking procedure recursively relabels states for nested modalities, rather than finding one strategy witnessing all subformulae. It considers a single coalition in turn-based games. \citet{bozkurt2021model} similarly study model-free learning for $\LTL$ objectives in two-player turn-based zero-sum games, using deterministic Rabin automata.

The zero-sum fragments of $\mathsf{PRISM}$-$\mathsf{Games}$~\cite{kwiatkowska2016model} synthesise strategies for probabilistic and reward-based $\rPATL$ objectives from known transition models. Single- and multi-objective properties concern one coalition against its complement; where optimal strategies need not exist~\cite{chen2013stochastic}, the tool computes $\varepsilon$-approximations of Pareto-optimal strategies.

Equilibrium-based extensions synthesise subgame-perfect, social-welfare-optimal Nash equilibria in concurrent stochastic games~\cite{kwiatkowska2013automated,kwiatkowska2020multi,kwiatkowska2020prism}, also assuming known models.

\citet{varricchione2023synthesising} construct reward machines from $\ATL$ specifications for a fully cooperative grand coalition.

\paragraph*{Rational Synthesis.}
Rational synthesis~\cite{fisman2010rational,ummels2006rational} considers agents pursuing their own temporal goals. \citet{fisman2010rational} reduce $\LTL$ rational synthesis to satisfiability in Extended Strategy Logic, obtaining $\TWOEXPTIME$-completeness for dominant-strategy, Nash and subgame-perfect solution concepts.

\citet{ummels2006rational} studies pure strategies in infinite multiplayer turn-based games with $\omega$-regular objectives, establishing existence, payoff-constrained decision procedures and finite-state implementations for subgame-perfect Nash equilibria. \citet{condurache2016complexity} study complexity refinements and cooperative and non-cooperative variants.

Our setting combines per-agent probabilistic $\LTLf$ requirements with learning from unknown dynamics under a single committed profile. Policy Space Response Oracles~\cite{lanctot2017unified} instead train approximate best responses against a meta-strategy over a policy population.

\paragraph*{Rational Verification.}
Rational verification~\cite{wooldridge2016rational} asks whether a temporal property holds on some or all equilibrium behaviours. Equilibrium Logic expresses such properties through modalities $[\NE]$ and $\langle\NE\rangle$ over Nash-equilibrium computations. Model checking is $\TWOEXPTIME$-hard in general, with matching upper bounds for restricted fragments.

\paragraph*{Commitment-aware Logics.}
Strategy Logic ($\SL$) explicitly quantifies and binds strategies, subsumes $\ATL^\star$, and can express equilibrium through stability against deviations. Model checking for full $\SL$ is decidable but non-elementary.

Extensions of $\ATL$ evaluate nested formulas under previously selected strategies, including strategy contexts ($\ATLsc$)~\cite{da2010atl}, explicit strategies ($\ATLes$)~\cite{walther2007alternating}, and irrevocable semantics~\cite{aagotnes2007alternating}.

\section{Hyperparameters}\label{appsec:hyperparams}

\subsection{Training Budgets}

Tables~\ref{tab:hyperparams_protocol}--\ref{tab:hyperparams_eval} give the settings for all tasks and learners, each using seeds $\{0,1,2,3,4\}$.

\paragraph{Training discounts.} The environment-level discount summary is \(\gamma=0.99\) for all \env{Pursuit-Evasion} variants and \env{Matrix Stag Hunt}, and \(\gamma=1\) for \env{Coins}, \env{Commons Harvest}, and \env{Territory}. Evaluation uses undiscounted satisfaction probabilities.

\begin{table}[H]
\centering
\small
\setlength{\tabcolsep}{3pt}
\renewcommand{\arraystretch}{1.15}
\rowcolors{2}{black!3}{white}
\begin{adjustbox}{max width=\linewidth,center}
\begin{tabular}{@{} L{0.40\linewidth} C{0.12\linewidth} C{0.18\linewidth} C{0.22\linewidth} @{}}
\toprule
\textbf{Environment} & \textbf{Agents} & \textbf{Episode horizon} & \textbf{Initial budget} \\
\midrule
\env{Pursuit-Evasion} Classic 2 & $2$ & $100$ & $1{,}000{,}192$ \\
\env{Pursuit-Evasion} Capture 2 & $2$ & $100$ & $1{,}000{,}192$ \\
\env{Pursuit-Evasion} Classic 3 & $3$ & $100$ & $1{,}500{,}160$ \\
\env{Pursuit-Evasion} Capture 3 & $3$ & $100$ & $1{,}500{,}160$ \\
\env{Coins} Own Coin First & $2$ & $256$ & $200{,}192$ \\
\env{Matrix Stag Hunt} & $2$ & $20$ & $500{,}224$ \\
\env{Commons Harvest} Safe Harvest & $4$ & $1000$ & $200{,}192$ \\
\env{Territory} Role Claim & $2$ & $256$ & $200{,}192$ \\
\bottomrule
\end{tabular}
\end{adjustbox}
\rowcolors{2}{}{}
\caption{Episode horizons and initial training budgets in environment steps, before the optional $25\%$ extension.}
\label{tab:hyperparams_protocol}
\end{table}

\subsection{Learner Settings}

\begin{table}[H]
\centering
\small
\setlength{\tabcolsep}{5pt}
\renewcommand{\arraystretch}{1.15}
\rowcolors{2}{black!3}{white}
\begin{adjustbox}{max width=\linewidth,center}
\begin{tabular}{@{} L{0.24\linewidth} L{0.68\linewidth} @{}}
\toprule
\textbf{Setting} & \textbf{Value} \\
\midrule
Applicable algorithms & \algo{IPPO}, \algo{PR2-IPPO}, and their CER and handcrafted variants. \\
Rollout batch & $256$ transitions; environment counts and chunk lengths are in Table~\ref{tab:hyperparams_rollout_replay}. \\
Policy/value networks & Two-layer MLPs, hidden widths $[64,64]$, \texttt{tanh} activations. \algo{PR2-IPPO} uses a joint-action critic. \\
Optimizer & Adam, learning rate \(3\times10^{-4}\) (\(10^{-4}\) for Coins); annealing only for Safe Harvest; gradient-norm clipping $0.5$. \\
PPO objective & GAE parameter \(\lambda=1\) for Safe Harvest and \(0.95\) otherwise; clipping parameter \(0.2\), value-loss coefficient \(0.5\), entropy coefficient \(0.01\). \\
Update schedule & $4$ epochs and $4$ minibatches per rollout; $64$ transitions per minibatch before CER expansion. \\
Coins CER replay & \texttt{uncorrected} mode. \\
\bottomrule
\end{tabular}
\end{adjustbox}
\rowcolors{2}{}{}
\caption{Policy-gradient hyperparameters for IPPO-family algorithms.}
\label{tab:hyperparams_ippo}
\end{table}

\begin{table*}[t]
\centering
\small
\setlength{\tabcolsep}{4pt}
\renewcommand{\arraystretch}{1.15}
\rowcolors{2}{black!3}{white}
\begin{adjustbox}{max width=\linewidth,center}
\begin{tabular}{@{} L{0.22\linewidth} L{0.40\linewidth} L{0.30\linewidth} @{}}
\toprule
\textbf{Setting} & \textbf{\algo{IQL} / \algo{PR2-IQL}} & \textbf{\algo{Nash-Q}} \\
\midrule
Applicable variants & \algo{IQL}, \algo{PR2-IQL}, and their CER and handcrafted variants. & \algo{Nash-Q}, \algo{Nash-Q-CER}, \algo{hc-Nash-Q} (Matrix Stag Hunt). \\
Function class & Recurrent Q-network, hidden size $64$, \texttt{tanh} activation; joint-action critic for \algo{PR2-IQL}. & Tabular joint-action Q-values indexed by joint environment and monitor observations and elapsed time. \\
Optimizer / step size & Adam, learning rate \(3\times10^{-4}\), gradient-norm clipping $0.5$; $4$ epochs per replay batch. & Visit-count step size \(\alpha(x,\bm a)=1/N(x,\bm a)\). \\
Target network & Update interval $25$ trainer updates; target mixing \(\tau=1.0\). & -- \\
Replay and exploration & Buffer capacity $25{,}000$ chunks; learning starts after $1024$ steps; linear \(\varepsilon\)-greedy decay from $1$ to $0.05$ over the first half of training. & Linear \(\varepsilon\)-greedy decay from $1$ to $0.05$ over the first half of training. \\
\bottomrule
\end{tabular}
\end{adjustbox}
\rowcolors{2}{}{}
\caption{Value-learning hyperparameters. Nash-Q settings apply to Matrix Stag Hunt.}
\label{tab:hyperparams_iql}
\end{table*}

\paragraph*{Nash-Q convergence.}
The $1/N$ schedule in Table~\ref{tab:hyperparams_iql}, counting CER updates, satisfies Theorem~\ref{thm:nashq_convergence}(ii) under infinitely many updates. The theorem also requires its sampling and selector assumptions and concerns temporal utilities; the handcrafted baseline uses native stage rewards.

\begin{table}[H]
\centering
\small
\setlength{\tabcolsep}{3pt}
\renewcommand{\arraystretch}{1.15}
\rowcolors{2}{black!3}{white}
\begin{adjustbox}{max width=\linewidth,center}
\begin{tabular}{@{} L{0.32\linewidth} C{0.16\linewidth} C{0.16\linewidth} C{0.28\linewidth} @{}}
\toprule
\textbf{Benchmark} & \textbf{Parallel envs.} & \textbf{Chunk length} & \textbf{IQL replay batch} \\
\midrule
\env{Stochastic Pursuit-Evasion} & $4$ & $64$ & $16$ chunks \\
\env{Matrix Stag Hunt} & $8$ & $32$ & $32$ chunks \\
\env{Coins} & $1$ & $256$ & $4$ chunks \\
\env{Commons Harvest} & $1$ & $256$ & $4$ chunks \\
\env{Territory} & $1$ & $256$ & $4$ chunks \\
\bottomrule
\end{tabular}
\end{adjustbox}
\rowcolors{2}{}{}
\caption{Rollout settings for all learners and IQL replay batches ($1024$ transitions each).}
\label{tab:hyperparams_rollout_replay}
\end{table}

\subsection{Evaluation Settings}

\begin{table}[H]
\centering
\small
\setlength{\tabcolsep}{5pt}
\renewcommand{\arraystretch}{1.15}
\rowcolors{2}{black!3}{white}
\begin{adjustbox}{max width=\linewidth,center}
\begin{tabular}{@{} L{0.34\linewidth} C{0.20\linewidth} C{0.20\linewidth} C{0.20\linewidth} @{}}
\toprule
\textbf{Benchmark} & \textbf{NashConv episodes} & \textbf{BR train iters} & \textbf{BR batch steps} \\
\midrule
\env{Pursuit-Evasion} (2-player) & $64$ & $6$ & $1024$ \\
\env{Pursuit-Evasion} (3-player) & $96$ & $8$ & $1024$ \\
\env{Matrix Stag Hunt} & $50$ & $4$ & $512$ \\
\env{Coins} & $32$ & $3$ & $512$ \\
\env{Commons Harvest} & $32$ & $3$ & $512$ \\
\env{Territory} & $32$ & $3$ & $512$ \\
\bottomrule
\end{tabular}
\end{adjustbox}
\rowcolors{2}{}{}
\caption{Unilateral response-search budgets for \ApxNashConv. Scores above $0.05$ trigger up to $25\%$ additional profile training.}
\label{tab:hyperparams_eval}
\end{table}

\end{document}